\def\colorful{0}
\newif\ifhyper\IfFileExists{hyperref.sty}{\hypertrue}{\hyperfalse}
\ifhyper\usepackage{hyperref}\fi
\renewcommand{\section}{\@startsection{section}{1}{0pt}{-12pt}{5pt}{\large\bf}}
\renewcommand{\subsection}{\@startsection{subsection}{2}{0pt}{-12pt}{-5pt}{\normalsize\bf}}
\renewcommand{\subsubsection}{\@startsection{subsubsection}{3}{0pt}{-12pt}{-5pt}{\normalsize\bf}}
\def\nnewcolor{1}
\newcommand{\new}[1]{{\color{red} #1}}
\newcommand{\new}[1]{{#1}}
\newtheorem{theorem}{Theorem}[section]
\newtheorem{lemma}[theorem]{Lemma}
\newtheorem{informal theorem}[theorem]{Theorem (informal statement)}
\newtheorem{proposition}[theorem]{Proposition}
\newtheorem{corollary}[theorem]{Corollary}
\newtheorem{claim}[theorem]{Claim}
\newtheorem{fact}[theorem]{Fact}
\newtheorem{remark}[theorem]{Remark}
\theoremstyle{definition}
\newtheorem{definition}[theorem]{Definition}
\newcommand{\eqdef}{\stackrel{{\mathrm {\footnotesize def}}}{=}}
\newcommand{\R}{\mathbb{R}}
\newcommand{\Z}{\mathbb{Z}}
\newcommand{\N}{\mathbb{N}}
\newcommand{\E}{\mathbf{E}}
\newcommand{\eps}{\epsilon}
\newcommand{\dtv}{d_{\mathrm TV}}
\newcommand{\poly}{\mathrm{poly}}
\newcommand{\var}{\mathbf{Var}}
\newcommand{\D}{D}
\newcommand{\sgn}{\mathrm{sign}}
\newcommand{\sign}{\mathrm{sign}}
\newcommand{\ba}{\mathbf{a}}
\title{Learning Geometric Concepts with Nasty Noise}
\author{
Ilias Diakonikolas\thanks{Supported by NSF Award CCF-1652862 (CAREER) and a Sloan Research Fellowship.}\\
University of Southern California\\
{\tt diakonik@usc.edu}\\
\and
Daniel M. Kane\thanks{Supported by NSF Award CCF-1553288 (CAREER) and a Sloan Research Fellowship.}\\
University of California, San Diego\\
{\tt dakane@cs.ucsd.edu}\\
\and
Alistair Stewart\\ University of Southern California\\
{\tt alistais@usc.edu}
}
\begin{document}

\maketitle

\thispagestyle{empty}

\begin{abstract}
We study the efficient learnability of geometric concept classes --  
specifically, low-degree polynomial threshold functions (PTFs)
and intersections of halfspaces -- when a fraction of the 
training data is adversarially corrupted. We give the first polynomial-time 
PAC learning algorithms for these concept classes with {\em dimension-independent} error guarantees
in the presence of {\em nasty noise} under the Gaussian distribution. In the nasty noise model, 
an omniscient adversary can arbitrarily corrupt a small fraction of both the unlabeled data points and their labels.
This model generalizes well-studied noise models,
including the malicious noise model and the agnostic (adversarial label noise) model. 
Prior to our work, the only concept class for which efficient malicious learning
algorithms were known was the class of {\em origin-centered} halfspaces~\cite{KLS:09jmlr, ABL17}.

Specifically, our robust learning algorithm for low-degree PTFs 
succeeds under a number of tame distributions -- including the Gaussian distribution 
and, more generally, any log-concave distribution with (approximately) known low-degree moments. 
For LTFs under the Gaussian distribution, we give 
a polynomial-time algorithm that achieves error $O(\eps)$, where $\eps$ is the noise rate. 
At the core of our PAC learning results is an efficient algorithm 
to approximate the {\em low-degree Chow-parameters}
of any bounded function in the presence of nasty noise.
To achieve this, we employ an iterative spectral method for outlier detection and removal, 
inspired by recent work in robust unsupervised learning. 
Our aforementioned algorithm succeeds for a range of distributions satisfying
mild concentration bounds and moment assumptions.
The correctness of our robust learning algorithm for intersections of halfspaces 
makes essential use of a novel robust inverse independence lemma 
that may be of broader interest.
\end{abstract}

\thispagestyle{empty}
\setcounter{page}{0}

\newpage

\section{Introduction}

\subsection{Background and Motivation} \label{ssec:motivation}
One of the central challenges in machine learning is to make accurate inferences from datasets
in which pieces of information are corrupted by noise. In this work, we study the efficient
learnability of geometric concepts --  
specifically, low-degree polynomial threshold functions 
and intersections of linear threshold functions (halfspaces) -- when a fraction of the 
training data is adversarially corrupted.
As our main contribution, we give the first polynomial-time PAC learning algorithms for these concept classes
with {\em dimension-independent} error guarantees\footnote{By the term ``dimension-independent'' error guarantee 
it is meant that: when the fraction of corruptions is $\eps$, our algorithms 
achieve error $f(\eps)$ (for some function $f:\R_+ \to \R_+$ such that $\lim_{x \to 0} f(x) =0$).}, 
in the presence of {\em nasty noise}.

Polynomial Threshold Functions (PTFs) and intersections of Linear Threshold Functions (LTFs)
are two fundamental classes of Boolean functions that have been extensively
studied in many contexts for at least the past five decades~\cite{Dertouzos:65, MinskyPapert:68, Muroga:71}.
In the noiseless setting, low-degree PTFs are known to be efficiently PAC learnable 
under arbitrary distributions via linear programming~\cite{MT:94}. 
The current state-of-the-art for PAC learning intersections of LTFs is as follows:
Even without noise, distribution-independent PAC learning 
for intersections of $2$ LTFs is \new{one of the most challenging open problems in computational learning theory}. 
Efficient algorithms are known for PAC learning intersections of any constant number of LTFs 
under well-behaved distributions, e.g., under the standard Gaussian distribution~\cite{Vempala10, Vempala10a}.
Dealing with (adversarial) noisy data turns out to be significantly more challenging in general. 
Recent results (see, e.g.,~\cite{DanielyLS14, Daniely16}) provide strong evidence 
that learning with adversarial noise is computationally intractable under arbitrary distributions, 
even for simple concept classes.

In this paper, we focus on the efficient learnability of low-degree PTFs
and intersections of (any constant number of) LTFs in the presence of {\em nasty noise}.
In the {\em nasty noise model}~\cite{BEK:02}, an omniscient adversary can 
arbitrarily corrupt a small fraction of both the unlabeled data points and their labels.
The nasty model generalizes a number of well-studied noise models,
including the malicious noise model~\cite{Valiant:85, KearnsLi:93}\footnote{In the malicious model, an adversary
can corrupt a small fraction of both the unlabeled examples and their labels. This model is qualitatively
similar to (but somewhat weaker than) the nasty noise model. We define these models and 
explain the relation between them in Section~\ref{ssec:noise-models}.}
and the agnostic (adversarial label noise) model~\cite{Haussler:92, KSS:94}. 
While these noise models were originally defined with respect to arbitrary 
distributions, it has been recently shown~\cite{Daniely16} (modulo plausible complexity assumptions) 
that, even for the class of LTFs, no computationally efficient algorithm can achieve dimension-independent
error guarantees. Hence, research in this area has focused on noise-tolerant learning 
under a number of ``tame'' distributions. Our goal in this paper is to design polynomial-time 
robust learning algorithms that can tolerate nasty noise of {\em constant} rate, i.e., 
we want to achieve error guarantees that are {\em independent of the dimension}. 

In the agnostic (adversarial label noise) PAC model~\cite{Haussler:92, KSS:94}, 
the $L_1$-regression algorithm of Kalai {\em et al.}~\cite{KKMS:08} 
can be used to learn low-degree PTFs and intersections of LTFs 
under a number of well-behaved distributions, 
including the uniform distribution on the hypercube and the standard Gaussian distribution;
see, e.g.,~\cite{KOS:08, DHK+:10, DRST14, HKM14, Kane11, Kane14b, Kane14}.
The algorithmic technique of~\cite{KKMS:08} 
achieves the information-theoretically optimal noise tolerance,
but it leads to algorithms with runtime $n^{\poly(1/\eps)}$ -- where $n$ is the dimension and 
$\eps$ the error rate -- even for a single LTF under the Gaussian distribution.
A sequence of subsequent works~\cite{KLS:09jmlr, Daniely15, ABL17}
focused on designing $\poly(n, 1/\eps)$ time learning algorithms that can tolerate constant noise rate 
in the malicious model (and the adversarial label noise model)
with respect to well-behaved continuous distributions.
The culmination of this line of work~\cite{ABL17} was that the concept class
of {\em origin-centered} LTFs is efficiently learnable 
in the malicious model -- with error guarantee of $O(\eps)$ for noise rate $\eps$ -- 
under {\em isotropic} (i.e., zero-mean, identity covariance) log-concave distributions. 


Perhaps surprisingly, the concept class of {\em origin-centered} LTFs is the only family of Boolean functions
for which polynomial-time algorithms are known in the malicious model. 
This motivates the following broad question that was posed as an open problem in previous work
~\cite{KLS:09jmlr, ABL17}:
\begin{quote}
{\em Are there computationally efficient learning algorithms in the malicious noise model -- 
with dimension-independent error guarantees -- for more general classes of Boolean functions?}
\end{quote}


\paragraph{Overview of Our Results.}
In this paper, we study this question with a focus on more general geometric concept  classes, 
namely low-degree PTFs and intersections of a constant number of LTFs. 
We provide new algorithmic and analytic techniques that yield 
the {\em first} polynomial-time PAC learning algorithms for these concept classes in the {\em nasty noise model}
(hence, in the malicious model as well)
with {\em dimension-independent error guarantees}. 

Specifically, we give a robust learning algorithm for low-degree PTFs in the nasty model 
that succeeds under a number of well-behaved distributions -- including the Gaussian distribution 
and, more generally, any log-concave distribution with (approximately) known low-degree moments. 
Prior to our work, no non-trivial efficient learning algorithm was known (even) for degree-$2$
PTFs in the (weaker) malicious noise model. (As an implication of our techniques,
we also obtain the first efficient learning algorithm with dimension-independent error in the nasty model 
for LTFs under the uniform distribution on the hypercube.)

For LTFs under the Gaussian distribution, using additional ideas, we give 
a polynomial-time algorithm that achieves error $O(\eps)$, where $\eps$ is the noise rate, 
i.e., it matches the information-theoretically optimal error, up to a constant factor. 
This is the first malicious/nasty learning algorithm for the class of {\em arbitrary} LTFs that 
achieves error $O(\eps)$ in polynomial time.
Our result improves on prior work by Awasthi {\em et al.}~\cite{ABL17} in two respects: 
First, \cite{ABL17} achieved an $O(\eps)$ error bound for the special case of {\em origin-centered} LTFs,
and second their algorithm applies to the weaker malicious/agnostic models. 
On the other hand, the $O(\eps)$ bound of~\cite{ABL17} 
holds for the more general family of isotropic log-concave distributions.

Our third result is a polynomial-time learning algorithm with dimension-independent error guarantees
for intersections of (any constant number of) LTFs in the nasty model under the Gaussian distribution.
To the best of our knowledge, no efficient algorithm (with non-trivial error guarantees) 
was previously known even for intersections of $2$ LTFs in the (weaker) malicious noise model.

{At the core of our results is an efficient algorithm 
to approximate the {\em low-degree Chow parameters}
of any bounded function {\em in the presence of nasty noise}. 
Roughly speaking, the low-degree Chow parameters
of a function $f$ under a distribution $D$ are the ``correlations'' of $f$ (with respect to $D$) 
with all low-degree monomials (see Section~\ref{ssec:prelims} for the formal definition). 
Our algorithm succeeds for a range of reasonable distributions $D$ satisfying
mild concentration bounds and moment assumptions. 
At a high-level, our robust (low-degree Chow parameter estimation) algorithm 
employs an iterative spectral technique for outlier detection and removal, 
inspired by recent work in robust unsupervised learning~\cite{DKKLMS16}.
Our technique filters out corrupted points relying on the concentration 
of carefully chosen {\em low-degree polynomials}.

Our robust learning algorithms for PTFs and intersections of LTFs use our
Chow-parameters estimation algorithm as a basic subroutine. That is, for both concept classes,
our algorithms proceed in two steps: (1) We start by approximating the ``low-degree'' Chow parameters 
of our function, and (2) We use our approximate Chow parameters from Step (1) to find a proper hypothesis
that is close to the target concept. 

The algorithm for Step (2) differs for PTFs
and intersections of LTFs. For degree-$d$ PTFs, we use the fact that approximations 
to the degree-$d$ Chow parameters information-theoretically approximately determines our function.
Given this fact, we leverage known algorithmic techniques~\cite{TTV:09, DeDFS14} that allow us to 
efficiently find an accurate proper hypothesis with approximately these Chow parameters.
For intersections of $k$ LTFs, we rely on approximations to the degree-$2$ Chow parameters. 
In this case, these parameters allow
us to reduce our $n$-dimensional learning problem to a $(k+1)$-dimensional problem that we can efficiently 
solve by a simple net-based method. The correctness of this scheme crucially relies on a novel 
structural result about intersections of LTFs under the Gaussian distribution that may be of
broader interest.}


\subsection{Noise Models} \label{ssec:noise-models} 
We start by recalling the standard (noiseless) PAC learning model~\cite{val84}:
Let $\mathcal{C}$ be a class of Boolean-valued functions over $\R^n$.
We assume that there exists a fixed distribution $D$ over $\R^n$ 
and an unknown target concept $f \in \mathcal{C}$. The learning algorithm
is given a set $S = \{ (x^{(i)}, y_i)\}$ of $m$ labeled examples $(x^{(i)}, y_i)$, $1 \leq i \leq m$, 
where for each $i$ we have that $x^{(i)} \sim D$, $y_i = f(x^{(i)})$, and the $x^{(i)}$'s are independent.
The goal of the algorithm is to output a hypothesis $h$ such that with high probability the error
$\Pr_{x \sim D} [h(x) \neq f(x)]$ is small.

In this work, we consider the problem of learning geometric
concepts in the {\em nasty (noise)} model~\cite{BEK:02}, which we now describe.
As in the PAC model, a distribution $D$ over $\R^n$ is selected and a target
concept $f \in \mathcal{C}$ is chosen. The critical difference with the noiseless PAC model 
lies in how the labeled examples given to the learning algorithm are constructed.
In the nasty model, the examples that the algorithm gets are generated 
by a powerful adversary that works according to the following steps: 
First, the adversary chooses $m$ examples independently according to the distribution $D$. 
Then the adversary, upon seeing the specific $m$ examples that were chosen 
(and using his knowledge of the target function, the distribution $D$, and the learning algorithm), 
is allowed to remove a fraction of the examples
and replace these examples by the same number of arbitrary examples of its choice.
The points not chosen by the adversary remain unchanged and are labeled by their correct
labels according to $f$. The modified labeled sample $S'$ of size $m$ is then 
given as input to the learning algorithm. The only restriction 
is that \new{the adversary is allowed to modify at most an $\eps$-fraction of examples.} 
As in the PAC model, the goal of the algorithm is to output a hypothesis $h$ such that with high probability the error
$\Pr_{x \sim D} [h(x) \neq f(x)]$ is as small as possible. The information-theoretically optimal error achievable
in the nasty model is \new{well-known} to be $\Theta(\eps)$~\cite{BEK:02}. 

We will be interested in designing efficient learning algorithms in the nasty model, 
i.e., algorithms with sample complexity and running time $\poly(n, 1/\eps)$
that achieve error $f(\eps)$, where $f: \R_+ \to \R_+$ is a function such that $\lim_{x \to 0+} f(x)=0$. 
In other words, we want the error guarantee of our learning algorithm to be {\em independent of the dimension}.
The ``golden standard'' in this setting is to achieve $f(\eps) = O(\eps)$, i.e., to match the information-theoretic
limit, up to a constant factor.

As is well-known, the nasty noise model generalizes both the malicious noise model~\cite{Valiant:85, KearnsLi:93}
and the agnostic (adversarial label) model~\cite{Haussler:92, KSS:94}. In the malicious model, each labeled
example is generated independently as follows: With probability $1-\eps$, a random pair $(x, y)$ is generated
where $x \sim D$ and $y = f(x)$; and with probability $\eps$ the adversary can output an arbitrary point 
$(x, y) \in \R^n \times \{-1, 1\}$. Each of the adversary's examples can depend on the state of the learning
algorithm and the previous draws of the adversary. Hence, in the malicious model, the adversary can 
add corrupted labeled samples but cannot remove good labeled examples. In the adversarial label noise model, 
the adversary can corrupt an $\eps$-fraction of the labels of $f$ under $D$, but cannot change the distribution 
$D$ of the unlabeled points.

\subsection{Previous Work} \label{ssec:previous-work}
We now summarize the prior work that is most relevant to the results of this paper.

As mentioned in the preceding discussion, the malicious noise model 
with respect to arbitrary distributions is known to be very challenging computationally. 
Even for the class of $n$-dimensional LTFs,  
the only known efficient algorithm~\cite{KearnsLi:93} achieves an error of $\Omega(\eps n)$, 
where $\eps$ is the noise rate. Improving on this bound has remained a challenge for a long time, 
and it was recently shown~\cite{Daniely16} that this holds for a reason: under plausible
complexity assumptions, no efficient algorithm can achieve error 
at most $1/2 - 1/n^c$, for some constant $c>0$, even if 
$\eps$ is an arbitrarily small constant.
 
Due to the computational difficulty of malicious learning under arbitrary distributions, 
research on this front has focused on well-behaved distributions. 
The prior results most relevant to this paper 
are the works of Klivans {\em et al.}~\cite{KLS:09jmlr} and Awasthi {\em et al.}~\cite{ABL17}.
Klivans {\em et al.}~\cite{KLS:09jmlr} studied the problem of learning origin-centered LTFs 
in the malicious and adversarial label noise models, when the distribution on the unlabeled samples
is uniform over the unit sphere or, more generally, an {\em isotropic} log-concave distribution.
\cite{KLS:09jmlr} gave the first polynomial-time algorithms for these problems with error guarantee
{\em poly-logarithmic} in the dimension. For the uniform distribution, their algorithm achieves 
error $O(\sqrt{\eps} \log (n/\eps))$, where $\eps$ is the malicious noise rate. 
Under isotropic log-concave distributions, they achieve error $O(\eps^{1/3} \log^2 (n/\eps))$. 
These bounds were subsequently significantly 
improved by~\cite{ABL17} who gave a $\poly(n, 1/\eps)$ time algorithm 
that learns to accuracy $O(\eps)$, 
where $\eps$ is the malicious/adversarial label noise rate, \new{for isotropic log-concave distributions.}
Origin-centered LTFs are the only concept class for which efficient malicious learning
algorithms were previously known.

{At the technical level, the algorithm of \cite{KLS:09jmlr} uses a simple outlier
removal method to approximate the degree-$1$ Chow parameters, 
and then finds an LTF with approximately these Chow parameters.
(That is, the high-level approach of our work 
for learning degree-$d$ PTFs is a broad generalization of the \cite{KLS:09jmlr} approach.)
It is worth noting that the outlier removal procedure of~\cite{KLS:09jmlr} is a weaker version of the 
filtering technique from~\cite{DKKLMS16}. On the other hand, the algorithm of~\cite{ABL17}
uses a soft outlier removal procedure together with localization. 
Instead of using degree-$1$ Chow parameters, \cite{ABL17} uses hinge-loss minimization, 
which can be solved via a convex program. 
}

The problem of learning intersections of $2$ LTFs under arbitrary distributions (without noise) 
is one of the most notorious open problems in computational learning theory: No efficient algorithm
is known despite decades of effort and it is a plausible conjecture that the problem may be intractable.
In a sequence of works, Vempala~\cite{Vempala10, Vempala10a} gave $\poly_k(n, 1/\eps)$ 
time algorithms to PAC learn intersections of $k$ LTFs under the Gaussian distribution on $\R^n$.
{As we will explain in Section~\ref{ssec:techniques}, 
our algorithm for learning intersections of LTFs in the nasty model 
has some similarities with~\cite{Vempala10a},
and can be roughly viewed as a robust version of this algorithm.}
It is also known~\cite{Baum:91, KlivansLT09} that intersections of $k=2$ origin-centered LTFs 
are efficiently PAC learnable under isotropic log-concave distributions. We note that
these algorithms work in the {\em noiseless} PAC learning model. In the agnostic model,
the $L_1$-regression algorithm~\cite{KKMS:08} can learn an intersection of $k = O(1)$ LTFs
under tame distributions, though its running time is $n^{\poly(1/\eps)}$, even for the case of $k=1$.
Prior to our work, we are not aware of any non-trivial algorithms for this concept class
in the adversarial label/malicious noise model that run in time $\poly(n, 1/\eps)$, even for $k=2$.

Finally, we remark that our work is related to a sequence of recent results 
on robust estimation in the unsupervised setting~\cite{DKKLMS16, DiakonikolasKKL16-icml, DKKLMS17}. 
Specifically, our general algorithm to approximate the low-degree Chow parameters with nasty noise
is inspired by the outlier removal technique of~\cite{DKKLMS16}. 
We  emphasize however that the setting considered here 
is vastly more general than that of~\cite{DKKLMS16}. 
As a result, a number of new conceptual and technical ideas are required, 
that we introduce in this paper.

\subsection{Preliminaries}  \label{ssec:prelims}

We record the basic notation and definitions used throughout the paper. 
For $n \in \Z_+$, we denote by $[n]$ the set $\{ 1, 2, \ldots, n\}$. 
For $d \in \Z_+$, a degree-$d$ polynomial threshold function (PTF) over 
$\R^n$ is a Boolean-valued function $f: \R^n \to \{ \pm 1\}$
of the form $f(x) = \sign(p(x))$, where $p : \R^n \to \R$ is a degree-$d$ polynomial with real
coefficients. For $d=1$, we obtain the concept class of Linear Threshold Functions (LTFs)
or halfspaces. An intersection of $k$ halfspaces is any function $f: \R^n \to \{ \pm 1\}$ 
such that \new{there exist $k$ LTFs $f_i$, $i \in [k]$, with $f(x) =1$ iff $f_i(x) =1$, for all $i$.} 
For a degree-$d$ polynomial $p : \R^n \to \R$, we denote by $\|p\|_2$
its $L_2$-norm, i.e., $\|p\|_2 = \E_{x \sim D}[p(x)^2]^{1/2}$, where the intended
distribution $D$ over $x \in \R^n$ will be clear from the context. We say that $p$
is {\em normalized} if $\|p\|_2 = 1$.

We now define the {\em degree-$d$ Chow parameters} of a function with respect
to a distribution $D$. To do so, we require some notation. 
Let $m(x)$ be the function that maps a vector $x \in \R^n$ to all the monomials of $x$ of degree at most $d$. 
Concretely,  let $\ba^1,\dots, \ba^\ell$ be an enumeration of all $\ba \in \N^n$ with $\|\ba\|_1 \leq d$. 
We set $m_i(x) \eqdef \prod_{j=1}^n x_j^{\ba^i_j}$, for all $1 \leq i \leq \ell$. 
Now $m$ is a function from $\R^n$ to $\R^{\ell}$ with $\ell \leq \new{(n+1)}^d$. 
Let $f: \R^n \to [-1, 1]$ be a bounded function over $\R^n$ and let $D$ be a distribution over $\R^n$.
The  {\em degree-$d$ Chow parameters} of $f$ with respect to $D$ are the $\ell$ numbers
$\E_{x \sim D}[f(x) m_i(x)]$, for $1 \leq i \leq \ell$.

\new{We will need an appropriate notion of approximation 
for the Chow parameters of a function $f:\R^n \to [-1, 1]$.
We say that a vector $v \in \R^{\ell}$ approximates the degree-$d$ Chow parameters of 
$f$ {\em within Chow distance $\delta$} if the following holds: 
For all normalized degree-$d$ polynomials $p: \R^n \to \R$
with $p(x) = \sum_{i=1}^{\ell} a_i m_i(x)$,
we have that $\left| L(p) - \E_{X \sim D}[p(X)f(X)] \right| \leq \delta$, 
where $L(p) = \sum_{i=1}^{\ell} a_i v_i $ 
is the corresponding linear combination of our approximations.
Let $p_1, \ldots, p_{\ell}$ be an orthonormal basis for the set of degree-$d$
polynomials under $D$.
We note that the previous definition is 
equivalent to the $\ell_2$-distance between the vectors
$\left( L(p_i) \right)_{i=1}^{\ell}$ and  $\left( \E_{X \sim D}[p_i(X)f(X)] \right)_{i=1}^{\ell}$
being at most $\delta$.} 

{We say that a set of labeled samples $S$ is {\em $\eps$-corrupted} if it is generated
in the nasty model at noise rate $\eps$, i.e., the adversary is allowed to corrupt
an $\eps$-fraction of samples.}

\subsection{Our Results} \label{ssec:results}

We start by stating our core efficient procedure
that approximates the low-degree Chow parameters
of any bounded function under tame distributions in the presence of nasty noise:

\begin{theorem} [Estimation of Low-Degree Chow Parameters with Nasty Noise] \label{thm:robust-chow-informal}
Let $f: \R^n \to [-1, 1]$. 
There is an algorithm which, given $d \in \Z_+$, $\eps > 0$, and a set $S$ of $\poly(n^d, 1/\eps)$
$\eps$-corrupted labeled samples from a distribution $D$ over $\R^n$, 
where $D$ is either (a) the standard Gaussian distribution $N(0, I)$ 
or the uniform distribution $U_n$ over $\{\pm 1\}^n$, 
or (b) any log-concave distribution over $\R^n$ with known moments of degree up to $2d$, 
runs in $\poly(n^d, 1/\eps)$ time and with high probability, 
outputs approximations of  $\E_{X \sim D}[f(X) m_i(X)]$ for all degree at most $d$ monomials $m_i(x)$, 
such that for any normalized degree-$d$ polynomial $p: \R^n \to \R$, 
the approximation of $\E_{X \sim D}[f(X) p(X)]$ given by the corresponding linear combination 
of these expectations has error at most $\eps \cdot O_d(\log(1/\eps))^{d/2}$ in case (a) and 
at most $\eps \cdot O_d(\log(1/\eps))^{d}$ in case (b). 
\end{theorem}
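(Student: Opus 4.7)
The plan is to reduce Chow-parameter estimation to a spectral stability property of the empirical degree-$d$ second-moment operator of $S$, and to enforce this property by an iterative spectral filter. Fix an $L^2(D)$-orthonormal basis $p_1,\dots,p_\ell$ for degree-$\leq d$ polynomials (constructible by Gram--Schmidt from the known moments of $D$), let $m(x)=(p_1(x),\dots,p_\ell(x))^T\in\R^\ell$, so $\E_D[m(X)m(X)^T]=I_\ell$ and the true Chow vector is $v=\E_D[f(X)m(X)]$. For a working subset $S'$ of the input sample, set $\hat v=\tfrac{1}{|S'|}\sum_{(x,y)\in S'}y\,m(x)$ and $\hat M=\tfrac{1}{|S'|}\sum_{(x,y)\in S'}m(x)m(x)^T$. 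By the definition recalled before the statement, the Chow distance equals $\sup_{\|w\|_2=1}|w\cdot(\hat v-v)|$, so the theorem asks us to bound this supremum by $\eps\cdot O_d(\log(1/\eps))^{d/2}$ in case (a) and $\eps\cdot O_d(\log(1/\eps))^{d}$ in case (b).

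The deterministic lemma I would prove is the following: if $|S_b'|/|S'|\leq 2\eps$ (where $S_b'\subseteq S'$ is the adversarial subset) and $\|\hat M-I_\ell\|_{\mathrm{op}}\leq C$, then $\|\hat v-v\|_2\leq O(\sqrt{\eps\,C})+O(\eps)$. This follows by splitting the estimator into clean and adversarial contributions: the clean contribution is within $O(\eps)$ of $v$ by matrix/vector concentration on the monomial features $m(x)$ at $|S|=\poly(n^d,1/\eps)$ samples, while the adversarial contribution in direction $w$ is controlled by Cauchy--Schwarz,
\[
\Bigl|\tfrac{1}{|S'|}\sum_{x\in S_b'}y_x p_w(x)\Bigr|\leq \sqrt{|S_b'|/|S'|}\cdot\sqrt{\tfrac{1}{|S'|}\sum_{x\in S_b'}p_w(x)^2},
\]
where $p_w=w\cdot m$; the second factor is at most $\sqrt{w^T\hat Mw-\text{clean contribution}}\leq\sqrt{1+C-(1-O(\eps))}=O(\sqrt{C+\eps})$. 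Calibrating $C=O_d(\eps\log(1/\eps)^d)$ (resp.\ $C=O_d(\eps\log(1/\eps)^{2d})$) then yields exactly the error bound in case (a) (resp.\ case (b)).

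To produce such an $S'$ I would iterate a spectral filter in the style of~\cite{DKKLMS16}, adapted to degree-$d$ features. At each round, test whether $\|\hat M-I_\ell\|_{\mathrm{op}}>C$; if so, compute the top eigenvector $w^*$, form the unit polynomial $p^*=w^*\cdot m$, and delete each $(x,y)\in S'$ with probability proportional to $\max(0,p^*(x)^2-\tau)$, where $\tau=\Theta_d(\log(1/\eps))^d$ in case (a) and $\tau=\Theta_d(\log(1/\eps))^{2d}$ in case (b). The choice of $\tau$ is pinned down by polynomial tail bounds under $D$: Gaussian/hypercube hypercontractivity $\|p\|_q\leq(q-1)^{d/2}\|p\|_2$ yields $\pr_D[|p^*(X)|>t]\leq\exp(-\Omega(t^{2/d}))$, while the log-concave moment bound $\|p\|_q\leq O(q)^d\|p\|_2$ gives $\exp(-\Omega(t^{1/d}))$. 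With $\tau$ as above, tail integration shows the clean-sample removal mass is at most $\eps\cdot o(C)$, so the excess $w^{*T}(\hat M-I_\ell)w^*>C$ must come overwhelmingly from adversarial samples. A standard potential-function argument on $|S_b'|-\alpha|S_g'|$ (for a suitable constant $\alpha$) then shows the filter terminates after removing at most an $O(\eps)$-fraction of the original sample, keeping the adversarial fraction below $2\eps$ and arriving at an $S'$ satisfying the hypothesis of the deterministic lemma.

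The main obstacle I expect is the joint calibration of the spectral threshold $C$ and the polynomial cutoff $\tau$: one needs $C$ small enough that $\sqrt{\eps\,C}$ meets the stated error, yet large enough that, while it is violated, the tail-integral bound $\E_D[(p^{*2}-\tau)_+]$ is dominated by $C$ so that each filter step provably removes strictly more bad than good samples. This is exactly the point where the separation between cases (a) and (b) appears: hypercontractivity gives the sharper sub-Gaussian-in-$p^{2/d}$ tail, forcing $\tau\asymp\log(1/\eps)^d$ and $C\asymp\eps\log(1/\eps)^d$ (hence error $\eps\log(1/\eps)^{d/2}$), whereas the weaker log-concave bound only gives sub-exponential-in-$p^{1/d}$ tails, doubling the $\log(1/\eps)$ exponent throughout. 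A secondary technical issue is verifying that on the clean $\poly(n^d,1/\eps)$-sample the empirical second-moment operator on the $\ell\leq(n+1)^d$-dimensional monomial feature space is within operator-norm $O(\eps)$ of $I_\ell$; this reduces, via hypercontractivity applied to $w\cdot m(X)$, to a standard matrix Bernstein estimate on bounded-after-truncation vectors.
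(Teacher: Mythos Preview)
Your proposal is essentially the same approach the paper takes: orthonormalize the degree-$\le d$ monomials, iteratively enforce a spectral bound $\|\hat M-I\|_{\mathrm{op}}\le C$ on the empirical second-moment operator via a filter that scores points by the top-eigenvector polynomial $p^\ast$, then read off the Chow parameters and bound the error by Cauchy--Schwarz as $O(\sqrt{\eps\,C})$; the calibration of $C$ via the polynomial tail bounds (hypercontractive $\exp(-\Omega(t^{2/d}))$ versus log-concave $\exp(-\Omega(t^{1/d}))$) is exactly what produces the $\log(1/\eps)^{d/2}$ versus $\log(1/\eps)^{d}$ dichotomy in the statement.

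Two small points where the paper's execution differs from yours and where you should tighten the write-up. First, the paper uses a \emph{hard} threshold filter (find $T$ with $\Pr_{S'}[|p^\ast|\ge T]\ge 4Q_d(T)+\text{slack}$ and remove all such points) rather than your soft/randomized removal; both work, but the hard version pairs naturally with the ``good set'' conditions phrased as uniform CDF deviations. Second, and more substantively, your sentence ``the clean contribution is within $O(\eps)$ of $v$'' and your $1-O(\eps)$ lower bound on the clean part of $w^T\hat M w$ are not literally true after filtering: removing an $O(\eps)$ mass of \emph{good} points can shift $\E[p^2]$ or $\E[fp]$ by the tail-integral quantity $\delta:=\int_0^\infty t\min\{\eps,Q_d(t)\}\,dt$, not by $O(\eps)$. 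The paper isolates this carefully (its Lemmas bounding the $S$-, $L$-, and $E$-contributions), and this is also precisely why an explicit \emph{pre-pruning} step---removing all $x$ with $m(x)^T\Sigma^{-1}m(x)$ too large---is needed before any concentration argument, since you have no control over degree-$4d$ moments. You allude to this with ``bounded-after-truncation,'' but in a full proof it is the one genuinely new wrinkle relative to the degree-$1$ filters of~\cite{DKKLMS16}. None of this changes your final bound, because $\delta\asymp C$ under your calibration, so $O(\sqrt{\eps\,\delta})$ is absorbed into $O(\sqrt{\eps\,C})$; but the intermediate claims should be stated with $\delta$ rather than $\eps$.
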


We note that Theorem~\ref{thm:robust-chow-informal} applies (with appropriate parameters)
to a wide range of distributions over $\R^n$, 
as its proof requires only mild tail bounds and moment assumptions. 
See Definition~\ref{def:reasonable} and Proposition~\ref{prop:generic-chow} 
for detailed statements. 

Our first PAC learning result is an efficient algorithm for low-degree PTFs in the 
nasty noise model:

\begin{theorem}[Learning Low-Degree PTFs with Nasty Noise] \label{thm:ptfs-informal}
There is a polynomial-time algorithm for learning degree-$d$ PTFs in the presence of nasty noise 
with respect to $N(0, I)$ or any log-concave distribution in $\R^n$ with 
known moments of degree at most $2d$. Specifically, if $\eps$ is the noise rate, the algorithm runs
in $\poly(n^d, 1/\eps)$ time and outputs a hypothesis degree-$d$ PTF $h(x)$ that with high probability satisfies
$\Pr_{X \sim D} [h(X) \neq f(X)] \leq \eps^{\Omega(1/d)}$, where $f$ is the unknown target PTF.
\end{theorem}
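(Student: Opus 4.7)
The plan is to use Theorem~\ref{thm:robust-chow-informal} as the key subroutine and reduce learning degree-$d$ PTFs to low-degree Chow parameter estimation, along the two-step scheme sketched in the introduction: first recover approximate degree-$d$ Chow parameters of the target $f = \sign(p)$ from the $\eps$-corrupted sample, then convert those estimates into a degree-$d$ PTF hypothesis $h$ that agrees with $f$ on all but an $\eps^{\Omega(1/d)}$-fraction of $D$.

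First I would invoke Theorem~\ref{thm:robust-chow-informal} at degree $d$ on the input sample, obtaining in $\poly(n^d,1/\eps)$ time a vector $\hat v \in \R^\ell$ approximating the degree-$d$ Chow parameters of $f$ to Chow distance at most $\delta := \eps \cdot O_d(\log(1/\eps))^{d/2}$ under $N(0,I)$ (or $\eps \cdot O_d(\log(1/\eps))^{d}$ in the log-concave case). Second, I would apply a degree-$d$ analogue of the Chow-reconstruction algorithms of \cite{TTV:09, DeDFS14}: given $\hat v$ and access to the (known) low-degree moments of $D$, this procedure outputs a degree-$d$ PTF $h = \sign(q)$ whose own degree-$d$ Chow parameters under $D$ lie within Chow distance $O(\delta)$ of $\hat v$, hence (by triangle inequality) within $O(\delta)$ of the true Chow parameters of $f$. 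The generalization from LTFs to degree-$d$ PTFs is essentially mechanical: the squared $\ell_2$ distance from the target Chow vector serves as a potential, each update direction is a single degree-$d$ polynomial, and each iteration decreases the potential by a quantifiable amount, so the procedure terminates in $\poly(n^d,1/\delta)$ iterations.

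The last ingredient is a Chow-reconstruction inequality for degree-$d$ PTFs: if two degree-$d$ PTFs $f,h$ have Chow parameters within Chow distance $\delta$ under $D$, then $\Pr_{X \sim D}[f(X) \neq h(X)] \le O(\delta^{1/(d+1)}) = \delta^{\Omega(1/d)}$. The proof is short: testing the Chow closeness against the normalized polynomial $q$ defining $h$ gives, via the identity $q \cdot (f - h) = -2|q| \cdot \mathbf{1}\{f \neq h\}$, the bound $\E_{X \sim D}[|q(X)| \cdot \mathbf{1}\{f(X) \neq h(X)\}] = O(\delta)$. Carbery--Wright anticoncentration $\Pr_D[|q(X)| \le t] \le O(d) \cdot t^{1/d}$ then lower-bounds the same expectation by $\Omega(\alpha^{d+1}/O(d)^d)$, where $\alpha = \Pr[f \neq h]$, after optimizing the threshold $t$; solving yields $\alpha \le O(d) \cdot \delta^{1/(d+1)}$. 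Substituting $\delta = \eps \cdot O_d(\log(1/\eps))^{d/2}$ delivers the stated $\eps^{\Omega(1/d)}$ error guarantee.

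The main obstacle is the degree-$d$ Chow-reconstruction step, which must ensure that Chow closeness translates to PTF agreement at only a benign $(\cdot)^{\Omega(1/d)}$ loss. The analytic side is tight in the sense that the $\Omega(1/d)$ exponent is exactly what Carbery--Wright supplies, and cannot be improved without a different anticoncentration tool; the algorithmic side (Step~2) requires verifying that the LTF-specific reconstruction procedures of \cite{TTV:09, DeDFS14} extend to the degree-$d$ setting, which they do because the relevant moment machinery is available under our distributional assumptions. By contrast, Step~1 is a black-box application of Theorem~\ref{thm:robust-chow-informal}, so the substantive noise-tolerance work has already been carried out by the robust Chow estimator.
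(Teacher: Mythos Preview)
Your approach matches the paper's: robust Chow estimation (Theorem~\ref{thm:robust-chow-informal}), then the TTV/DeDFS14 reconstruction, then a Carbery--Wright argument converting Chow distance to $L_1$ error. Two places where you gloss over details the paper handles explicitly:

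The reconstruction procedure is iterative and at each step must estimate the Chow parameters $\E_D[h'(X)m_i(X)]$ of the current hypothesis $h'$. These are not polynomial moments of $D$ (since $h' = P_1(q)$ is nonlinear in $q$), so knowing the low-degree moments of $D$ does not suffice; you need samples from $D$. In the nasty model only $\eps$-corrupted unlabeled points are available, and for a general log-concave $D$ you cannot simulate fresh ones. The paper's fix is to label the corrupted $x$-values with the \emph{known} function $h'$, producing an $\eps$-corrupted labeled sample from $(X, h'(X))$, and then to reapply the robust Chow estimator of Theorem~\ref{thm:robust-chow-informal} inside every iteration. This is why the total sample complexity picks up an extra $1/\eps^2$ factor.

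Second, the TTV output is a polynomial \emph{bounded} function $g = P_1(q)$, not a PTF $\sign(q)$, and it is $g$ (not $\sign(q)$) whose Chow parameters are guaranteed close to $f$'s. Accordingly the paper states its Chow-to-$L_1$ lemma for a PTF $f$ versus a bounded $g$, testing against the polynomial $p$ that defines $f$ rather than $q$, and then uses the easy observation $\Pr[f \ne \sign(q)] \le \E|f - P_1(q)|$ to pass to a PTF hypothesis. Your version of the inequality, which tests against $q$, would require $\sign(q)$'s Chow parameters to be close to $f$'s, and that is not what the reconstruction step actually delivers.
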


This is the first polynomial-time algorithm for learning degree-$d$ PTFs, for any $d>1$, 
in the malicious/nasty noise model with dimension-independent error guarantees. 
The algorithm of Theorem~\ref{thm:ptfs-informal} starts by approximating 
the degree-$d$ Chow parameters of our PTF $f$ using Theorem~\ref{thm:robust-chow-informal},
and then employs known techniques~\cite{TTV:09, DeDFS14} to 
find a PTF $h$ with approximately these Chow parameters. The fact that 
$\Pr_{X \sim D} [h(X) \neq f(X)]$ will be small follows from the simple fact
that, for the considered distributions, approximation in Chow distance implies approximation 
in $L_1$-distance. (This holds for distributions $D$ 
such that $p(D)$ has non-trivial concentration and anticoncentration 
properties for all degree-$d$ polynomials $p$.)

Note that the special case of Theorem~\ref{thm:ptfs-informal} for $d=1$ (LTFs)
is a generalization of~\cite{ABL17}, as our result applies to all LTFs (not necessarily origin-centered).
We only require knowledge of the first $2$ moments 
of the underlying log-concave distribution in this case, which 
is equivalent to assuming isotropic position
as is done in \cite{ABL17}.
For $d=1$ under isotropic log-concave distributions, 
the final accuracy of our algorithm will be $O(\sqrt{\eps})$, 
while \cite{ABL17} obtains an $O(\eps)$ error bound 
for {\em origin-centered} LTFs. 
Finally, we note that for $d=1$, Theorem~\ref{thm:ptfs-informal} 
also holds under {\em the uniform distribution on the hypercube},
with a quantitatively worse -- but still dimension-independent -- 
error of $2^{-\Omega(\sqrt[3]{\log(1/\eps)})}.$
This follows by using the structural result of~\cite{DeDFS14} relating 
closeness in Chow distance and $L_1$-distance in the Boolean domain.

We note that, for the case of LTFs under the Gaussian distribution, the algorithm of  
Theorem~\ref{thm:ptfs-informal} has final $L_1$-error of $O(\eps \sqrt{\log(1/\eps)})$ (see Corollary~\ref{weakLTFCor}).
For this setting, we can in fact obtain an efficient  algorithm with near-optimal error guarantee:

\begin{theorem} [Near-Optimally Learning LTFs with Nasty Noise] \label{thm:ltf-optimal-informal} 
There is a polynomial-time algorithm with near-optimal error tolerance 
for learning LTFs in the presence of nasty noise 
with respect to $N(0, I)$. Specifically, if $\eps$ is the noise rate, the algorithm runs
in $\poly(n, 1/\eps)$ time and outputs a hypothesis LTF $h(x)$ that with high probability satisfies
$\Pr_{X \sim D} [h(X) \neq f(X)] \leq O(\eps)$, where $f$ is the unknown target LTF.
\end{theorem}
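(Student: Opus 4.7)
The plan is to bootstrap the weak learner of Theorem~\ref{thm:ptfs-informal} (applied with $d=1$) into a near-optimal learner via iterative localization, in the spirit of Awasthi--Balcan--Long~\cite{ABL17}, but adapted both to non--origin-centered LTFs and to the fully nasty noise model. First, a single invocation of Theorem~\ref{thm:ptfs-informal} yields an initial LTF $h_0(x) = \sign(w_0\cdot x - \theta_0)$ with $\Pr_{X \sim \calN(0,I)}[h_0(X)\ne f(X)] \le \delta_0 = O(\eps\sqrt{\log(1/\eps)})$ (this is precisely the content of the corollary hinted at just before the theorem statement).

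At iteration $k$ of the refinement, I localize to the band $B_k = \{x : |w_k\cdot x - \theta_k| \le \tau_k\}$ with $\tau_k = \Theta(\delta_k)$, chosen large enough that (by a standard geometric argument for two LTFs at $L_1$-distance $\delta_k$ under a Gaussian) the disagreement $\{h_k \ne f\}$ is contained in $B_k$ up to negligible probability, and small enough that by Gaussian anti-concentration $p_k \eqdef \Pr[X \in B_k] = \Theta(\delta_k)$. Conditioning on $B_k$ and rotating so that $w_k$ aligns with $e_1$ produces a distribution $D_k$ which is a product of a truncated Gaussian in the first coordinate with a standard $(n-1)$-dimensional Gaussian in the remaining coordinates; its low-degree moments are explicitly computable, and it satisfies the same sub-Gaussian-type tail bounds as the standard Gaussian up to constant factors. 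The induced nasty rate inside $B_k$ is $\eps_k = \eps/p_k$, so running the weak learner on $D_k$ (after drawing enough samples that $\poly(n/\eps_k)$ of them land in $B_k$, which remains polynomial as long as $p_k \gtrsim \eps$) yields a new LTF $h_{k+1}$ whose conditional error is $O(\eps_k\sqrt{\log(1/\eps_k)})$ and whose global error therefore satisfies
\[
\delta_{k+1} \;\le\; p_k \cdot O\bigl(\eps_k\sqrt{\log(1/\eps_k)}\bigr) \;=\; O\bigl(\eps\sqrt{\log(\delta_k/\eps)}\bigr).
\]
Iterating this contraction starting from $\delta_0 = O(\eps\sqrt{\log(1/\eps)})$ drives $\delta_k$ down to $O(\eps)$ in $K = O(\log^*(1/\eps))$ steps, each of polynomial cost, giving the claimed bound.

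The hard part will be verifying that Theorem~\ref{thm:robust-chow-informal} on the conditional distribution $D_k$ retains the $\sqrt{\log(1/\eps_k)}$ factor from case (a) rather than degrading to the weaker $\log(1/\eps_k)$ factor from case (b); the latter would replace the recursion above with $\delta_{k+1} = O(\eps\log(\delta_k/\eps))$, which fails to contract to $O(\eps)$. Fortunately $D_k$ differs from a standard Gaussian only in one truncated coordinate, so the polynomial-concentration estimates that drive the filtering algorithm underlying Theorem~\ref{thm:robust-chow-informal} transfer to $D_k$ with only constant-factor loss, and the proof of case (a) goes through essentially verbatim after absorbing the rotation and truncation into a constant. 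A secondary bookkeeping issue is that $h_{k+1}$ must replace $h_k$ globally (not only on $B_k$): any disagreement between $h_{k+1}$ and $h_k$ outside $B_k$ is controlled by the pre-existing error of $h_k$ there, so it only changes constants in the recursion and is absorbed into the final $O(\eps)$ bound.
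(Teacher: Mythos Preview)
Your high-level plan — start from the $O(\eps\sqrt{\log(1/\eps)})$ weak learner and refine via iterated localization with the recursion $\delta_{k+1}=O(\eps\sqrt{\log(\delta_k/\eps)})$ — is exactly the paper's strategy. But the execution differs from the paper in two ways, and the second hides a real gap.

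\textbf{Localization mechanism.} The paper does \emph{not} condition on a hard band. It performs a $(v,\theta,\sigma)$-rejection sampling step that turns the standard Gaussian into another Gaussian $N(-\theta v, A_{v,\sigma})$, where $A_{v,\sigma}$ has eigenvalue $\sigma^2$ in the $v$-direction and $1$ orthogonally. After an affine change of variables the samples are again from $N(0,I)$, so Theorem~\ref{thm:robust-chow-informal} case (a) applies verbatim with no need to re-verify concentration for a truncated law. Your truncated-Gaussian route is not unreasonable, but the ``essentially verbatim'' claim is doing real work you haven't supplied, and the paper simply sidesteps it.

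\textbf{The large-threshold regime.} Your calibration $\tau_k=\Theta(\delta_k)$ and $p_k=\Theta(\delta_k)$ tacitly assumes $|\theta|=O(1)$: for a band of width $\tau$ one has $\Pr[|w\cdot X-\theta|\le\tau]\approx 2\tau\,G(\theta)$, so $p_k\approx\delta_k G(\theta)$, not $\delta_k$. For general LTFs one must deal with thresholds up to $|\theta|\sim\sqrt{2\log(1/\eps)}$ (beyond that the constant function is already $O(\eps)$-accurate), where $G(\theta)$ can be as small as $\Theta(\eps)$. Correcting the band width to $\tau_k\approx\delta_k/G(\theta)$ restores $p_k\approx\delta_k$, but then the restricted LTF has an effective threshold of order $\theta(1-a)/\tau_k$ that need not be $O(1)$, and the weak learner's Chow-to-vector conversion loses exactly a $G(\text{threshold})$ factor. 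The paper isolates precisely this obstruction: the simple iteration (its Lemma for one refinement step and the corollary that loops it) works only when $\theta e^{\theta^2/2}=O(\eps^{-1}/\sqrt{\log(1/\eps)})$. For the remaining sliver of thresholds it uses an additional idea: it \emph{randomizes} the center of the restriction over a short interval (so that the adversary cannot concentrate its $\eps$-mass where acceptance probability is highest), \emph{guesses} a discretized value of $b=\sqrt{1-(v\cdot w)^2}$, and then runs hypothesis testing over the resulting $\mathrm{polylog}(1/\eps)$ candidates. Your proposal contains no analogue of this step, so as written it does not cover arbitrary (non--origin-centered) LTFs.
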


Our algorithm for Theorem~\ref{thm:ltf-optimal-informal} starts from the 
$O(\eps \sqrt{\log(1/\eps)})$ approximate LTF of Theorem~\ref{thm:ptfs-informal} 
and uses a new twist of the localization technique of~\cite{ABL17} 
to reduce the error down to $O(\eps)$. We note that a number of new ideas
are required here to make this approach work for all LTFs, 
as opposed to only origin-centered ones, 
and to be able to handle nasty noise.

Our third algorithmic result gives the first efficient learning algorithm 
for intersections of LTFs in the malicious/nasty noise model:

\begin{theorem} [Learning Intersections of LTFs with Nasty Noise] \label{thm:intersections-informal}
There is a polynomial-time algorithm for learning intersections of any constant number of LTFs in the presence of nasty noise 
with respect to $N(0, I)$. Specifically, if $\eps$ is the noise rate, the algorithm runs
in $\poly_k(n, 1/\eps)$ time and outputs a hypothesis intersection of $k$ LTFs $h(x)$ that with high probability satisfies
$\Pr_{X \sim D} [h(X) \neq f(X)] \leq \poly(k) \cdot \poly(\eps)$, where $f$ is the unknown target concept.
\end{theorem}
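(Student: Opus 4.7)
My plan is to reduce the $n$-dimensional problem to one on an $O(k)$-dimensional subspace using Theorem~\ref{thm:robust-chow-informal} at degree $d=2$, and then finish with a brute-force $\mathrm{poly}(\eps)$-net search on that subspace. Concretely, I first run Theorem~\ref{thm:robust-chow-informal} on the $\eps$-corrupted sample to obtain approximations $\hat c \in \R^n$ and $\hat M \in \R^{n \times n}$ of $c := \E_{X \sim N(0,I)}[f(X)\, X]$ and $M := \E_{X \sim N(0,I)}[f(X)\,(XX^T - I)]$ to Chow distance $\eta = O(\eps \sqrt{\log(1/\eps)})$. Writing $V \subseteq \R^n$ for the subspace of dimension $\leq k$ spanned by the weight vectors of the $k$ LTFs defining $f$, the target $f(X)$ depends on $X$ only through its projection onto $V$, and rotational invariance of $N(0,I)$ then forces $c \in V$ and $\mathrm{range}(M) \subseteq V$; hence the true degree-$\leq 2$ Chow information already lies inside $V$.

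Next I recover $V$ approximately using the promised robust inverse-independence lemma: for an intersection of $k$ LTFs under $N(0,I)$, every unit direction $u \in V$ on which $f$ genuinely depends must satisfy $\langle u, c\rangle^2 + \|M u\|_2^2 \geq \mathrm{poly}(\eps/k)$; otherwise one could replace $f$ by a function independent of $\langle u, X\rangle$ while changing only an $o(\eps)$ fraction of its Gaussian mass, contradicting that $u$ is a relevant direction. Given this bound, I set $\widehat V$ to be the span of $\hat c$ together with the left singular vectors of $\hat M$ whose singular values exceed a threshold $\tau = \mathrm{poly}(\eps/k)$. A Davis--Kahan/Wedin perturbation argument then gives $\dim(\widehat V) \leq k+1$ and shows every unit vector in $V$ lies within $\ell_2$-distance $O(k\eta/\tau)$ of $\widehat V$, whence there is an intersection of $k$ LTFs with weight vectors in $\widehat V$ that $\mathrm{poly}(k)\cdot\mathrm{poly}(\eps)$-approximates $f$.

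The last phase is a brute-force $\mathrm{poly}(\eps)$-net over intersections of $k$ LTFs with unit weight vectors in $\widehat V$ and thresholds bounded by $O(\sqrt{\log(1/\eps)})$ (larger thresholds contribute only negligibly by Gaussian tails). Since $\dim(\widehat V) = O(k)$, the net has size $(1/\eps)^{O(k)}$, polynomial for constant $k$. For each candidate $h$ I estimate $\Pr_{X \sim N(0,I)}[h(X)\neq f(X)]$ from the $\eps$-corrupted sample via a uniform-convergence bound on the $O(kn)$-VC class of disagreement regions, noting that the nasty adversary can shift each empirical frequency by at most $O(\eps)$; outputting the minimizer yields the claimed error $\mathrm{poly}(k)\cdot \mathrm{poly}(\eps)$. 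The main obstacle is the robust inverse-independence lemma: classical inverse-independence arguments for low-degree polynomials rely on anticoncentration in the hidden direction, but for intersections of halfspaces the linear Chow parameters may vanish by symmetry, so one must argue that the degree-$1$ and degree-$2$ Chow parameters in any relevant direction $u$ cannot simultaneously be negligible, with a lower bound quantitatively good enough to survive $\eta$-sized Chow-distance perturbation. Establishing this lemma is the technical heart of the proof; once it is in place, the remaining linear-algebraic and enumeration steps are routine.
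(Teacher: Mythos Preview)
Your overall architecture matches the paper's: robustly estimate the degree-$\le 2$ Chow parameters, use them to identify an $O(k)$-dimensional subspace on which $f$ essentially lives (via the robust inverse-independence lemma, which is indeed the technical heart), then brute-force over a net of intersections of $k$ LTFs on that subspace and select the best hypothesis. The structural lemma you isolate is exactly the paper's Proposition~\ref{halfspaceStrcutureProp}.

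There is, however, a genuine gap in your subspace-recovery step. You assert that every unit $u\in V$ ``on which $f$ genuinely depends'' satisfies $\langle u,c\rangle^2+\|Mu\|_2^2\ge \mathrm{poly}(\eps/k)$, and then invoke Davis--Kahan to conclude that every unit vector of $V$ is $O(k\eta/\tau)$-close to $\widehat V$. This cannot work as stated: $f$ is a fixed function that knows nothing about the noise rate $\eps$, so there is no reason the Chow signal along a direction of $V$ should be bounded below by any function of $\eps$. Concretely, $f$ may depend on some $u\in V$ only to extent $\gamma$ with $\gamma$ arbitrarily small relative to $\eps$; then the Chow information along $u$ is correspondingly tiny, there is no spectral gap at level $\tau$, and Davis--Kahan gives you nothing about that direction. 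Your ``otherwise \ldots\ contradicting that $u$ is a relevant direction'' is circular: if the Chow signal along $u$ is tiny, the inverse-independence lemma says precisely that $f$ is nearly independent of $u$, which is not a contradiction but rather the conclusion you want.

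The fix is to abandon the attempt to recover the true $V$ and argue directly about $\widehat V$, which is what the paper does. Take $\widehat V$ to be the span of $\hat c$ together with the top $k$ eigenvectors of $\hat M$ (no thresholding needed). For any unit $u\perp\widehat V$ one has $u\cdot c = u\cdot\hat c + O(\eta)=O(\eta)$, and since $M$ has rank at most $k$, the $(k{+}1)$st eigenvalue of $\hat M$ is $O(\eta)$, whence $u^T M u = u^T\hat M u + O(\eta)=O(\eta)$. Now apply the inverse-independence lemma in the direction you actually have it: small degree-$\le 2$ Chow information along $u$ implies $f$ is nearly independent of $u$. Iterating over a basis of $\widehat V^\perp$ (intersected with $V$, at most $k$ directions) gives that $f$ is $\mathrm{poly}(k)\cdot\mathrm{poly}(\eps)$-close to an intersection of $k$ LTFs depending only on $\pi_{\widehat V}$. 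No Davis--Kahan, no spectral-gap assumption, no lower bound on Chow signal in relevant directions is needed.
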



For Theorem~\ref{thm:intersections-informal}, after approximating the degree-$2$
Chow parameters of $f$, we give a relatively simple method to reduce the problem down to $k$ dimensions.
The correctness of this dimension-reduction scheme makes essential use of the following new structural result, 
that we believe is of broader interest:

\begin{theorem}[Robust Inverse Independence for Intersections of LTFs] \label{thm:structural-ltfs-informal}
Let $f:\R^n \to \{0, 1\}$ be the indicator function of an intersection of $k$ LTFs. 
Suppose that there is some unit vector $v$ so that for  any degree at most $2$ polynomial $p$ 
with $\E[p(G)]=0$ and $\E[p^2(G)]=1$, $G \sim N(0, I)$, 
we have that $|\E[f(G)p(v \cdot G)]|<\delta$. 
Then, if $G$ and $G'$ are Gaussians that are correlated to be the same in the directions 
orthogonal to $v$ and independent in the $v$-direction, we have that 
$\E[|f(G)-f(G')|] \leq  \poly (\delta) \cdot \poly(k) \;.$
\end{theorem}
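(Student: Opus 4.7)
The plan is to reduce the $n$-dimensional statement to a one-dimensional analysis by slicing along $v$. Rotating coordinates, I may assume $v = e_1$ and write $G = (t, y)$ with $t \sim N(0, 1)$ and $y$ standard Gaussian on $\R^{n-1}$ independent of $t$. Writing the $i$-th halfspace as $\{\alpha_i t + \beta_i \cdot y \geq b_i\}$ with $\alpha_i^2 + \|\beta_i\|^2 = 1$, for each fixed $y$ the restriction $t \mapsto f(t, y)$ is the indicator of a (possibly empty) interval $I(y) = [A(y), B(y)] \subseteq \R \cup \{\pm\infty\}$; each finite endpoint equals $(b_i - \beta_i \cdot y)/\alpha_i$ for some ``active'' halfspace $i$ (those with $\alpha_i > 0$ contributing to the left endpoint via a max, those with $\alpha_i < 0$ to the right via a min). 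Setting $q(y) := \Phi(B(y)) - \Phi(A(y))$, a direct Fubini computation yields $\E[|f(G) - f(G')|] = 2\,\E_y[q(y)(1 - q(y))]$.

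Applying the hypothesis to $p_1(t) = t$ and $p_2(t) = (t^2-1)/\sqrt 2$ and invoking the identity $\phi(t) H_n(t) = -\frac{1}{\sqrt n}\frac{d}{dt}\bigl(\phi(t) H_{n-1}(t)\bigr)$ (for the orthonormal Hermite polynomials $H_n$ with respect to $N(0,1)$) translates the two Chow bounds into
\[
|\E_y[\phi(A) - \phi(B)]| \leq \delta, \qquad |\E_y[A\phi(A) - B\phi(B)]| \leq \sqrt 2\,\delta.
\]
These are precisely the first and second Hermite coefficients of $g(t) := \E[f \mid v \cdot G = t]$. By Parseval applied to $\mathbf{1}_{[A,B]}$ in the Hermite basis of $L^2(\R, \phi)$ I get the core identity
\[
q(y)(1 - q(y)) = \sum_{n \geq 1} f_n(y)^2, \qquad f_n(y) = \frac{\phi(A) H_{n-1}(A) - \phi(B) H_{n-1}(B)}{\sqrt n},
\]
so the task becomes to upper bound $\sum_{n \geq 1} \E_y[f_n^2]$ from bounds on only the \emph{means} $\E_y[f_1]$ and $\E_y[f_2]$.

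To close this ``means-to-$L^2$'' gap I would exploit the structural form of $(A, B)$: each is an extremum of $O(k)$ affine maps $L_i(y) = (b_i - \beta_i \cdot y)/\alpha_i$. I would partition $\R^{n-1}$ into the $O(k^2)$ polyhedral cells $E_{ij}$ on which the active left and right constraints are fixed to halfspaces $i$ and $j$; within each cell $(A, B)$ are jointly Gaussian (affine in $y$), so the per-cell contribution $\sum_{n \geq 1} \E_y[f_n^2\, \mathbf{1}_{E_{ij}}]$ reduces to a one-dimensional analysis in $(A(y), B(y))$. Within a cell, the two Chow bounds together with the log-concavity of $g$ (from Pr\'ekopa--Leindler applied to the convex polytope $\{f = 1\}$) should force either $|\alpha_i|, |\alpha_j|$ to be small (so $f$ barely depends on $t$ on $E_{ij}$) or the corresponding thresholds to be extreme (so $q \in \{0, 1\}$ up to negligible error on $E_{ij}$). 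Summing over the $k^2$ cells yields the $\poly(k)$ factor.

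The main obstacle is that the Chow hypotheses are \emph{global} averages over $y$, whereas the cell-based reduction needs per-cell estimates. The degree-1 bound alone permits adversarial cancellation across cells, which is precisely why the degree-2 bound is present as an independent constraint. Converting these two global linear constraints into per-cell control of $q(1-q)$ is the most delicate step; I expect to handle it by constructing a rank-2 quadratic form in $(\phi(A), \phi(B), A\phi(A), B\phi(B))$ whose per-cell expectation dominates $q(1-q)$ up to a log-concavity-controlled error, and then invoking the two global bounds to bound its global expectation by $\poly(\delta)$.
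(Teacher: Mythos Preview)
Your setup is correct: the identity $\E[|f(G)-f(G')|]=2\,\E_y[q(y)(1-q(y))]$, the interval structure of $t\mapsto f(t,y)$, the Parseval expansion $q(1-q)=\sum_{n\geq 1}f_n(y)^2$, and the translation of the Chow hypotheses into $|\E_y[f_1]|,|\E_y[f_2]|\leq O(\delta)$ are all right. But the step you yourself flag as ``the main obstacle'' is a genuine gap that your plan does not close. You need to bound $\sum_{n\geq 1}\E_y[f_n^2]$ from bounds on only the \emph{means} $\E_y[f_1]$ and $\E_y[f_2]$; a ``rank-$2$ quadratic form'' $Q$ in $(\phi(A),\phi(B),A\phi(A),B\phi(B))$ has expectation governed by the \emph{second} moments of those quantities, which your two linear constraints simply do not control. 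The cell decomposition does not rescue this: the global bound $|\E_y[f_1]|\leq\delta$ says nothing about $\E_y[f_1\mathbf{1}_{E_{ij}}]$, since contributions from different cells can cancel, so you are left with no per-cell hypothesis to feed into your per-cell Gaussian analysis. Nothing in your outline rules out (already for $k=2$) two cells each contributing $\Theta(1)$ to $\E_y[q(1-q)]$ while their contributions to $\E_y[f_1]$ and $\E_y[f_2]$ cancel.

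The paper's proof avoids this trap by working with the \emph{other} projection, $g(t)=\E_y[f(t,y)]$, which is log-concave (as a marginal of the indicator of a convex set times a Gaussian), and arguing by contrapositive. If $\E[|f(G)-f(G')|]>\eta$, one finds $x,y$ not in the tails with $\|f(x,\cdot)-f(y,\cdot)\|_1\gg\eta$; then one picks a \emph{single} defining halfspace $H$ on which $f(x,\cdot)\equiv 0$ but $\Pr[f(y,G)=1,\,G\in H]\gg\eta/k$, projects onto its normal, and uses two-variable log-concavity to force a gap $|g(z)-\min(g(x),g(y))|\gg\eta^5 k^{-4}\log^{-2}(k/\eta)$ for some $z$ between $x$ and $y$. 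A second lemma shows that any log-concave $g:\R\to[0,1]$ with such a gap between non-tail points has large correlation with some explicit mean-zero degree-$2$ polynomial $p$ (chosen so its roots bound a level set of $g$, making $(g-t)p\geq 0$ pointwise, and then invoking anti-concentration). The crucial point is that \emph{log-concavity of $g$} is precisely what allows two Hermite coefficients to pin down the whole one-dimensional function; your plan mentions log-concavity only as an afterthought and never deploys it to bridge the means-to-$L^2$ gap, and without that bridge the argument does not go through.
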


\subsection{Our Techniques}  \label{ssec:techniques}

In this section, we give a detailed outline of our techniques
in tandem with a comparison to previous work.

\paragraph{Robust Estimation of Low-Degree Chow Parameters.}
All our robust PAC learning results hinge on a new algorithm 
to approximate the degree-$d$ Chow parameters of any bounded function 
with respect to a sufficiently nice distribution $D$, 
even under noise in the nasty model (see Proposition~\ref{prop:generic-chow}).
Before we explain the ideas underlying this algorithm, 
we elaborate on the metric in which these approximations are guaranteed to be ``close''. 
To motivate our choice of metric, we will first discuss another 
interpretation of the degree-$d$ Chow parameters of a function $f$.
In particular, these parameters encode a linear functional mapping degree at most $d$ polynomials $p$ 
to the expectation $\E_{X\sim D}[p(X)f(X)]$. It is natural to put a norm on Chow parameters 
that is the dual of the $L_2$ norm on polynomials $p$ (with respect to $D$). 
In particular, when we say that we have approximated the degree-$d$ 
Chow parameters of $f$ to within error $\delta$, 
we will mean that we have found a linear functional $L$ mapping degree at most $d$ polynomials 
to real numbers so that for any normalized degree-$d$ polynomial $p$, 
we have that $|L(p)- \E_{X\sim D}[p(X)f(X)]| \leq \delta$.

We start by noting that if we had access to noiseless samples, 
the desired approximation would be easy to perform. 
In particular, we could take $L(p)$ to be the empirical expectation of $p(x)f(x)$, 
and then -- so long as $D$ satisfies even mild concentration bounds -- with sufficiently many samples 
it is straightforward to show that this will be a good approximation with high probability. 
It turns out that so long as we have reasonably good tail bounds for $p(D)$, 
this empirical approximation also works well even against noise in the adversarial label (agnostic) noise model. 
This holds essentially because changing the value of $f$ on a small number of samples 
can only have a large impact on the expectation of $p(x)f(x)$ if $p$ is especially large a decent fraction of the time.

The situation becomes substantially more challenging when the noise can adversarially corrupt the unlabeled
examples as well, and in particular in the nasty noise model. The essential problem here is that the error 
in the values allows an adversary to produce many sample values where $p(x)$ is unusually large for some particular $p$, 
and this will -- almost regardless of the labels of these points -- cause substantial errors in the empirical expectation of $p(x)f(x)$. 
In order to circumvent this obstacle, we will need a technique for detecting and removing these outliers, 
and for this we will make use of a ``filter'' technique 
inspired by recent work on robust distribution learning~\cite{DKKLMS16}. 

The basic idea here is that if the algorithm knew which polynomials $p$ 
the adversary was trying to corrupt, it could simply remove all of the sample points 
for which $p(x)$ was too large, thus removing these errors. 
Unfortunately, every sample $x$ will have $p(x)$ be abnormally large for some polynomials $p$, 
so our algorithm will need to find a way to identify particular polynomials 
for which our expectation may have been substantially corrupted. 
In order to achieve this, we note that since there must be many erroneous points for which $|p(x)|$ is large, 
this will cause the empirical expectation of $p^2(x)$ to be substantially larger than it should be. 
This anomaly can be detected (assuming that the algorithm knows good approximations to the true $2d^{th}$ moments of $D$) 
by a spectral technique, namely an eigenvalue computation. If such a $p$ is found then, assuming good tail bounds 
on the distribution of $p(D)$, the fact that we have many data points with much larger values of $p(x)$ than should be likely, 
will allow us to find a large set of samples most of which are corrupted. 
This step essentially produces a strictly cleaner version of our original corrupted sample set, 
and by iterating this algorithm we eventually reach a point where there are no longer any bad polynomials.
At this point, we can show that the empirical approximation of the Chow parameters will be accurate.

Although the basic intuition outlined above is well in line with recent works~\cite{DKKLMS16, DKKLMS17} 
making use of the filter technique, there are a few crucial technical differences in our setting. 
The first of these is that we are now working in a much more general context. 
Previous works tended to make very specific assumptions about the underlying distribution 
(e.g., Gaussian or balanced product distribution). Here, we are only making assumptions 
about tail bounds of {\em higher-degree polynomials}. 
Importantly, existing works typically only needed to ensure that the expectations 
of degree-$1$ and $2$ polynomials were correct, while 
in our setting we will inherently need to use filters dealing with polynomials of larger degrees. 
We also run into a new technical complication in the initial steps of the algorithm. 
In order to get the filter technique to work, we need to begin by throwing away all of the ``extreme'' outliers. 
This is required for somewhat technical reasons involving showing that a number of necessary concentration bounds hold. 
In previous works, the criteria for identifying these extreme outliers were generally fairly simple 
(e.g., throwing away a point being too far from the mean in some appropriate metric). 
However, in our case, we have less structure to deal with, 
and therefore need a somewhat more general criterion. 
In particular, we throw away outliers where $|p(x)|$ is too large for any normalized degree-$d$ polynomial $p$.

Our robust algorithm for low-degree Chow parameter estimation 
has immediate applications for robustly learning the Chow parameters over a distribution $D$, 
if $D$ is a Gaussian, Bernoulli, or log-concave distribution (where in the latter case, the algorithm 
must also know the low-degree moments of $D$). 
In the following paragraphs, we explain how to 
apply this algorithm as a core subroutine to robustly PAC learn geometric concept classes. 

\paragraph{Robust Learning for Low-Degree PTFs.}
One of the most natural geometric families of Boolean functions
to consider is that of PTFs. 
By classic structural results~\cite{Chow:61short, Bruck:90}, we know
that any degree-$d$ PTF is uniquely determined by its degree-$d$ Chow parameters. 
This suggests that if we can learn the degree-$d$ Chow parameters of a degree-$d$ PTF $f(x)$ to sufficient accuracy, 
then we may be able to use them to learn $f$ itself. 
In fact, by known algorithmic results~\cite{TTV:09, DeDFS14} we know that 
this is essentially the case -- but with one slight wrinkle. 
Since we will only have approximations to the degree-$d$ Chow parameters of $f$, 
we will  need a robust version of the~\cite{Chow:61short, Bruck:90} structural theorem. 
That is, we will need to know that if two degree-$d$ PTFs
have $L_1$-distance at least $\eps$, then they must have Chow distance at least $g(\eps)$ 
for some reasonably large error function $g$. 
While in the case of the uniform distribution over the hypercube
establishing such a result is still a challenging open question for $d>1$, 
it is not hard to prove good bounds when $D$ is a Gaussian, or more generally, a log-concave distribution. 
For these distributions, it is relatively easy to prove that an error function $g$ proportional to $\eps^{d+1}$ should suffice. 
This gives an algorithm for properly learning a PTF
over one of these distributions to error $\tilde O(\eps^{1/(d+1)})$, 
even with $\eps$ error in the nasty model.

We note that, for large constant $d$, one cannot expect to do substantially 
better than this bound using only an approximation of the degree-$d$ Chow parameters. 
This is because there are pairs of degree-$d$ PTFs 
for which this $\eps$ vs. $\eps^{1/d}$ type relation is nearly tight. 
This suggests some sort of ``integrality gap'' getting in the way:
No generic algorithm will be able to learn the low-degree 
Chow parameters of an $\eps$-noisy PTF  to error better than $\eps$, 
and no generic algorithm will be able to learn a degree-$d$ PTF 
to error better than $\eps^{1/d}$ from its degree-$d$ Chow parameters. 
However, this is not the case for the special case of linear threshold functions, 
where $L_1$-distance and Chow distance are indeed proportional.

\paragraph{Optimally Robust Learning of LTFs.}
For the case of LTFs, the relation between Chow distance and $L_1$-distance
allows for the possibility of a much better algorithm: 
that of learning LTFs to an optimal $O(\eps)$ error. 
In fact, we give such an algorithm over the Gaussian distribution.
We note that a naive application of the ideas of the previous paragraph 
is already sufficient to obtain an error of only $O(\eps\sqrt{\log(1/\eps)})$. 
Removing the final logarithmic term requires several new ideas. 
The overarching principle in our new algorithm 
is to use the localization technique of~\cite{ABL17}, 
though with slightly different technical backing.

Our algorithm will run an initial first pass to obtain an $O(\eps\sqrt{\log(1/\eps)})$ approximation to $f$. 
This step approximates $f$ by an LTF with separator given by some hyperplane $H$. 
We will then perform rejection sampling on our inputs in order to simulate samples from another Gaussian distribution 
centered around $H$. Learning $f$ with respect to this new input distribution will allow us to refine our original guess.

There are two major impacts of our restriction procedure. The first is that if most of the erroneous samples are near $H$, 
they might survive the rejection sampling process with higher probability than other points. 
This means that the fraction of errors in our simulated sample set may be much larger. 
To compensate for this though, this restriction will amplify the effect of small errors in $f$, 
as moving away from $H$ now much more quickly moves one away from the center of the distribution. 
This means that learning even rough information about the restriction of $f$ will give us useful information about the original problem. 
These two effects, as it turns out nearly cancel each other out, 
with the exception that the $\sqrt{\log(1/\eps)}$ term in the error becomes 
a  $\sqrt{\log(1/\delta)}$, where $\delta$ is the (now much larger) error rate for the restricted distribution. 
By iterating this technique with thinner and thinner restrictions, 
we can eventually converge on $f$ to an error of only $O(\eps)$.

\paragraph{Robust Learning of Intersections of LTFs.}
As a final application, we give a robust algorithm for learning intersections of LTFs
with respect to the Gaussian distribution. 
This algorithm is very different than the one for PTFs, 
as \new{it is not possible} to recover such a function from its low-degree Chow parameters directly. 
For this problem, we will need to make use of a somewhat different idea. 

The key insight is that if $f$ is the indicator function of an intersection of $k$ halfspaces, 
then $f$ only depends on $k$ linear functions of the input. If we could identify these directions, 
we could project our inputs down to a $k$-dimensional subspace 
and proceed by applying even relatively inefficient algorithms to learn a function on this low-dimensional space. 
In order to learn this subspace, we note that for any $v$ perpendicular to all directions of interest, 
$f(G)$ is uncorrelated with $p(v\cdot G)$ for any function (and in particular polynomial function) $p$. 
If we knew the degree-$2$ Chow parameters of $f$, this would imply that $v$ was a null-vector of the associated matrix. 
This would allow us to easily identify such vectors $v$.

In order to turn this into an algorithm, we will first need an inverse version of this theorem. 
Namely, that if for some vector $v$ that $f$ is uncorrelated with $p(v\cdot G)$ for all degree-$2$ polynomials $p$, 
we will need to know that $f$ is in fact independent of the $v$-direction. 
In fact, since we only know approximations to the Chow parameters, we will need a {\em robust} version of this statement. 
Namely if for all degree-$2$ polynomials $p$, we have that $f$ is {\em nearly} uncorrelated to $p(v\cdot G)$, 
that $f$ will be {\em nearly} constant in the $v$-direction. 
See Theorem~\ref{halfspaceStrcutureProp} for the technical statement of this result.

The aforementioned robust structural result allows a very natural algorithm: 
We start by learning approximations of the degree-$1$ and $2$ Chow parameters of $f$. 
We then let $V$ be the subspace spanned by the vector of degree-$1$ Chow parameters 
and the largest $k$ eigenvalues of the matrix corresponding to the degree-$2$ Chow parameters. 
It is not hard to see that $f$ is nearly uncorrelated to $p(v\cdot G)$ for any $v\perp V$. 
This along with the above structural result allows us to approximate $f(x)$ 
by a function that depends only on the projection $\pi_V(x)$, which as described above, can be learned by brute-force methods.

{We note that the algorithm of~\cite{Vempala10a} for finding the $k$-dimensional 
invariant subspace is similar to ours. Instead of considering 
the largest eigenvalues of the degree-$2$ Chow parameters, 
the algorithm of \cite{Vempala10a} relies on the smallest eigenvalues 
of the covariance of the positive samples, which is roughly equivalent.
The correctness of this algorithm uses the following lemma: 
in the $k$-dimensional subspace in which the intersection is non-trivial, 
the variance of the positive samples is less than one, which has some similarities
with our structural result.
The major difference is that our structural lemma is robust, 
and as a result our algorithm can tolerate nasty noise 
(using our approximations to the Chow parameters).
}


\subsection{Organization} \label{sec:structure}

The structure of this paper is as follows:
In Section~\ref{sec:robust-chow}, we give our algorithm to robustly estimate the low-degree Chow parameters of a bounded
function, thereby establishing Theorem~\ref{thm:robust-chow-informal}. In Section~\ref{sec:ptfs}, we describe the 
required machinery to prove Theorem~\ref{thm:ptfs-informal}. Section~\ref{sec:ltfs} proves our robust learning algorithm for LTFs
with near-optimal accuracy (Theorem~\ref{thm:ltf-optimal-informal}). Finally, in Section~\ref{sec:intersections}, we give our algorithm 
for robustly learning intersections of LTFs (Theorem~\ref{thm:intersections-informal}) 
and the associated structural result (Theorem~\ref{thm:structural-ltfs-informal}).

\section{Robust Estimation of Low-Degree Chow Parameters} \label{sec:robust-chow}

\subsection{Generic Algorithm} \label{ssec:generic}

In this section, we give our generic algorithm to robustly approximate
the degree-$d$ Chow parameters of any bounded function over $\R^n$.
Our algorithm succeeds for any distribution $D$ over $\R^n$ that satisfies
mild concentration and moment conditions. 
We will show that in order to approximate the Chow parameters of degree at most $d$,
it suffices to run a filter algorithm that attempts to make the moments of the distribution close to what they should be. 
To do this, it is enough to have approximations to the moments up to degree $2d$ 
and to know tail bounds for polynomials of degree at most $d$.

Specifically, we introduce the following definition:

\begin{definition}[Reasonable Distribution] \label{def:reasonable}
We say that a probability distribution $D$ over $\R^n$ is {\em reasonable} if it satisfies the following conditions:
\begin{itemize}
\item[(i)] ({\bf Concentration}) A tail bound for all degree at most $d$ polynomials: that is, 
a function $Q_d(T)$ such that for all polynomials $p(x)$ with $\|p\|_2 \leq 1$, 
$\Pr_{X \sim \D}[|p(X)| \geq T] \leq Q_d(T)$.

\item[(ii)] ({\bf Known Approximations of Low-Degree Moments})
A matrix $\Sigma$ such that $(1-\gamma) \E_{X \sim \D}[m(X) m(X)^T] \preceq \Sigma \preceq (1+\gamma) \E_{X \sim \D}[m(X) m(X)^T]$, 
for some relative error $\gamma > 0$ that is smaller than a sufficiently small constant.

\item[(iii)] A parameter $\delta>0$ that satisfies $\delta \geq \int_0^\infty T \min \{ \eps, Q_d(T)  \} dT$. 
Intuitively, the parameter $\delta$ is the maximum amount by which an $\eps$-probability mass can contribute to the $\E_{X \sim D}[p^2(X)]$.

\item[(iv)] A threshold $T_{\max}$ such that  $Q_d(T_{\max}/2\sqrt{\ell}) \leq \eps/(10\ell)$ 
and $T_{\max} \geq \sqrt{\ell}$. \new{We will be able to ignore points $x$ with $|p(x)| \geq T_{\max}$ .}
\end{itemize}
\end{definition}

We will see in the next section that many common distributions satisfy this definition (for appropriate parameters), 
including the Gaussian distribution, log-concave distributions, the uniform distribution over the hypercube, etc.

Now we can state the main proposition from which our main algorithmic applications 
will follow:

\begin{proposition} \label{prop:generic-chow}
Let $D$ be a reasonable distribution with known parameters $Q_d(T), \Sigma , \delta$, and $T_{\max}$.
There is an algorithm that, given $d \in \Z_+$, $\eps > 0$,  
and a set $S'$ of $\eps$-corrupted labelled samples from $D$ of size 
$|S'|=\Theta(n^d T_{\max}^4/\eps^2)$, runs in $\poly(|S'|)$ time, 
and with probability at least $9/10$ outputs approximations of 
$\E_{X \sim D}[f(X) m_i(X)]$ for all monomials $m_i(x)$ of degree at most $d$,
such that for any degree-$d$ polynomial $p(x)$ the approximation of $\E_{X \sim D}[f(X) p(X)]$ given by the corresponding linear combination 
of these expectations has error at most $O\left(\var_{X \sim D}[p(X)] \sqrt{\eps(\gamma+\delta+\eps)}\right)$. 
\end{proposition}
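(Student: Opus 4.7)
The plan is to adapt the iterative spectral filter methodology of \cite{DKKLMS16} to the setting of degree-$d$ polynomial moments, using the generic tail bound $Q_d$ in place of the explicit Gaussian or product-distribution concentration used there. The overall strategy is to iteratively prune the $\eps$-corrupted sample $S'$ so that its empirical $\ell\times\ell$ second-moment matrix $M = \E_{x \sim S}[m(x) m(x)^\top]$ is spectrally close to $\Sigma$; once this holds, the empirical Chow parameters $v_i = \E_{x \sim S}[f(x) m_i(x)]$ automatically approximate the true ones in the dual of the $L_2(D)$-norm, by a Cauchy--Schwarz argument on the $\eps$-fraction of bad points that remain in $S$.

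I would begin with a preprocessing pass that discards every sample point $x$ for which $m(x)^\top \Sigma^{-1} m(x) > T_{\max}^2$ -- equivalently, every point on which some degree-$d$ polynomial normalized to unit $L_2(D)$-norm exceeds $T_{\max}$ in absolute value. By an $\ell_\infty$-net argument in the $\ell$-dimensional space of normalized polynomials, combined with $Q_d$ applied at threshold $T_{\max}/(2\sqrt{\ell})$ as in condition~(iv), at most an $O(\eps)$-fraction of honest points is discarded. After this pruning, standard uniform convergence over this polynomial space, using that all fourth moments of normalized polynomials are at most $T_{\max}^4$, lets one conclude from the sample size $\Theta(n^d T_{\max}^4/\eps^2)$ that all empirical second moments on the honest sample concentrate around the true ones to within the tolerance needed.

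The main loop then proceeds as follows: compute $M = \E_{x \sim S}[m(x) m(x)^\top]$; if $\|\Sigma^{-1/2}(M-\Sigma)\Sigma^{-1/2}\|_{\mathrm{op}} \leq \eta$ for some threshold $\eta = \Theta(\gamma+\delta+\eps)$, halt. Otherwise, a top eigenvector of this matrix yields a polynomial $p(x) = a^\top m(x)$ satisfying $\|p\|_{L_2(D)}^2 \approx 1$ but empirical $\E_{x \sim S}[p^2(x)] \geq 1 + \eta$. Compare the empirical CDF of $|p(x)|$ on $S$ with the theoretical bound $Q_d$: condition~(iii) controls exactly the contribution that any $Q_d$-respecting honest mass can make to $\E[p^2]$ beyond any given threshold, so there must exist a $T^* \leq T_{\max}$ at which $\pr_{x \sim S}[|p(x)|>T^*]$ exceeds $Q_d(T^*)$ by a constant factor; I filter out all sample points with $|p(x)|>T^*$. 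A standard charging argument then shows strictly more bad than honest points are removed per filter step, so the process terminates after $O(\eps |S'|)$ iterations with a final bad fraction $\alpha \leq O(\eps)$. The accuracy analysis is then routine: for any polynomial $p = \sum_i a_i m_i$, decompose $L(p) - \E_{X \sim D}[f(X) p(X)]$ into an honest empirical-vs-population term (controlled by the uniform concentration established in Step~1) and a bad-sample term $\alpha \cdot \E_{x \sim S_{\mathrm{bad}}}[f(x)p(x)]$, which by $|f|\le 1$ and Cauchy--Schwarz is at most $\sqrt{\alpha \cdot \E_{x \sim S}[p^2(x)]} \leq \sqrt{\eps(1+\eta)} \cdot \|p\|_{L_2(D)}$, yielding the claimed bound after collecting constants.

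The main obstacle will be the filter-progress step in the main loop. Unlike \cite{DKKLMS16}, where the filter polynomial is quadratic and the underlying distribution has explicit Gaussian or product structure, here one must work with polynomials of arbitrary degree $d$ using only the abstract inputs $Q_d$, $\delta$, and $T_{\max}$. The delicate point is selecting the threshold $T^*$ so that the empirical excess of $\E_S[p^2]$ at levels above $T^*$ genuinely charges corrupted mass rather than honest tail mass, tightly using the integral budget in condition~(iii); getting the final dependence $\sqrt{\eps(\gamma+\delta+\eps)}$ right requires carefully matching the termination threshold $\eta$ with the tail-bound constants and ensuring that the preprocessing in Step~1 has not already skewed the Chow estimates beyond this tolerance.
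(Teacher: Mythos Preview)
Your overall plan coincides with the paper's: prune via the quadratic form $m(x)^\top\Sigma^{-1}m(x)$, iterate a spectral filter on $\Sigma^{-1/2}M\Sigma^{-1/2}-I$, and stop once the top eigenvalue is $O(\gamma+\delta+\eps)$. The paper carries this out essentially as you describe, so the high-level strategy is correct.

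There is, however, a quantitative gap in your final accuracy step. Your Cauchy--Schwarz bound on the bad-sample term reads
\[
\alpha\,\bigl|\E_{x\sim S_{\mathrm{bad}}}[f(x)p(x)]\bigr|\;\le\;\sqrt{\alpha\cdot \E_{x\sim S}[p^2(x)]}\;\le\;\sqrt{\eps(1+\eta)}\,\|p\|_{L_2(D)},
\]
which is only $O(\sqrt{\eps})\cdot\|p\|_{L_2(D)}$, not the claimed $O\bigl(\sqrt{\eps(\gamma+\delta+\eps)}\bigr)\cdot\|p\|_{L_2(D)}$. For the Gaussian and log-concave applications one has $\gamma=0$ and $\delta=\tilde O(\eps)$, so the target bound is $\tilde O(\eps)$ while yours is $O(\sqrt{\eps})$; the difference is exactly what makes the downstream PTF and LTF guarantees work.

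The missing ingredient is that the termination condition, \emph{combined with} the good-set bounds, forces not just $\E_{S'}[p^2]\le 1+\eta$ but specifically that the bad set's share of the second moment is small: $(|E|/|S'|)\cdot\E_{x\in E}[p^2(x)]\le O(\gamma+\delta+\eps)$. This follows by writing $|E|\,\E_E[p^2]=|S'|\,\E_{S'}[p^2]+|L|\,\E_L[p^2]-|S|\,\E_S[p^2]$ and controlling each term: the $S$ term is $1\pm O(\delta+\eps)$ by goodness, and the $L$ term is $O(\delta+\eps)$ using the integral condition~(iii) on $Q_d$ (since $L$ is an $O(\eps)$-mass subset of good points). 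Once you have $(|E|/|S'|)\,\E_E[p^2]=O(\eta)$, the correct Cauchy--Schwarz is
\[
\frac{|E|}{|S'|}\,\sqrt{\E_E[p^2]}=\sqrt{\frac{|E|}{|S'|}}\cdot\sqrt{\frac{|E|}{|S'|}\,\E_E[p^2]}\le \sqrt{\eps}\cdot\sqrt{O(\eta)}=O\bigl(\sqrt{\eps(\gamma+\delta+\eps)}\bigr),
\]
and an analogous bound handles the removed-good-points term $L$. So your decomposition is right, but you need this sharper control on the bad set's second-moment contribution, not merely on the aggregate $\E_{S'}[p^2]$.
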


At a high-level, the algorithm works as follows: First, we pre-process our corrupted set of samples $S'$ using 
a basic pruning step. Specifically, we remove samples $x \in S'$ 
such that there is a polynomial of degree at most $d$ with $\|p\|_2=1$ and $|p(x)| \geq T_{\max}$. 
Our main algorithm is an iterative filtering procedure:
Using the largest eigenvalue and eigenvector of an appropriate matrix, 
we can detect whether  there is such a polynomial $p$ whose variance is bigger in $S'$ than $D$. 
If there is, we can use the tail bound $Q_d(T)$ to find a filter that throws out points where $|p(x)|$ is too large. 
If there is no such polynomial, then we show that the empirical Chow parameters suffice, so we output those. 
Formally,  the algorithm is the following:

\bigskip

\fbox{\parbox{6.1in}{
{\bf Algorithm} {\tt Robust-Chow-Parameters}\\

\begin{enumerate}
\item Remove all points $x$ from $S'$ that have $m(x)^T \Sigma^{-1} m(x) \geq T_{\max}^2/2$.
\item Repeat the following until no more points are removed from $S'$:
\begin{enumerate}
\item Compute the matrix $M = \Sigma^{-1/2}  \E_{X \in_u S'}[m(X) m(X)^T] \Sigma^{-1/2} $.
\item Approximate the largest eigenvalue  $\lambda^{\ast}$ of  $M  - I$ and the corresponding unit eigenvector $v^{\ast}$.
\item If $\lambda^{\ast} \leq  O(\gamma + \delta + \eps)$, break, i.e., \new{goto Step \ref{step:loop-exit}. }
\item Consider the polynomial $p^{\ast}(x) \eqdef (v^{\ast})^T \Sigma^{-1/2} m(x)$. 
\item Find $T >0$ such that 
$$\Pr_{X \in_u S'} \left[ |p^{\ast}(X)| \geq T \right] \geq 4Q_d(T) + 3\eps/T_{\max}^2 \; .$$
\item Remove from $S'$ all samples with $|p^{\ast}(x)| \geq T$.
\end{enumerate}
\item \label{step:loop-exit} Return $\E_{X \in_u S'}[f(X) m_i(X)]$, for all $1 \leq i \leq \ell$.
\end{enumerate}

}}

\medskip

\begin{remark}
{\em 
The algorithm as written assumes that $\Sigma$ is non-singular. 
If it is singular, we can find its null vectors. 
Each of these corresponds to a non-constant polynomial $p(x)$ with $\E_{X \sim D}[p(X)^2]=0$ 
and so with probability $1$, $p(x)=0$. If we pre-process by removing all points with $p(x) \neq 0$ for all such polynomials, 
then we can ignore these null-vectors. We can then replace all the inverses in the algorithm 
with Moore-Penrose pseudo-inverses and still get the same guarantee.}
\end{remark}

We will now require a definition of a good set, that is a set of points in $\R^n$ which 
satisfies a set of desired properties from a large enough set of random samples from $D$. 
There is a complication here because our assumptions on $\Sigma$ and $\delta$ 
only give bounds on moments of degree up to $2d$. To naively show that $\E_{X \in_u A}[p(X)^2]$, where $A$ is a set of
samples from $D$,  is close to $\E_{X \sim D}[p(X)^2]$, 
we would need bounds \new{on $\var_{X \in_u A}[p(X)^2]$. However, we assume nothing about} moments of degree $4d$. 
We can get round this by considering the properties of the set after we've thrown away outliers in our pruning step.

\begin{definition} \label{def:good-sample}
Let $f : \R^n \to [-1, 1]$.
We say that a set $S$ of points in $\R^n$ is $(\eps, f)$-good 
if for all polynomials $p(x)$ of degree at most $d$ with $\|p\|_2=1$ 
all the following conditions are satisfied:
\begin{itemize}
\item[(i)] For all $T>0$, we have that $\left|\Pr_{X \in_u S} \left[p(X) > T\right] - \Pr_{X \sim D}\left[p(X) > T\right]\right| \leq \eps/(10T_{\max}^2)$.
\item[(ii)] Let $S_{\mathrm{prune}}$ be the subset of points in $S$ that satisfy 
condition $\mathrm{prune}$, i.e., for all $x \in S_{\mathrm{prune}}$ it holds $m(x)^T \Sigma^{-1} m(x) \leq T_{\max}^2/2$.
Then, for all $T>0$, we have that 
$$\left|\Pr_{X \in_u S_{\mathrm{prune}}} [p(X) > T] - \Pr_{X \sim D|\mathrm{prune}}[p(X) > T]\right| \leq \eps/(10T_{\max}^2) \;.$$
\item[(iii)] It holds $\left| \E_{X \in_u S_{\mathrm{prune}}}[p(X)f(X)] - \E_{X \sim D|\mathrm{prune}}[p(X)f(X)] \right| \leq \eps$.
\end{itemize}
A set $S$ that satisfies conditions (i) and (ii) is called $\eps$-good.
\end{definition}

Before showing that a set of random samples is good, we need a couple of lemmas about our pruning process. 
Firstly, we show that our pruning indeed implies a bound on the value of the polynomials we consider:
\begin{lemma} \label{lem:bound-all-polys} 
Let $p(x)$ be a degree-$d$ polynomial with $\|p\|_2 \leq 1$ 
and $y \in \R^n$ be such that $m(y)^T \Sigma^{-1} m(y) \leq T_{\max}^2/(1+\gamma)$. 
Then, we have that $|p(y)| \leq T_{\max}$. 
In particular, this holds for all $y \in \R^n$ satisfying 
$m(y)^T \Sigma^{-1} m(y) \leq T_{\max}^2/2$.
\end{lemma}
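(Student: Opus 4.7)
The plan is to reduce the claim to a single application of Cauchy–Schwarz in the inner product induced by $\Sigma$. Expand $p$ in the monomial basis as $p(x) = a^T m(x)$ for some coefficient vector $a \in \R^\ell$. Then $\|p\|_2^2 = a^T \E_{X \sim D}[m(X) m(X)^T] a$, and the hypothesis $\|p\|_2 \leq 1$ gives $a^T \E_{X \sim D}[m(X) m(X)^T] a \leq 1$.

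Next I would transfer this bound from the true moment matrix to $\Sigma$ using the PSD sandwich in Definition~\ref{def:reasonable}(ii). Since $\Sigma \preceq (1+\gamma) \E_{X \sim D}[m(X) m(X)^T]$, we obtain
\[
a^T \Sigma a \leq (1+\gamma) \, a^T \E_{X \sim D}[m(X) m(X)^T] a \leq 1+\gamma.
\]
Now write $p(y) = a^T m(y) = (\Sigma^{1/2} a)^T (\Sigma^{-1/2} m(y))$ and apply Cauchy–Schwarz in the standard $\ell_2$ inner product to get
\[
p(y)^2 \leq \bigl(a^T \Sigma a\bigr)\bigl(m(y)^T \Sigma^{-1} m(y)\bigr) \leq (1+\gamma) \cdot \frac{T_{\max}^2}{1+\gamma} = T_{\max}^2,
\]
so $|p(y)| \leq T_{\max}$, proving the first claim. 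The second claim follows immediately: if $\gamma$ is smaller than a sufficiently small constant (in particular $\gamma < 1$), then $T_{\max}^2/2 \leq T_{\max}^2/(1+\gamma)$, so the hypothesis of the first part is satisfied for any $y$ with $m(y)^T \Sigma^{-1} m(y) \leq T_{\max}^2/2$.

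There is essentially no serious obstacle here; the only minor subtlety is that $\Sigma$ may be singular, but this is handled exactly as noted in the remark preceding the lemma: we remove the null directions of $\Sigma$ (which have zero variance under $D$ and thus contribute nothing to $p$ with probability one) and interpret $\Sigma^{-1}$ as the Moore–Penrose pseudo-inverse. With this convention the decomposition $a^T m(y) = (\Sigma^{1/2}a)^T(\Sigma^{-1/2}m(y))$ and the Cauchy–Schwarz bound go through unchanged on the support of $\Sigma$.
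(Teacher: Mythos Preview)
Your proof is correct and essentially identical to the paper's. The only cosmetic difference is that the paper parameterizes the polynomial as $p(x) = v^T \Sigma^{-1/2} m(x)$ (so that your $a = \Sigma^{-1/2} v$) and then bounds $\|v\|_2^2 \leq 1+\gamma$ before applying Cauchy--Schwarz, but this is the same computation in different coordinates.
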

\begin{proof}
We can write $p(x)= v^T \Sigma^{-1/2} m(x)$ for some $v \in \R^{\ell}$. 
Using the bounds on $\Sigma$, we can write
$$1 \geq \E_{X \sim D}[p(x)^2] = v^T \Sigma^{-1/2} \E_{X \sim D}\left[m(X) m(X)^T\right] \Sigma^{-1/2} v \geq v^T v / (1+\gamma) \;.$$
Now we have that 
$$ p(y) = v^T \Sigma^{-1/2} m(y) \leq 
\|v\|_2 \cdot  \|\Sigma^{-1/2} m(y)\|_2 \leq \sqrt{(1+\gamma)} \cdot T_{\max}/\sqrt{(1+\gamma)}  \leq T_{\max} \;, $$
as desired.
\end{proof}

We next need to show that the pruning step does not throw away too many points:
\begin{lemma} \label{lem:prune-reasonable}
We have that: $\Pr_{X \sim D} \left[m(X)^T \Sigma^{-1} m(X) \geq T_{\max}^2/2\right] \leq \eps/10$.
If $S$ is any set of points satisfying Condition (i) of Definition~\ref{def:good-sample}, then 
$\Pr_{X \in_u S}\left[m(X)^T \Sigma^{-1} m(X) \geq T_{\max}^2/2 \right] \leq \eps/5$. 
\end{lemma}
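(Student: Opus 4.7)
The key identity is the decomposition
$$m(X)^T \Sigma^{-1} m(X) = \sum_{i=1}^{\ell} p_i(X)^2, \qquad p_i(x) := e_i^T \Sigma^{-1/2} m(x),$$
which expresses the quantity of interest as a sum of squares of $\ell$ degree-$d$ polynomials. The first thing I would do is bound $\|p_i\|_2$. From Condition~(ii) of Definition~\ref{def:reasonable}, $\E_{X\sim D}[m(X)m(X)^T] \preceq \Sigma/(1-\gamma)$, so
$$\|p_i\|_2^2 = e_i^T \Sigma^{-1/2}\, \E_{X\sim D}[m(X)m(X)^T]\, \Sigma^{-1/2} e_i \;\le\; \frac{1}{1-\gamma}.$$
Hence $\tilde p_i := \sqrt{1-\gamma}\, p_i$ satisfies $\|\tilde p_i\|_2 \le 1$, which is the normalization required both by Condition~(i) of the reasonable distribution and by Condition~(i) of the good-sample definition.

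For the first claim, if $m(X)^T \Sigma^{-1} m(X) \geq T_{\max}^2/2$, then by pigeonhole some $p_i(X)^2 \geq T_{\max}^2/(2\ell)$, equivalently $|\tilde p_i(X)| \geq T_{\max}\sqrt{(1-\gamma)/(2\ell)}$. For $\gamma$ sufficiently small (smaller than a universal constant, which we may assume), this threshold is at least $T_{\max}/(2\sqrt{\ell})$. Applying the tail bound $Q_d$ from Condition~(i) of Definition~\ref{def:reasonable} to each normalized $\tilde p_i$ and using the condition $Q_d(T_{\max}/(2\sqrt{\ell})) \leq \eps/(10\ell)$ from Condition~(iv), a union bound over $i \in [\ell]$ yields
$$\Pr_{X \sim D}\!\left[m(X)^T \Sigma^{-1} m(X) \geq T_{\max}^2/2\right] \;\leq\; \ell \cdot \frac{\eps}{10\ell} \;=\; \frac{\eps}{10}.$$

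For the second claim, I would transfer the above bound from $D$ to $S$ via Condition~(i) of Definition~\ref{def:good-sample}. Applying that condition to each $\tilde p_i$ and to $-\tilde p_i$ (both of which are normalized) gives the two-sided estimate
$$\Bigl|\Pr_{X\in_u S}[|\tilde p_i(X)| > T] - \Pr_{X\sim D}[|\tilde p_i(X)| > T]\Bigr| \;\leq\; \frac{2\eps}{10 T_{\max}^2}$$
for every $T > 0$. Summing over $i \in [\ell]$ as before and using $T_{\max}^2 \geq \ell$ from Condition~(iv),
$$\Pr_{X\in_u S}\!\left[m(X)^T \Sigma^{-1} m(X) \geq T_{\max}^2/2\right] \;\leq\; \frac{\eps}{10} + \ell \cdot \frac{\eps}{5 T_{\max}^2} \;\leq\; \frac{3\eps}{10},$$
which matches the desired bound up to an absolute constant (and meets the stated $\eps/5$ bound under the mild strengthening $T_{\max}^2 \geq 2\ell$, which is consistent with the way $T_{\max}$ is chosen in all later applications).

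\textbf{Main obstacle.} The argument itself is elementary once one sees the sum-of-squares decomposition; the only real subtlety is the constant factors, namely ensuring that the decomposition threshold $T_{\max}/\sqrt{2\ell}$ is comparable to the threshold $T_{\max}/(2\sqrt{\ell})$ appearing in Condition~(iv) (which requires $\gamma$ to be at most a small absolute constant) and that the two-sided discrepancy contribution $\ell \cdot \eps/(5T_{\max}^2)$ does not swamp the target bound (which requires $T_{\max}^2 \gtrsim \ell$, exactly the content of Condition~(iv)). Everything else is routine bookkeeping.
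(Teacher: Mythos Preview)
Your argument is correct, and it is a cleaner route than the one the paper takes. The identity $m(X)^T\Sigma^{-1}m(X)=\sum_{i=1}^\ell (e_i^T\Sigma^{-1/2}m(X))^2$ reduces the first claim to a pigeonhole step plus a union bound over the $\ell$ (approximately normalized) polynomials $p_i=e_i^T\Sigma^{-1/2}m$, exactly as you write; the threshold comparison $T_{\max}\sqrt{(1-\gamma)/(2\ell)}\ge T_{\max}/(2\sqrt{\ell})$ indeed only needs $\gamma\le 1/2$.

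The paper instead works with an orthonormal basis $p_1,\ldots,p_\ell$ under $D$ (so $\|p_i\|_2=1$ exactly), and to pass from $m(x)^T\Sigma^{-1}m(x)\ge T_{\max}^2/2$ to ``some $|p_i(x)|$ is large'' it introduces the auxiliary polynomial $p(y)=m(x)^T\Sigma^{-1}m(y)/\|\Sigma^{-1/2}m(x)\|_2$, checks that $p(x)\ge T_{\max}/\sqrt{2}$ while $\|p\|_2^2\le 1/(1-\gamma)$, expands $p$ in the $p_i$ basis, and uses the $\ell_1$--$\ell_\infty$ bound $|p(x)|\le\|a\|_1\max_i|p_i(x)|$ to force $\max_i|p_i(x)|\ge T_{\max}/(2\sqrt{\ell})$. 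Your pigeonhole on the sum of squares reaches the same containment of events without the auxiliary polynomial or the H\"older step. The only cost of your choice of basis is that the $p_i$ are not exactly unit-norm, which forces the rescaling by $\sqrt{1-\gamma}$ and, in the second claim, the two-sided use of Condition~(i); this is why you land on $3\eps/10$ rather than $\eps/5$ under $T_{\max}^2\ge\ell$. As you correctly observe, this gap is immaterial for the downstream analysis and vanishes once $T_{\max}^2\ge 2\ell$, which holds in every concrete instantiation later in the paper.
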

\begin{proof}
Let $p_1(x),\dots, p_\ell(x)$ be an orthonormal basis 
for the set of all polynomials of degree at most $d$ 
under the inner product $\E_{X \sim D}[p(X)q(X)]$ for polynomials $p(x), q(x)$.

From the definition of $T_{\max}$, we have $Q_d\left(T_{\max}/2\sqrt{\ell}\right) \leq \eps/(10 \ell)$.
Thus, the probability that for a given $i$, $1 \leq i \leq \ell$, 
it holds $|p_i(X)| \geq T_{\max}/(2\sqrt{\ell})$ for $X \sim D$ is at most 
$\eps/(10 \ell)$. By our assumption on $S$, the probability of the same event under $S$ is at most 
$\eps/(10 \ell) + \eps/(10 T_{\max}^2) \leq \eps/(5 \ell).$
By a union bound, the event that there exists $i$, $1\leq i \leq \ell$, 
such that $|p_i(X)| \geq T_{\max}/(2 \sqrt{\ell})$, 
for some $1 \leq i \leq \ell$, has probability at most $\eps/10$ under $ X \sim D$ and at most $\eps/5$ under $X \in_u S$.

Now fix an $x \in \R^n$ with $m(x)^T \Sigma^{-1} m(x) \geq T_{\max}^2/2$. 
Consider the polynomial $$p(y)=m(x)^T \Sigma^{-1} m(y)/ \|\Sigma^{-1/2} m(x) \|_2 \;.$$ 
Then we have that $p(x)= \|\Sigma^{-1/2} m(x) \|_2 \geq T_{\max}/\sqrt{2}$, and that
$$\E_{X \sim D}[p(X)^2] = m(x)^T \Sigma^{-1}  \E_{X \sim D}\left[m(X) m(X)^T\right] \Sigma^{-1} m(x)/\|\Sigma^{-1/2} m(x) \|_2^2 \;.$$
Recall that, by our assumption on $\Sigma$, we have
$(1+\gamma)^{-1}\Sigma \preceq \E_{X \sim D}[m(X) m(X)^T] \preceq (1-\gamma)^{-1}\Sigma$, 
and thus we have
$$1/(1+\gamma)\leq \E_{X \sim D}[p(X)^2] \leq 1/(1-\gamma).$$
Thus, we can write $p(x)=\sum_{i=1}^{\ell} a_i p_i(x)$, 
where $\|a\|_2 \leq 1/\sqrt{(1-\gamma)}$, and therefore $\|a\|_1 \leq \sqrt{\ell}/\sqrt{(1-\gamma)}$. 
If all $p_i$'s have $|p_i(x)| \leq T_{\max}/(2\sqrt{\ell})$, 
then we would have 
$$|p(x)| \leq \sqrt{\ell}/\sqrt{(1-\gamma)} \cdot T_{\max}/(2\sqrt{\ell})  < T_{\max}/(2 \sqrt{1-\gamma}) < T_{\max}/\sqrt{2} \;.$$
Since $p(x) \geq T_{\max}/\sqrt{2}$, one of these conditions must fail. 
However, we argued that this event happens with appropriately bounded probabilities
under both $S$ and $D$. This completes the proof.
\end{proof}

Now we can show that a large enough set of samples drawn from $D$ is $(\eps, f)$-good with high probability.
\begin{lemma} \label{lem:good-sample}
With probability $9/10$, if $S$ is a set of $\Omega(n^d T_{\max}^4/\eps^2)$ samples from $D$, 
then $S$ is $(\eps, f)$-good. 
\end{lemma}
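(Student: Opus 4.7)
The plan is to verify the three conditions of Definition~\ref{def:good-sample} separately by combining VC-based uniform convergence arguments for (i) and (ii) with a basis expansion plus Hoeffding's inequality for (iii), and then taking a union bound.

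For condition (i), consider the class $\mathcal{F}_1 = \{\, x \mapsto \mathbb{1}[p(x) > T] \mid p \text{ a polynomial of degree} \leq d,\ T \in \R\,\}$. Each element corresponds to a degree-$d$ polynomial threshold function in $n$ variables (viewing $T$ as absorbed into the constant term), so its VC dimension is $O(\ell) = O(n^d)$. The standard VC uniform convergence inequality gives, with probability $1-\tfrac{1}{30}$, a uniform deviation of $O(\sqrt{V \log m/m})$ between empirical and true probabilities. Plugging $V = O(n^d)$, $m = \Omega(n^d T_{\max}^4/\eps^2)$ (with suitably large constant) yields a uniform deviation of at most $\eps/(10 T_{\max}^2)$, which is exactly condition (i). Condition (ii) is obtained similarly: the prune predicate $\{x : m(x)^T\Sigma^{-1}m(x) \leq T_{\max}^2/2\}$ is a \emph{single} fixed set, so the class $\mathcal{F}_2 = \{\, x \mapsto \mathbb{1}[p(x) > T]\cdot \mathbb{1}[\mathrm{prune}(x)]\,\}$ still has VC dimension $O(n^d)$. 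Uniform convergence on $\mathcal{F}_2$ controls $\Pr_S[p>T,\mathrm{prune}]$ around $\Pr_D[p>T,\mathrm{prune}]$, and Lemma~\ref{lem:prune-reasonable} ensures both $\Pr[\mathrm{prune}]$ values are within a small constant of $1$, so division to get conditional probabilities only rescales the error by at most a constant factor. Adjusting the sample-size constant absorbs this factor.

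For condition (iii), I would use an orthonormal basis trick. Let $p_1,\dots,p_\ell$ be an orthonormal basis for degree-$\leq d$ polynomials under $D$, and for $\|p\|_2 = 1$ write $p = \sum_i a_i p_i$ with $\|a\|_2 = 1$. By linearity,
\[
\E_{X\sim D\mid \mathrm{prune}}[p(X)f(X)] - \E_{X\in_u S_{\mathrm{prune}}}[p(X)f(X)] = \sum_i a_i\bigl(\mu_i - \hat\mu_i\bigr),
\]
where $\mu_i, \hat\mu_i$ are the true and empirical conditional expectations of $p_i(X)f(X)$. By Lemma~\ref{lem:bound-all-polys}, on the pruned set each $|p_i(x)f(x)| \leq T_{\max}$. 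Hoeffding's inequality on the unconditional estimate $\E[p_i f\,\mathbb{1}_{\mathrm{prune}}]$, combined with a union bound over the $\ell = O(n^d)$ basis elements, gives $|\hat\mu_i - \mu_i| \leq \eps/\sqrt{\ell}$ simultaneously for all $i$ with failure probability at most $\tfrac{1}{30}$, using $m = \Omega(T_{\max}^2 \ell \log \ell/\eps^2) = \tilde\Omega(n^d T_{\max}^2/\eps^2)$ samples, which is well within our budget. Here I would first control the unconditional $\E[p_i f \mathbb{1}_{\mathrm{prune}}]$ (bounded by $T_{\max}$ on every sample by definition of $S_{\mathrm{prune}}$) and then divide by the empirical/true prune probabilities; Lemma~\ref{lem:prune-reasonable} and condition (i) (just established) together show these differ by $O(\eps)$ and are both close to $1$, introducing only a constant-factor error blow-up. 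Then Cauchy--Schwarz gives
\[
\Bigl|\sum_i a_i(\hat\mu_i - \mu_i)\Bigr| \leq \|a\|_2\cdot \sqrt{\sum_i (\hat\mu_i - \mu_i)^2} \leq 1\cdot \sqrt{\ell\cdot \eps^2/\ell} = \eps,
\]
uniformly over all $p$ with $\|p\|_2 = 1$.

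The main subtlety, and the place I would expect to spend the most care, is the handling of the conditioning on the prune event in (ii) and (iii). The definitions use genuine conditional expectations on both the empirical and population sides, while the cleanest uniform convergence tools give unconditional statements about $\E[\cdot\,\mathbb{1}_{\mathrm{prune}}]$. Converting between the two requires knowing that $\Pr_{X\in_u S}[\mathrm{prune}]$ and $\Pr_{X\sim D}[\mathrm{prune}]$ are both close to $1$ (within $\eps/5$ and $\eps/10$ respectively, by Lemma~\ref{lem:prune-reasonable}), and bookkeeping the ratio $(\hat\mu/\hat q) - (\mu/q)$ in terms of $|\hat\mu - \mu|$ and $|\hat q - q|$. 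With both numerator-level and denominator-level control available from the earlier steps, this is a routine but slightly tedious computation; everything else is a direct application of VC bounds and Hoeffding. Finally, a union bound over the three events (each of failure probability $\leq \tfrac{1}{30}$) gives overall probability at least $9/10$.
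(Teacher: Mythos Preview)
Your proposal is correct and follows essentially the same three-part strategy as the paper: VC uniform convergence for (i) and (ii), then an orthonormal-basis expansion plus a per-coordinate concentration bound for (iii). The only notable differences are cosmetic: the paper handles the conditioning in (ii) and (iii) more directly by viewing $S_{\mathrm{prune}}$ as an i.i.d.\ sample from $D\mid\mathrm{prune}$ (rather than controlling joint probabilities and dividing), and for (iii) it uses Bernstein's inequality, exploiting that $\var_{D\mid\mathrm{prune}}[p_i(X)f(X)] \leq 1+\eps$, to get per-coordinate error $\eps/T_{\max}$ with failure probability $\exp(-\Omega(T_{\max}^2))$ --- your Hoeffding-based bound is looser but still comfortably within the stated sample budget.
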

\begin{proof}
To establish condition (i), we note that
the VC-dimension of the set of degree-$d$ PTFs is $O(n^d)$.
So, by the VC-inequality~\cite{DL:01}, with probability $99/100$, 
we have that 
$$\left| \Pr_{X \in_u S} [p(X) > T] - \Pr_{X \sim D}[p(X) > T] \right| \leq \eps/(10T_{\max}^2)$$
for all degree at most $d$ polynomials $p(x)$, and all $T \in \R$.
We henceforth condition on this event.

We now proceed to establish that Condition (ii) is satisfied.
Lemma \ref{lem:prune-reasonable} gives that, for $X \sim D$, 
we have $m(X)^T \Sigma^{-1} m(X) \geq T_{\max}^2/2$ 
with probability at most $\eps/10$. 
By our conditioning, the set $S$ satisfies Condition (i). 
Thus, Lemma \ref{lem:prune-reasonable} also implies that for $X \in_u S$,
we have that $m(X)^T \Sigma^{-1} m(X) \geq T_{\max}^2/2$
with probability at most $\eps/5$. 
Thus, $S_{\mathrm{prune}}$ contains $\Omega( n^d T_{\max}^4/\eps^2)$ samples 
that can be considered as being drawn from $D|\mathrm{prune}$. 
Condition (ii) now follows from the same argument as (i)
with probability at least $99/100$.

For Condition (iii), note that there exists a set of polynomials 
$p_1(x),\dots, p_\ell(x)$ that give an orthonormal basis 
for the set of all polynomials of degree at most $d$ under the inner product 
$\E_{X \sim D}[p(X)q(X)]$ for polynomials $p(x), q(x)$. 
Note that for any $i$, $1 \leq i \leq \ell$, we have 
$$\var_{X \sim D|\mathrm{prune}}[p_i(X) f(X)] \leq \var_{X \sim D}[p_i(X) f(X)]/(1-\eps/10) 
\leq (1+\eps) \E_{X \sim D} [p_i(X)^2 f(X)^2] \leq 1 + \eps \;,$$
where we used the fact that the range of $f$ is $[-1, 1]$.
Fix $i$, $1 \leq i \leq \ell$.
By Bernstein's inequality, since we have 
$|S_{\mathrm{prune}}| = \Omega(T_{\max}^4/\eps^2)$, we get that 
\begin{align*}
&\Pr\left[\left| \E_{X \in_u S_{\mathrm{prune}}}[p_i(X) f(X)] -  \E_{X \sim D|\mathrm{prune}}[p_i(X)f(X)] \right|\geq (\eps/T_{\max}) \right] \\
&= \Pr\left[ \left| S_{\mathrm{prune}} \right| \left| \E_{X \in_u S_{\mathrm{prune}}}[p_i(X) f(X)] 
- \E_{X \sim D|\mathrm{prune}}[p_i(X)f(X)] \right|\geq  \left|S_{\mathrm{prune}}\right| (\eps/T_{\max}) \right] \\
&\leq \exp\left(- \frac{(1/2) (|S_{\mathrm{prune}}| \eps/T_{\max})^2}{|S_{\mathrm{prune}}| (1+\eps) + (1/3) T_{\max} \cdot |S_{\mathrm{prune}}| (\eps/T_{\max})}\right) \\
& = \exp\left(-\frac{(1/2)|S_{\mathrm{prune}}| \eps^2}{T_{\max}^2 (1+4\eps/3)} \right) \\
& = \exp\left(-\Omega\left(T_{\max}^2 \right)\right) \\ 
&\leq 1/(100T_{\max}^2) \\ 
& \leq 1/(100\ell) \;.
 \end{align*}
By a union bound, we get that for all $i$, $1 \leq i \leq \ell$, it holds  
$$\left| \E_{X \in_u S_{\mathrm{prune}}}[p_i(X) f(X)] - \E_{X \sim D|\mathrm{prune}}[p_i(X)f(X)] \right| \leq \eps/T_{\max} \leq \eps/\sqrt{\ell} \;,$$ 
with probability at least $99/100$.
We condition on this event.
Note that any polynomial $p(x)$ with $\|p\|_2=1$ can be written as 
$p(x)=\sum_{i=1}^{\ell} a_i p_i(x)$, where $\|a\|_2=1$, 
and so $\|a\|_1 \leq \sqrt{\ell}$. 
Thus, any such $p(x)$ has 
$$\left| \E_{X \in_u S_{\mathrm{prune}}}[p_i(X) f(X)] - \E_{X \sim D|\mathrm{prune}}[p_i(X)f(X)] \right| 
\leq \sum_{i=1}^{\ell} |a_i| \eps/\sqrt{\ell} \leq \eps \;.$$
By a union bound, all the above $99/100$-probability events 
hold with probability at least $9/10$. This completes the proof.
\end{proof}

Now we can analyze the main loop of the algorithm. 
We either have that the empirical distribution has moments that well approximate 
those of $D$ or else the algorithm produces a filter that improves $S'$. 
Let $\Delta(G,S')$ be the size of the symmetric difference between $G$ and $S'$.
Then, it suffices to show that a single iteration satisfies the following:

\begin{proposition} \label{prop:loop}
If we run the main loop of the algorithm above on a set $S'$ 
of samples such that $\Delta(G, S') \leq 3\eps$ for some $\eps$-good set $G$, 
then either (a) we have that $\E_{X \in_u S'}[p(X)^2] \leq 1 + O(\gamma+\delta+\eps)$, 
for all polynomials $p(x)$ with degree at most $d$ that have $\|p\|_2=1$,
or else (b) the loop gives a set $S'' \subset S'$ with $\Delta(G,S'') \leq \Delta(G,S') - \eps/(10T_{\max}^2)$. 
\end{proposition}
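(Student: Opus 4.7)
The plan is to analyze the top eigenvalue $\lambda^*$ of $M - I$ and split into the two cases stated in the proposition. For case (a), suppose $\lambda^* \leq O(\gamma+\delta+\eps)$. Any degree-$d$ polynomial $p$ with $\|p\|_2 = 1$ can be written as $p(x) = v^T \Sigma^{-1/2} m(x)$, and the sandwich bound on $\Sigma$ forces $\|v\|_2^2 \leq 1+\gamma$. Then
\[
\E_{X \in_u S'}[p(X)^2] = v^T M v \leq (1+\lambda^*)\|v\|_2^2 \leq (1+\gamma)\bigl(1+O(\gamma+\delta+\eps)\bigr) = 1 + O(\gamma+\delta+\eps),
\]
which is exactly conclusion (a).

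For case (b), assume $\lambda^* > C(\gamma+\delta+\eps)$ for a sufficiently large constant $C$. Two things must be established: that the threshold $T$ sought by the algorithm exists, and that filtering at this $T$ strictly improves $\Delta(G, S')$. For existence of $T$, start from $\E_{X \in_u S'}[(p^*)^2] = (v^*)^T M v^* = 1 + \lambda^*$. By Lemma~\ref{lem:bound-all-polys}, the pre-pruning step in Step~1 guarantees $|p^*(x)| \leq T_{\max}$ for every $x \in S'$, so the layer-cake identity gives
\[
1+\lambda^* = \int_0^{T_{\max}} 2t \,\Pr_{X \in_u S'}\bigl[|p^*(X)| \geq t\bigr]\, dt.
\]
If no $T$ satisfied the algorithm's condition, the integrand would be pointwise bounded by $2t \cdot \min\bigl(1,\, 4Q_d(t) + 3\eps/T_{\max}^2\bigr)$. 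The $3\eps/T_{\max}^2$ summand integrates to at most $3\eps$. For the $4Q_d(t)$ term, partition the range according to whether $Q_d(t) \geq \eps$: in the former regime use the trivial bound $1$ alongside the upper limit on this range forced by the definition of $T_{\max}$; in the latter use $\min(\eps,Q_d(t)) = Q_d(t)$ together with the hypothesis $\delta \geq \int_0^\infty T \min(\eps,Q_d(T))\,dT$ to bound the contribution by $O(\delta)$. This yields $\lambda^* \leq O(\gamma+\delta+\eps)$, contradicting the case (b) assumption.

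For the filter's progress, condition~(ii) of Definition~\ref{def:good-sample} gives that the empirical tail of $|p^*|$ on $G_{\mathrm{prune}}$ lies within $\eps/(10T_{\max}^2)$ of $\Pr_{X \sim D|\mathrm{prune}}[|p^*(X)| \geq T] \leq Q_d(T)/(1-\eps/10)$. Since $G \cap S' \subseteq G_{\mathrm{prune}}$, the number of good points removed by the filter is at most $|G|\bigl[(1+O(\eps))Q_d(T) + \eps/(10T_{\max}^2)\bigr]$. Meanwhile, the algorithm removes at least $|S'|\bigl(4Q_d(T) + 3\eps/T_{\max}^2\bigr)$ total points, and $|S'| \geq (1-3\eps)|G|$, so the excess of bad removals over good removals is at least $\Omega\bigl(|G|\eps/T_{\max}^2\bigr)$. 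Using the bookkeeping identity $\Delta(G,S'') = \Delta(G,S') + (\text{good removed}) - (\text{bad removed})$, this yields the claimed decrease.

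The main obstacle is the integration argument for existence of $T$: the small-$t$ regime where $Q_d(t)$ is close to $1$ must be glued to the large-$t$ regime in a way that extracts the bound $O(\gamma+\delta+\eps)$, and it is precisely here that the constants $4$ and $3\eps/T_{\max}^2$ in the algorithm's tail condition, together with the $T_{\max}$ cutoff, are calibrated to make everything work. The remaining pieces are either linear algebra (case (a)) or direct bookkeeping (filter progress).
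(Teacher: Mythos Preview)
Your case (a) argument is correct and matches the paper's. Your filter-progress bookkeeping in case (b) is also essentially right. The gap is in your layer-cake argument for the \emph{existence} of the threshold $T$.

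You write $1+\lambda^* = \int_0^{T_{\max}} 2t\,\Pr_{S'}[|p^*|\geq t]\,dt$ and then, assuming no valid $T$ exists, bound the integrand pointwise by $2t\bigl(4Q_d(t)+3\eps/T_{\max}^2\bigr)$. The $3\eps/T_{\max}^2$ term integrates to $3\eps$, fine. But $\int_0^{T_{\max}} 2t\cdot 4Q_d(t)\,dt$ is \emph{not} close to $1$: $Q_d$ is only an upper bound on the true tail, and there is no assumption that $\int 2t\,Q_d(t)\,dt = 1 + o(1)$. For instance, with $Q_d(t)=\exp(-ct^{2/d})$ this integral is a constant $\Theta_d(1/c)$, not $1$. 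Your splitting at $Q_d(t)=\eps$ does not help: on the regime $Q_d(t)\geq\eps$ you propose to use the trivial bound $1$, but that regime extends out to $t_0\sim(\log(1/\eps))^{d/2}$ and contributes $t_0^2$, which is far from $O(\gamma+\delta+\eps)$. The parameter $\delta$ controls only $\int t\min(\eps,Q_d(t))\,dt$, i.e., the tail of $Q_d$ \emph{clipped at level $\eps$}; it says nothing about the bulk of $Q_d$.

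What is missing is the decomposition $S' = S\cup E\setminus L$ with $S = G_{\mathrm{prune}}$. On the clean pruned set $S$, goodness condition (ii) together with the prune cutoff lets you prove directly that $\E_S[p^2] = 1 + O(\delta+\eps)$ (this is the paper's Lemma~\ref{lem:S-bound}) --- here you use the \emph{actual} distribution's second moment, not $Q_d$. Then the ``no filter'' hypothesis, combined with $|E|/|S'|\leq 3\eps$, gives $(|E|/|S'|)\Pr_E[|p^*|\geq t]\leq\min\{3\eps,4Q_d(t)+3\eps/T_{\max}^2\}$, and \emph{now} the layer-cake integral is controlled by $\delta$ because the $\min$ with $3\eps$ is present. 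Assembling via $|S'|\E_{S'} = |S|\E_S + |E|\E_E - |L|\E_L$ yields $\E_{S'}[(p^*)^2]\leq 1+O(\gamma+\delta+\eps)$, the desired contradiction. The point is that the $Q_d$-based tail argument should only be applied to the $O(\eps)$-mass corrupted pieces $E$ and $L$, not to the whole of $S'$.
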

\begin{proof}
The case when we exit the loop is simple. 
For every polynomial $p(x)$ with degree at most $d$ that has $\|p\|_2=1$, 
there is a  vector $v$ such that $p(x) =v^T \Sigma^{-1/2} m(x)$. 
Thus, we have  
$$1= \E_{X \sim D}[p(X)^2] = v^T \Sigma^{-1/2}  \E_{X \sim D}\left[m(X) m(X)^T\right] \Sigma^{-1/2} v \;.$$
Recalling that 
$$\Sigma^{-1/2}  \E_{X \sim D }[m(X) m(X)^T] \Sigma^{-1/2} \geq (1+\gamma)^{-1} I \;,$$ 
we deduce that $\|v\|_2^2 \leq 1 + \gamma$.

For any polynomial $p(x) = v^T \Sigma^{-1/2} m(x)$, we can write:
\begin{align*}
\E_{X \in_u S'}[p(X)^2] -1 
&= v^T \Sigma^{-1/2}  \E_{X \in_u S'}\left[m(X) m(X)^T\right] \Sigma^{-1/2} v - 1 \\
& = v^T M v - 1 \\
&\leq (1+\gamma) (v^{\ast})^T M v^{\ast} -1 =  \\
&\leq (1+\gamma)(1+\lambda^{\ast}) - 1 \\
& =  O(\gamma + \lambda^{\ast})\;.
\end{align*}
So, when $\new{\lambda^{\ast}} \leq O(\gamma+\delta+\eps)$, 
we have $\E_{X \in_u S'}[p(X)^2] \leq 1+ O(\gamma+ \delta+\eps)$ for all such $p(x)$.

\medskip

\noindent It remains to show that the algorithm produces a filter with the desired properties 
when $\new{\lambda^{\ast}} \geq \Omega(\gamma+\delta+\eps).$
Note that 
$$\|p^{\ast}\|_2^2 = (v^{\ast})^T \Sigma^{-1/2}  \E_{X \sim D} \left[ m(X) m(X)^T \right] \Sigma^{-1/2} v^{\ast} \;,$$ 
and so $(1+\gamma)^{-1} \leq \|p^{\ast}\|_2^2  \leq (1-\gamma)^{-1}$. 
On the other hand, we have $\E_{X \sim_u S'}[p^{\ast}(x)^2]= 1 + O(\gamma+\lambda^{\ast})$. 
We show that this is only possible when $\E_{X}[p^{\ast}(X)^2]$ 
is bigger under $S'$ than under $D$,  
and that under these circumstances, we there exists a valid threshold for our filter. 

Let $S$ be the subset of $G$ that contains the points $x$ satisfying 
$m(x)^T \Sigma^{-1} m(x) \leq T_{\max}^2 /2$. 
Then, we write $S' = S \cup E \setminus L$ for disjoint $E$ and $L$.
Thus, we have
\begin{equation} \label{eq:an-eq}
|S'| \cdot \E_{X \in_u S'}\left[p^{\ast}(X)^2\right] = |S| \cdot \E_{X \in_u S}\left[p^{\ast}(X)^2\right]
+ |E| \cdot \E_{X \in_u E}\left[p^{\ast}(X)^2\right] - |L| \cdot \E_{X \in_u L}\left[p^{\ast}(X)^2\right] \;.
\end{equation}

We start with the following simple lemma:

\begin{lemma} \label{lem:S-bound}
For all polynomials $p(x)$ with degree at most $d$ and $\|p\|_2=1$, 
we have $|\E_{X \in_u S}[p(X)^2] -1| \leq O(\eps+ \delta)$. 
\end{lemma}
\begin{proof}
We first show that conditioning on the pruning step 
does not change $\E[p(X)^2]$ much. Note that, using Lemma~\ref{lem:prune-reasonable}, 
we have $\dtv(D, D||\mathrm{prune}) \leq \eps/10$. We can write:
\begin{align*}
\left| \E_{X \sim D} \left[p(X)^2\right] -\E_{X \sim D|\mathrm{prune}} \left[p(X)^2\right] \right| 
&= \left| \int_{T=0}^{\infty} T \left( \Pr_{X \sim D|\mathrm{prune}} \left[|p(X)| > T\right] - \Pr_{X \sim D} \left[|p(X)| > T\right] \right) dT \right| \\
&\leq \int_{T=0}^{\infty} T \min \left\{ \eps/10,  Q_d(T) /(1-\eps/10) \right\} dT \\ 
&\leq 2 \delta \;.
\end{align*}
On pruned samples $x$, we have that $|p(x)| \leq  T_{\max}$ by Lemma \ref{lem:bound-all-polys}, 
and therefore
\begin{align*}
\left| \E_{X \in_u S}[p(X)^2] -\E_{X \sim D|\mathrm{prune}}[p(X)^2] \right| 
&=  \left| \int_{T=0}^{\infty} T \left( \Pr_{X \in_u S} [|p(X)| > T] - \Pr_{X \sim D|\mathrm{prune}}[|p(X)| > T] \right) dT \right| \\
& = \left| \int_{T=0}^{T_{\max}} T \left( \Pr_{X \in_u S}[|p(X)| > T] - \Pr_{X \sim D|\mathrm{prune}}[|p(X)| > T] \right) dT \right| \\
& \leq  \left| \int_{T=0}^{T_{\max}} T (\eps/T_{\max}^2) dT \right|  \\
&= \eps/2 \;,
\end{align*}
where we used that the set $S$ is the pruned set satisfying Condition (ii) of Definition~\ref{def:good-sample}.
The triangle inequality now gives that 
$$\left| \E_{X \in_u S}[p(X)^2] -1 \right|  = \left| \E_{X \in_u S}[p(X)^2] - \E_{X \sim D}[p(X)^2] \right|  \leq 2\delta + \eps/2 \;.$$
This completes the proof.
\end{proof}

\noindent 
We now show that the contribution of the set $L$ to the expectation of $p^2$ is small:

\begin{lemma} \label{lem:L-bound}
For all polynomials $p$ of degree at most $d$ with $\|p\|_2=1$, 
we have $|L| \cdot \E_{X \in_u L}[p(X)^2] \leq O(\delta + \eps) \cdot |S|$. 
\end{lemma}
\begin{proof}
Since $L \subset S$, for any event $A$, we have that 
$|L| \cdot \Pr_L[A] \leq |S| \cdot \Pr_S[A]$, and therefore 
$$\Pr_L[A] \leq \min \left\{1, (|S| / |L|) \cdot \Pr_S[A] \right\} \;.$$ 
Thus, we have the following sequence of inequalities:
\begin{align*}
(|L|/|S|) \cdot \E_{X \in_u L}[p(X)^2] 
&= \int_{T=0}^{T_{\max}} T (|L|/|S|) \Pr_{X \in_u L} \left[|p(X)| > T\right] dT \\
&\leq   \int_{T=0}^{T_{\max}} T \cdot \min \left\{ |L|/|S|, \Pr_{X \in_u S}\left[|p(X)| > T\right] \right\} dT \\
& \leq \int_{T=0}^{T_{\max}} T  \cdot \min \left\{ 3\eps, \Pr_{X \sim  D|\mathrm{prune}}\left[|p(X)| > T\right] + \eps/T_{\max}^2 \right\} dT \\
& \leq \int_{T=0}^{T_{\max}} T \cdot \min \left\{ 3\eps, (1+\eps) \cdot \Pr_{X \sim  D}\left[|p(X)| > T\right] + \eps/T_{\max}^2 \right\} dT \\
& \leq \int_{T=0}^{T_{\max}} T \cdot \min \left\{ 3\eps, (1+\eps) Q_d(T) \right\} dT  
+  \int_{T=0}^{T_{\max}} T (\eps/T_{\max}^2) dT \\
& \leq 3 \delta + \eps/2 \;.
\end{align*}
This completes the proof.
\end{proof}

As an immediate corollary, we obtain:
\begin{corollary} 
For all polynomials $p$ of degree at most $d$ with $\|p\|_2=1$, we have that 
$\E_{X \in_u S'}[p(X)^2] \geq 1 - O(\eps + \delta)$. 
\end{corollary}
\begin{proof} 
This follows from the equation for $\E_{X \in_u S'}[p(X)^2]$ 
similar to (\ref{eq:an-eq}), using Lemmas \ref{lem:L-bound} and \ref{lem:S-bound}, 
and the fact that $|E| \cdot \E_{X \in_u E}[p(X)^2] > 0$. 
\end{proof}

Our goal is to show that our algorithm will indeed find a filter in this case, i.e, 
there exists $T>0$ such that $\Pr_{X \in_u S'} \left[ |p^{\ast}(X)| \geq T \right] \geq 4Q_d(T) + 3\eps/T_{\max}^2$.
We will show this by contradiction using the following intermediate lemma:
\begin{lemma} \label{lem:if-no-filter}
If for all $T>0$, we have that 
$\Pr_{X \in_u S'}\left[ |p^{\ast}(X)| \geq T \right] \leq 4Q_d(T)+3\eps/T_{\max}^2$, 
then we have $|E| \cdot \E_{X \in_u E}[p^{\ast}(X)^2] \leq O(\gamma+\delta+\eps) \cdot |S'|$. 
\end{lemma}
\begin{proof}
Since $E \subset S'$, it follows that 
\begin{align*}
(|E|/|S'|) \cdot \Pr_{X \in_u E} [|p^{\ast}(X)| > T] 
&\leq \min \{ |E|/|S'|, \Pr_{X \in_u S'}\left[|p^{\ast}(X)| > T\right] \} \\
& \leq \min \left\{ 3\eps, 4Q_d(T)+3\eps/T_{\max}^2 \right\} \;.
\end{align*}
Since $\|p^{\ast}\|_2^2 \leq 1 + O(\gamma)$, 
by a similar proof to that in Lemma \ref{lem:L-bound} above, 
we have that
$$|E| \cdot \E_{X \in_u E}[p^{\ast}(X)^2] \leq O(\gamma+\delta+\eps) |S'| \;.$$
\end{proof}

Now we are ready to show that we do find a filter:

\begin{lemma} 
If $\lambda^{\ast} \geq \Omega(\gamma + \delta + \eps)$, 
then there exists a $T>0$ with  
$\Pr_{X \in_u S'}[|p^{\ast}(X)| \geq T] \geq 4Q_d(T) + 3\eps/T_{\max}^2$. 
\end{lemma}
\begin{proof}
We show the contrapositive. Suppose that there is no such $T$, 
then by Lemma \ref{lem:if-no-filter} we get that
$$|E| \cdot \E_{X \in_u E} [p^{\ast}(X)^2] \leq  O(\gamma+\delta+\eps) |S'| \;.$$
Now recall that $\|p^{\ast}\|_2^2 \leq 1 + O(\gamma)$. 
We can apply Lemma~\ref{lem:S-bound} to $p^{\ast}(x)/\|p^{\ast}\|_2$ 
to obtain
$$|S| \cdot \E_{X \in_u S}[p^{\ast}(X)^2] 
\leq (1+O(\gamma)) (1 + O(\delta+\eps)) |S| \;.$$
Using equation (\ref{eq:an-eq}) 
and the fact that $|L| \cdot \E_{X \in_u L}[p^{\ast}(X)^2] \geq 0$, 
we have
$$|S'| \cdot \E_{X \in_u S'}[p^{\ast}(X)^2] 
\leq (1+O(\gamma)) (1 + O(\delta+\eps))|S| + 
O(\gamma +\delta+\eps)|S'| 
\leq |S'| (1+O(\gamma + \delta + \eps)) \;.$$
However, this implies that 
$\lambda^{\ast} = \E_{X \in_u S'}[p^{\ast}(X)^2] -1 =  O(\gamma + \delta + \eps)$, 
yielding the desired contradiction.
\end{proof}

The algorithm thus finds a filter in this case. 
We next show that it rejects more points from $E$ than $S$, 
thus reducing $\Delta(S,S')$:

\begin{lemma} 
We have that $\Delta(S'',S) \leq \Delta(S',S) -\eps/(10T_{\max}^2)$. 
\end{lemma}
\begin{proof}
Using  the tail bound and the goodness of $S$, we obtain that
$$\Pr_{X \in_u S}\left[|p^{\ast}(x)| \geq T\right] \leq (1+\eps) Q_d(T) +  3\eps/T_{\max}^2 \;.$$
On the other hand, the filter rejects samples $x$ with $|p^{\ast}(x)| \geq T$ 
of which there are at least  $(4Q_d(T) +3\eps/T_{\max}^2)|S'|$ many in $S'$.
With appropriate choice of constant, we obtain that at least $2/3$ 
of the rejected samples are from $E$ and not $S'$. 
A similar analysis to Claim 8.12 of \cite{DKKLMS16} gives the lemma.
\end{proof}

Since neither $S''$ nor $S'$ contain any points $x$ with  
$m(x)^T \Sigma^{-1} m(x) \geq T_{\max}^2/2$, 
we also have $\Delta(S'',G) \leq \Delta(S',G) -\eps/(10T_{\max}^2)$. 
This completes the proof of Proposition \ref{prop:loop}.
\end{proof}

Now we analyze the case that we exit the loop.
Our aim is to show the following lemma:
\begin{lemma} \label{lem:chow-good} 
For any polynomial $p$ of degree at most $d$ with $\|p\|_2 \leq 1$, we have that
$$\left| \E_{X \in_u S'}[f(X)p(X)] - \E_{X \sim D}[f(X)p(X)] \right| 
\leq O(\sqrt{\eps(\gamma+\delta+\eps)}) \;.$$
\end{lemma}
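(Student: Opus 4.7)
The plan is to decompose $S'$ into the clean (pruned) part of the good set plus insertions minus deletions, bound each contribution, and combine via the triangle inequality. Concretely, let $S \subseteq G$ be the pruned good set (points $x \in G$ satisfying $m(x)^T \Sigma^{-1} m(x) \leq T_{\max}^2/2$), and write $S' = (S \cup E) \setminus L$ with $E, L$ disjoint from $S$. From the invariant $\Delta(G, S') \leq 3\eps |G|$ and the fact that pruning removes at most an $O(\eps)$-fraction of $G$ (Lemma~\ref{lem:prune-reasonable}), we have $|E|, |L| \leq O(\eps)|G|$ and $|S'| = \Theta(|G|) = \Theta(|S|)$. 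Expanding:
\begin{align*}
|S'|\bigl(\E_{X \in_u S'}[f(X)p(X)] - \E_{X \sim D}[f(X)p(X)]\bigr)
&= |S|\bigl(\E_{X \in_u S}[fp] - \E_{X \sim D}[fp]\bigr) \\
&\quad + |E| \E_{X \in_u E}[fp] - |L| \E_{X \in_u L}[fp] \\
&\quad + (|S|-|S'|) \E_{X \sim D}[fp].
\end{align*}

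The first term is bounded by $O(\eps+\delta)|S|$ using Condition (iii) of goodness together with $\dtv(D, D|\mathrm{prune}) \leq \eps/10$ and $|fp| \leq |p|$, so that $|\E_{X \sim D}[fp] - \E_{X \sim D|\mathrm{prune}}[fp]| = O(\delta)$ by the same integration-by-tails argument used in Lemma~\ref{lem:S-bound}. The last term is trivially $O(\eps)|G|$. The two remaining terms are the heart of the matter: they require estimating $|E| \E_{X \in_u E}[|p(X)|]$ and $|L| \E_{X \in_u L}[|p(X)|]$, neither of which is directly controlled by our hypotheses.

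For the deletion term, Cauchy--Schwarz plus Lemma~\ref{lem:L-bound} gives
$|L| \E_{X \in_u L}[|p|] \leq \sqrt{|L| \cdot |L| \E_{X \in_u L}[p^2]} \leq \sqrt{O(\eps)|G| \cdot O(\delta+\eps)|S|} = O(\sqrt{\eps(\delta+\eps)})|G|$. The key new step is bounding the insertion term, and this is where the loop-exit hypothesis $\E_{X \in_u S'}[p^2] \leq 1 + O(\gamma+\delta+\eps)$ plays its role. Rearranging equation~(\ref{eq:an-eq}) gives
\[
|E| \E_{X \in_u E}[p^2] = |S'| \E_{X \in_u S'}[p^2] - |S| \E_{X \in_u S}[p^2] + |L| \E_{X \in_u L}[p^2],
\]
and using $|S'| \leq (1+O(\eps))|G|$, the loop-exit bound, Lemma~\ref{lem:S-bound} (which gives $\E_{X \in_u S}[p^2] \geq 1 - O(\eps+\delta)$), and Lemma~\ref{lem:L-bound}, this yields
$|E| \E_{X \in_u E}[p^2] \leq O(\gamma+\delta+\eps)|G|$. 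One more application of Cauchy--Schwarz then gives
$|E| \E_{X \in_u E}[|p|] \leq \sqrt{|E|}\sqrt{|E|\E_{X \in_u E}[p^2]} \leq O\bigl(\sqrt{\eps(\gamma+\delta+\eps)}\bigr)|G|$.

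Summing the four contributions and dividing by $|S'| = \Theta(|G|)$ produces the claimed bound $O(\sqrt{\eps(\gamma+\delta+\eps)})$, since the $O(\eps+\delta)$ and $O(\sqrt{\eps(\delta+\eps)})$ terms are dominated. The main obstacle, as indicated above, is extracting a second-moment bound on $E$: we never see $E$ directly and only know a polynomial-wise bound on $S'$, so the decomposition (\ref{eq:an-eq}) is essential and must be combined with the \emph{lower} bound on $\E_{X \in_u S}[p^2]$ (not merely the upper bound used in the filter analysis) to cancel the $1$ from the loop-exit inequality. The rest is bookkeeping with Cauchy--Schwarz.
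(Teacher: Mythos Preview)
Your approach is essentially the paper's: the decomposition $S' = S \cup E \setminus L$, the Cauchy--Schwarz step on $E$ and $L$, and the derivation of $|E|\,\E_{X\in_u E}[p^2] \leq O(\gamma+\delta+\eps)|S'|$ from the loop-exit bound together with the \emph{lower} bound in Lemma~\ref{lem:S-bound} and Lemma~\ref{lem:L-bound} are exactly what the paper does (the paper packages the $E$ second-moment bound as a separate Lemma~\ref{lem:E-bound}). The only cosmetic difference is that the paper first bounds $|\E_{S'}[fp]-\E_S[fp]|$ and then bridges $S$ to $D$, whereas you decompose against $\E_D[fp]$ directly.

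There is one slip. You assert $|\E_{X\sim D}[fp]-\E_{X\sim D|\mathrm{prune}}[fp]| = O(\delta)$ ``by the same integration-by-tails argument as Lemma~\ref{lem:S-bound}'' and then claim the resulting $O(\eps+\delta)$ contribution is dominated by $O(\sqrt{\eps(\gamma+\delta+\eps)})$. The domination fails whenever $\delta \gg \eps$, which is the typical regime (e.g., $\delta \asymp \eps\log^d(1/\eps)$ for Gaussians with $\gamma=0$): then $\delta^2 \gg \eps\delta$, so $\delta \not\le O(\sqrt{\eps\delta})$. The integration-by-tails from Lemma~\ref{lem:S-bound} controls second moments, not first moments of the signed quantity $fp$; to get the right bound you need Cauchy--Schwarz on the pruned-away mass, as in Lemma~\ref{lem:pruning-doesnt-affect-chow}, which gives $O(\eps+\sqrt{\eps\delta})$. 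That quantity \emph{is} dominated by $O(\sqrt{\eps(\gamma+\delta+\eps)})$, and with this correction your argument goes through verbatim.
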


Since the expectations the algorithm outputs are those over $S'$, 
Lemma~\ref{lem:chow-good} implies that the linear combinations 
that give an approximation to $\E_{X \sim D}[f(X)p(X)]$ 
have this error, and so the algorithm is correct.

To prove Lemma~\ref{lem:chow-good}, we will need to show a number of
intermediate statements. Firstly, we note that the pruning step does 
not affect this expectation under $D$ much:
\begin{lemma} \label{lem:pruning-doesnt-affect-chow}
For all polynomials $p$ of degree at most $d$ and $\|p\|_2=1$, we have:
$|\E_{X \sim D}[f(X)p(X)] - \E_{X \sim D|\mathrm{prune}}[f(X)p(X)]| \leq O(\eps + \sqrt{\eps\delta})$.
\end{lemma}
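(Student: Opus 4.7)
}
Let $E$ denote the event $\{m(X)^T \Sigma^{-1} m(X) \leq T_{\max}^2/2\}$, so that $D|\mathrm{prune} = D|E$. My plan is to compare $\E_D[fp]$ and $\E_{D|E}[fp]$ directly using the decomposition
\[
\E_D[fp] \;=\; \Pr_D[E] \cdot \E_{D|E}[fp] + \E_D\!\left[fp \cdot \mathbf{1}_{\bar E}\right].
\]
Rearranging gives
\[
\E_D[fp] - \E_{D|E}[fp] \;=\; -\Pr_D[\bar E] \cdot \E_{D|E}[fp] + \E_D\!\left[fp\cdot \mathbf{1}_{\bar E}\right],
\]
so it suffices to bound the two pieces on the right.

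For the first piece, Lemma~\ref{lem:prune-reasonable} gives $\Pr_D[\bar E]\leq \eps/10$, and Cauchy--Schwarz together with $|f|\le 1$ and $\|p\|_2\le 1$ bounds $|\E_{D|E}[fp]|$ by a constant (using that $\Pr_D[E]\geq 1-\eps/10$). This contributes $O(\eps)$, which matches the first term of the claimed bound.

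The more delicate piece is $|\E_D[fp\cdot \mathbf{1}_{\bar E}]|$. Here I plan to use $|f|\leq 1$ and Cauchy--Schwarz in the form
\[
\left|\E_D\!\left[fp\cdot \mathbf{1}_{\bar E}\right]\right| \;\leq\; \E_D\!\left[|p|\cdot \mathbf{1}_{\bar E}\right] \;\leq\; \sqrt{\Pr_D[\bar E]}\cdot\sqrt{\E_D\!\left[p^2 \cdot \mathbf{1}_{\bar E}\right]},
\]
so I need to show $\E_D[p^2 \mathbf{1}_{\bar E}]\leq O(\delta)$. For this I use the layer-cake formula
\[
\E_D\!\left[p^2 \cdot \mathbf{1}_{\bar E}\right] \;=\; \int_0^{\infty} 2T\,\Pr_D\!\left[|p(X)|>T,\ \bar E\right] dT,
\]
together with the trivial bound $\Pr_D[|p(X)|>T,\bar E]\leq \min\{\Pr_D[\bar E],\Pr_D[|p(X)|>T]\}\leq \min\{\eps/10,\,Q_d(T)\}$, and compare the resulting integral to $\delta\geq \int_0^\infty T\min\{\eps,Q_d(T)\}dT$ from part (iii) of Definition~\ref{def:reasonable}. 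This yields $\E_D[p^2 \mathbf{1}_{\bar E}]\leq O(\delta)$, and combined with $\Pr_D[\bar E]\leq \eps/10$ gives $|\E_D[fp\cdot \mathbf{1}_{\bar E}]|\leq O(\sqrt{\eps\delta})$.

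Adding the two contributions yields the claimed $O(\eps+\sqrt{\eps\delta})$ bound. The only step that is not an immediate application of Cauchy--Schwarz or the definition of conditional expectation is the layer-cake bound on $\E_D[p^2\mathbf{1}_{\bar E}]$; I expect this to be straightforward given that the tail parameter $\delta$ is tailor-made for exactly this kind of truncated-second-moment estimate, so no serious obstacle is anticipated.
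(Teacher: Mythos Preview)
Your proposal is correct and follows essentially the same approach as the paper: decompose via the law of total expectation, bound the ``surviving'' term by $O(\eps)$ using Cauchy--Schwarz and Lemma~\ref{lem:prune-reasonable}, and bound the ``pruned-out'' contribution by Cauchy--Schwarz together with a layer-cake estimate showing $\E_D[p^2\mathbf{1}_{\bar E}]\leq O(\delta)$ from Definition~\ref{def:reasonable}(iii). Your presentation of the layer-cake step (bounding $\Pr_D[|p|>T,\bar E]\leq\min\{\eps/10,Q_d(T)\}$ directly) is in fact slightly cleaner than the paper's, which obtains the same bound via the identity $\Pr_D[\bar E]\,\E_{D|\bar E}[p^2]=\E_D[p^2]-\Pr_D[E]\,\E_{D|E}[p^2]$ before applying layer-cake.
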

\begin{proof}
We have that 
$$\E_{X \sim D}[f(X)p(X)]= 
(1-\Pr_D[\mathrm{prune}]) \E_{X \sim D|\mathrm{prune}}[f(X)p(X)] 
+ \Pr_D[\mathrm{prune}] \E_{X \sim D|\neg \mathrm{prune}}[f(X)p(X)] \;.$$
Thus, we can write:
\begin{align*}
& \left| \E_{X \sim D}[f(X)p(X)] - \E_{X \sim D|\mathrm{prune}}[f(X)p(X)] \right| \\
&=  \left| \Pr_D[\mathrm{prune}] \E_{X \sim D|\mathrm{prune}}[f(X)p(X)] + 
\Pr_D[\mathrm{prune}] \E_{X \sim D|\neg \mathrm{prune}}[f(X)p(X)] \right| \\
&\leq \Pr_D[\mathrm{prune}] \sqrt{\E_{X \sim D|\mathrm{prune}}[p(X)^2]} 
+ \Pr_D[\mathrm{prune}] \sqrt{\E_{X \sim D|\neg \mathrm{prune}}[p(X)^2]} \\
& \leq \Pr_D[\mathrm{prune}] / (1-\Pr_D[\mathrm{prune}]) 
 + \Pr_D[\mathrm{prune}] \sqrt{\E_{X \sim D|\neg \mathrm{prune}}[p(X)^2]} \\
& \leq \eps + \Pr_D[\mathrm{prune}] \sqrt{\E_{X \sim D|\neg \mathrm{prune}}[p(X)^2]} \;.
\end{align*}
We need a bound on this last term, which we obtain as follows:
\begin{align*}
\Pr_D[\mathrm{prune}] \E_{X \sim D|\neg \mathrm{prune}}[p(X)^2] 
&= \E_{X \sim D}[p(X)^2] -\left(1-\Pr_D[\mathrm{prune}]\right) \E_{X \sim D|\mathrm{prune}}[p(X)^2] \\
&= \int_{T=0}^{\infty} T \cdot \left( \Pr_{X \sim D}\left[|p(X)| >T\right] 
- \left(1-\Pr_D[\mathrm{prune}]\right) \Pr_{X \sim D|\mathrm{prune}}\left[|p(X)| >T\right] \right) dT \\
&\leq  \int_{T=0}^{\infty} T \cdot \min \left\{ O(\Pr_D[\mathrm{prune}]), O(Q_d(T)) \right\} dT \\
& \leq O(\delta) \;.
\end{align*}
This gives that  $\Pr_D[\mathrm{prune}] \sqrt{\E_{X \sim D|\neg \mathrm{prune}}[p(X)^2]} = O(\sqrt{\eps\delta})$, 
as required.
\end{proof}

Finally, we can bound from above the contribution of the set $E$ to the expectation of $p^2$ when the algorithm terminates

\begin{lemma} \label{lem:E-bound} 
If $S'=S \cup E \setminus L$ is the final set of samples when the algorithm terminates, then
for all polynomials $p$ of degree at most $d$ and $\|p\|_2=1$, 
we have $|E| \cdot \E_{X \in_u E}[p(X)^2] \leq O(\gamma + \delta+\eps) \cdot |S'|$. 
\end{lemma}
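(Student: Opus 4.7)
The plan is to isolate $|E| \cdot \E_{X \in_u E}[p(X)^2]$ via the decomposition analogous to equation (\ref{eq:an-eq}), namely
$$|E| \cdot \E_{X \in_u E}[p(X)^2] = |S'| \cdot \E_{X \in_u S'}[p(X)^2] - |S| \cdot \E_{X \in_u S}[p(X)^2] + |L| \cdot \E_{X \in_u L}[p(X)^2],$$
and then control each of the three terms on the right-hand side using results already established.

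First, since the algorithm has just exited the main loop, the test in step~(c) guarantees $\lambda^{\ast} \leq O(\gamma+\delta+\eps)$, and the proof of Proposition~\ref{prop:loop} then gives $\E_{X \in_u S'}[p(X)^2] \leq 1 + O(\gamma+\delta+\eps)$ for every normalized degree-$d$ polynomial $p$. This handles the first term. Second, Lemma~\ref{lem:S-bound} applied to $p$ gives the lower bound $\E_{X \in_u S}[p(X)^2] \geq 1 - O(\delta+\eps)$, which bounds the (negative) contribution of the second term. Third, Lemma~\ref{lem:L-bound} directly bounds the last term by $O(\delta+\eps) \cdot |S|$. Combining these three ingredients yields
$$|E| \cdot \E_{X \in_u E}[p(X)^2] \leq (|S'|-|S|) + O(\gamma+\delta+\eps) \cdot (|S'|+|S|).$$

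The one subtle point — and the main obstacle — is controlling the stray quantity $|S'|-|S| = |E|-|L|$. To do this I would appeal to the invariant, maintained throughout the main loop, that $\Delta(G,S') \leq 3\eps \cdot |G|$: after the initial pruning step this holds by Lemma~\ref{lem:prune-reasonable} (losing only an additional $\eps/5$ mass of good points beyond the original $\eps$ corruption), and each iteration of the filter only decreases $\Delta(G,S')$ by Proposition~\ref{prop:loop}. Since $S \subseteq G$ with $|G\setminus S|$ small and $|S'|, |G| = (1\pm O(\eps))|S'|$, we obtain $|E| \leq \Delta(G,S') = O(\eps) \cdot |S'|$, so in particular $|S'|-|S| \leq |E| \leq O(\eps) \cdot |S'|$. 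Substituting this into the previous display and observing $|S| \leq |S'| + |L| \leq (1+O(\eps))|S'|$ yields the desired bound $|E| \cdot \E_{X \in_u E}[p(X)^2] \leq O(\gamma+\delta+\eps) \cdot |S'|$, completing the proof.
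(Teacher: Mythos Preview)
Your proof is correct and follows essentially the same approach as the paper: the same decomposition of $|E|\cdot\E_{X\in_u E}[p(X)^2]$, the same three ingredients (Proposition~\ref{prop:loop}, Lemma~\ref{lem:S-bound}, Lemma~\ref{lem:L-bound}), and the same final step of bounding $||S'|-|S||$ via the $O(\eps)$ symmetric-difference invariant. If anything, you are slightly more careful than the paper in justifying $|E|\leq\Delta(G,S')$ and $|L|\leq\Delta(G,S')$, whereas the paper simply cites $\Delta(S',S)\leq 2\eps$ without elaboration.
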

\begin{proof}
Proposition~\ref{prop:loop} gives that 
$\left| \E_{X \in_u S'}[p(X)^2] -1 \right| \leq O(\gamma+\delta+\eps)$, Lemma \ref{lem:L-bound} gives that 
$|L| \cdot \E_{X \in_u L}[p(X)^2] \leq  O(\delta + \eps) \cdot |S|$, 
and Lemma \ref{lem:S-bound} gives $\E_{X \in_u S}[p(X)^2] \geq 1 - O(\eps+ \delta)$.
Thus, we have
\begin{align*}
|E| \cdot \E_{X \in_u E}[p(X)^2] 
& = |S'| \cdot \E_{X \in_u S'}[p(X)^2] + |L| \cdot \E_{X \in_u L}[p(X)^2] - |S| \cdot \E_{X \in_u S}[p(X)^2] \\
& \leq |S'| \cdot (1+ O(\gamma+\delta+\eps)) + |S| \cdot O(\delta+\eps) - |S| \cdot (1-O(\delta+\eps)) \\
& \leq  \left| |S'| - |S| \right|+ \left( |S| + |S'| \right)  \cdot O(\gamma + \delta+\eps) \\
&\leq O(\gamma + \delta+\eps) \cdot |S'| \;,
\end{align*}
recalling that $\Delta(S',S) \leq 2 \eps$.
\end{proof}

\noindent We are now ready to prove Lemma~\ref{lem:chow-good}.

\begin{proof}[Proof of Lemma \ref{lem:chow-good}:]
We have the following sequence of inequalities:
\begin{align*}
&|S'| \cdot  \left| \E_{X \in_u S}[p(X)f(X)] - \E_{X \in_u S'}[p(X)f(X)] \right| \\ 
&= \left| (|S'|-|S|) \cdot \E_{X \in_u S}[p(X)f(X)] + |L| \cdot \E_{X \in_u L}[p(X)f(X)] - |E| \cdot \E_{X \in_u E}[p(X)f(X)] \right| \\
& \leq \left| |S'|-|S| \right| \cdot \left| \E_{X \in_u S}[p(X)f(X)] \right| + |L| \cdot \left| \E_{X \in_u L}[p(X)f(X)] \right| + |E| \cdot \left| \E_{X \in_u E}[p(X)f(X)] \right| \\
& \leq \left| |S'|-|S| \right| \sqrt{\E_{X \in_u S}[p(X)^2]} + |L| \cdot \sqrt{\E_{X \in_u L}[p(X)^2]} + |E| \cdot \sqrt{\E_{X \in_u E}[p(X)^2]} \\
& \leq O(\eps|S'|) \cdot \sqrt{1+O(\delta+\eps)} + O(|S|\sqrt{\eps \cdot (\delta+\eps)}) + O(|S'|\sqrt{\eps \cdot (\gamma + \delta+\eps)}) \\
& \leq O(\sqrt{\eps(\gamma+\delta+\eps)})  \cdot |S'| \;,
\end{align*}
where the penultimate line uses Lemmas~\ref{lem:S-bound},~\ref{lem:L-bound}, and~\ref{lem:E-bound}.
\end{proof}

\subsection{Application of Generic Result to Tame Distributions} \label{ssec:concrete}

In this section, we show that a number of well-behaved distributions over $\R^n$
are reasonable, i.e., satisfy Definition~\ref{def:reasonable} with good parameters.
As a consequence, we obtain efficient robust estimators of the low-degree Chow parameters
for the corresponding distributions. In all cases, 
the robust estimators are obtained from Proposition~\ref{prop:generic-chow}
by plugging in the appropriate values of the parameters.

\paragraph{Standard Gaussian Distribution and Uniform Distribution over the HyperCube.}

For the standard $n$-dimensional Gaussian distribution $N(0, I)$ and the uniform
distribution $U_n$ over $\{\pm1\}^n$, we obtain the following corollary:

\begin{theorem} \label{thm:robust-chow-gaussian-uniform}
Let $f: \R^n \to [-1, 1]$ be a bounded function. 
There is an algorithm which, given $d \in \Z_+$, $\eps > 0$, 
and a set $S$ of $\eps$-corrupted labelled samples from either $D = N(0, I)$ or $D = U_n$
of size $\tilde{O}(n^3d/\eps^2)$, runs in $\poly(n^d, 1/\eps)$ time, 
and with probability at least $9/10$ outputs approximations of 
$\E[f(X) m_i(x)]$ for all monomials $m_i(x)$ of degree at most $d$ 
such that for any degree-$d$ normalized polynomial $p(x)$, 
the approximation of $\E_{X \sim D}\left[ f(X) p(X) \right]$ 
given by the corresponding linear combination of these expectations has error
at most $\sqrt{d} \eps \cdot O(d+\log(1/\eps))^{d/2}$. 
\end{theorem}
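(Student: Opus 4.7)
My plan is to obtain Theorem~\ref{thm:robust-chow-gaussian-uniform} as a direct corollary of Proposition~\ref{prop:generic-chow} by verifying that both $N(0,I)$ and $U_n$ are \emph{reasonable} distributions in the sense of Definition~\ref{def:reasonable} with concrete values of the parameters $Q_d, \Sigma, \delta, T_{\max}$, and then substituting those values into the generic guarantee. Concretely, I would (i) establish a tail bound for all normalized degree-$d$ polynomials, (ii) observe that $\Sigma$ can be taken \emph{exactly} (so $\gamma = 0$), (iii) integrate the tail to get $\delta$, and (iv) choose $T_{\max}$ so that the fourth condition of Definition~\ref{def:reasonable} holds. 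The main input is concentration; everything else is bookkeeping.

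For the tail bound, I would invoke the hypercontractive inequality for degree-$d$ polynomials. Under $N(0,I)$ this is the Gaussian hypercontractivity bound, and under $U_n$ it is the Bonami--Beckner inequality; both yield $\|p\|_q \leq (q-1)^{d/2}\|p\|_2$ for any degree-$d$ polynomial $p$. Combined with Markov's inequality applied at an optimal $q$, this gives a tail of the form
\[
\Pr_{X\sim D}\bigl[|p(X)|\geq T\bigr] \;\leq\; Q_d(T) \;\eqdef\; \exp\!\bigl(-\Omega(T^{2/d}/d)\bigr)
\]
for every normalized $p$, which is exactly condition (i). For condition (ii), the matrix $\E_{X\sim D}[m(X)m(X)^T]$ is an explicit function of the (known) moments of $D$; for $N(0,I)$ these are the Isserlis/Hermite moments and for $U_n$ they are determined by parity, so I can set $\Sigma = \E[m(X)m(X)^T]$ exactly and take $\gamma = 0$.

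Next I would compute $\delta$. Splitting the integral at the threshold $T^\star$ where $Q_d(T^\star) = \eps$, so that $T^\star = O\bigl((d\log(1/\eps))^{d/2}\bigr)$, and bounding each piece by a standard tail integral gives
\[
\int_0^\infty T\,\min\{\eps, Q_d(T)\}\,dT \;=\; O\!\left(\eps \cdot (d + \log(1/\eps))^d\right),
\]
so I may take $\delta = O(\eps (d+\log(1/\eps))^d)$. For condition (iv), I need $Q_d(T_{\max}/(2\sqrt{\ell})) \leq \eps/(10\ell)$ with $\ell = O(n^d)$; solving using the tail gives $T_{\max} = O\!\bigl(\sqrt{n^d}\,(d+\log(n/\eps))^{d/2}\bigr)$, so that $T_{\max}^4 = \tilde O(n^{2d})$ and the sample complexity $\Theta(n^d T_{\max}^4/\eps^2)$ from Proposition~\ref{prop:generic-chow} collapses to $\tilde O(n^{3d}/\eps^2)$ as claimed.

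Finally, Proposition~\ref{prop:generic-chow} produces approximations of $\E_{X\sim D}[f(X)m_i(X)]$ such that for any normalized degree-$d$ polynomial $p$ the induced estimate of $\E_{X\sim D}[f(X)p(X)]$ has error $O(\sqrt{\eps(\gamma+\delta+\eps)})$; plugging in $\gamma=0$ and $\delta = O(\eps(d+\log(1/\eps))^d)$ yields
\[
O\!\bigl(\sqrt{\eps \cdot \eps(d+\log(1/\eps))^d}\bigr) \;=\; \eps \cdot O(d+\log(1/\eps))^{d/2},
\]
which, after absorbing the $\sqrt{\ell}$-style factor picked up from passing between monomials and an orthonormal polynomial basis, matches the claimed $\sqrt{d}\,\eps\cdot O(d+\log(1/\eps))^{d/2}$ bound. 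The main technical obstacle is really only making the hypercontractive tail bound quantitative enough that the dependence on $d$ in $Q_d$, $\delta$, and $T_{\max}$ combines to exactly the stated exponents; once that is done, the theorem is immediate from Proposition~\ref{prop:generic-chow}.
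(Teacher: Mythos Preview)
Your proposal is correct and follows essentially the same route as the paper: verify that $N(0,I)$ and $U_n$ are reasonable via the hypercontractive tail bound $Q_d(T)=\exp(-\Omega(T^{2/d}))$, take $\gamma=0$ since the moment matrix is computable exactly, integrate to get $\delta$ and choose $T_{\max}$, and then plug into Proposition~\ref{prop:generic-chow}. One small correction: the extra $\sqrt{d}$ in the final error does not come from a basis-change $\sqrt{\ell}$ factor but rather from the integral for $\delta$ itself, which the paper computes (via an explicit tail-integral claim) as $\delta = O\bigl(d\,(d+\log(1/\eps))^d\,\eps\bigr)$; your $\delta$ is missing that leading $d$, and once you include it the $\sqrt{d}$ falls out of $\sqrt{\eps\delta}$ directly.
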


Theorem~\ref{thm:robust-chow-gaussian-uniform} follows immediately from Proposition~\ref{prop:generic-chow}
via the following standard concentration inequality:

\begin{fact} \label{thm:deg-d-chernoff}
Let $p: \R^n \to \R$ be a real degree-$d$ polynomial. 
Let $x$ be drawn from $D$,  where $D$ is either $N(0, I)$ or $U_n$. Then, for any
$T > e^d$, we have that $\Pr_{X \sim D} \left[ |p(X)| \geq T \|p\|_2 \right] \leq \exp(-\Omega(T^{2/d}))$.
\end{fact}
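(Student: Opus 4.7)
The plan is to derive this tail bound from the well-known hypercontractive inequality (Bonami--Beckner for the hypercube and its Gaussian analog due to Nelson), which applies uniformly to both of the distributions under consideration. Specifically, for any real polynomial $p$ of degree at most $d$ and any $q \ge 2$,
\[
\|p(X)\|_q \;=\; \E_{X\sim D}\bigl[|p(X)|^{q}\bigr]^{1/q} \;\le\; (q-1)^{d/2}\,\|p(X)\|_{2},
\]
when $D$ is $N(0,I)$ or $U_n$. I would invoke this as a black box rather than reproving it.

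After normalizing so that $\|p\|_2 = 1$, I would simply apply Markov's inequality to $|p(X)|^{q}$:
\[
\Pr\bigl[\,|p(X)| \ge T\,\bigr] \;=\; \Pr\bigl[\,|p(X)|^{q} \ge T^{q}\,\bigr] \;\le\; \frac{\E[|p(X)|^{q}]}{T^{q}} \;\le\; \left(\frac{(q-1)^{d/2}}{T}\right)^{\!q}.
\]
The only remaining task is to choose $q$ so as to make the right-hand side as small as possible subject to the constraint $q \ge 2$ required by hypercontractivity. Setting $q = T^{2/d}/e + 1$ makes $(q-1)^{d/2}/T$ equal to $e^{-d/2}$, whereupon the bound collapses to $\exp(-qd/2) \le \exp(-d\,T^{2/d}/(2e))$, which is certainly $\exp(-\Omega(T^{2/d}))$. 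The hypothesis $T > e^{d}$ is used here to guarantee $q \ge 2$ (in fact $T \ge e^{d/2}$ would suffice, and $T > e^d$ provides a comfortable margin).

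The only real content is the hypercontractivity inequality itself; everything after that is the standard ``Markov on a high moment + optimize $q$'' recipe. I do not foresee a nontrivial obstacle, although one bookkeeping point worth double-checking is that both references of hypercontractivity (Gaussian and Rademacher) give exactly the constant $(q-1)^{d/2}$ so that the two cases can be handled by a single calculation; this is indeed the form recorded in, e.g., O'Donnell's book. If one wanted a self-contained proof of hypercontractivity, the main work would be the standard two-point inequality plus tensorization, but that is outside what this fact needs.
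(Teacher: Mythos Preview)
Your proof is correct; this is the standard hypercontractivity-plus-Markov derivation. The paper does not prove this statement at all---it is stated as a \texttt{Fact} (a standard concentration inequality) and invoked as a black box---so there is no proof to compare against.
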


Finally, we note that similar bounds can be obtained for balanced product distributions over the hypercube, 
i.e., product distributions in which each coordinate is not too-biased towards $-1$ or $1$.

\paragraph{Log-concave Probability Distributions with Approximately Known Moments.}
A distribution over $\R^n$ with pdf $D$ is called log-concave if the function $\ln D$ is concave.
For log-concave distributions whose moments are approximately known, we obtain the following 
corollary of Proposition~\ref{prop:generic-chow}:

\begin{theorem} \label{thm:robust-chow-logconcave}
Let $f: \R^n \to  [-1, 1]$ be a bounded function.
There is an algorithm which, given $d \in \Z_+$, $\eps > 0$, and a set $S$ of $\eps$-corrupted 
labelled samples from a log-concave distribution $D$ over $\R^n$ 
with known moments of degree up to $2d$, 
of size $\tilde{O}(n^{3d}/\eps^2)$, runs in $\poly(n^d, 1/\eps)$ time, 
and with probability at least $9/10$ outputs approximations of 
$\E[f(X) m_i(x)]$ for all monomials $m_i(x)$ of degree at most $d$ 
such that for any normalized degree-$d$ polynomial $p(x)$, 
the approximation of $\E[f(X) p(x)]$ given by the corresponding linear combination 
of these expectations has error at most $\eps \cdot O(d+\log(1/\eps))^{d}$. 
\end{theorem}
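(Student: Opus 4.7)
The plan is to invoke Proposition~\ref{prop:generic-chow} directly; all we need to do is verify that an isotropic log-concave distribution $D$ with known moments of degree up to $2d$ is \emph{reasonable} in the sense of Definition~\ref{def:reasonable} with parameters that yield the claimed final accuracy and sample complexity. First, for condition (ii) we note that every entry of $\E_{X \sim D}[m(X)m(X)^T]$ is an expectation of a monomial of degree at most $2d$, so knowledge of the degree-$\leq 2d$ moments of $D$ gives us $\Sigma$ \emph{exactly}; in particular we may take $\gamma = 0$.

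Next, for condition (i), we use the Carbery--Wright anticoncentration/tail inequality, which for any log-concave $D$ on $\R^n$ and any degree-$d$ polynomial $p$ with $\|p\|_2 \leq 1$ gives the sub-exponential tail
\[
Q_d(T) \;\eqdef\; \Pr_{X \sim D}\bigl[|p(X)| \geq T\bigr] \;\leq\; \exp\!\bigl(-\Omega(T^{1/d})\bigr).
\]
From this bound we compute the parameter $\delta$ in condition (iii): splitting the integral at the threshold $T^\ast = \Theta(\log(1/\eps))^d$ above which $Q_d(T) \leq \eps$, we get
\[
\int_0^\infty T\, \min\{\eps, Q_d(T)\}\, dT \;\leq\; \tfrac{1}{2}\eps (T^\ast)^2 + \int_{T^\ast}^\infty T \exp(-\Omega(T^{1/d}))\, dT \;=\; O\!\bigl(\eps \cdot (d + \log(1/\eps))^{2d}\bigr),
\]
so we may set $\delta$ of this order. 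For condition (iv), with $\ell = O(n^d)$ the inequality $Q_d(T_{\max}/2\sqrt{\ell}) \leq \eps/(10\ell)$ can be satisfied by taking $T_{\max} = \tilde O\!\bigl(n^{d/2} (d + \log(n/\eps))^d\bigr)$.

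Plugging these parameters into Proposition~\ref{prop:generic-chow}, the error for any normalized degree-$d$ polynomial is
\[
O\!\bigl(\sqrt{\eps(\gamma + \delta + \eps)}\bigr) \;=\; O\!\bigl(\sqrt{\eps \cdot \eps (d+\log(1/\eps))^{2d}}\bigr) \;=\; \eps \cdot O(d + \log(1/\eps))^{d},
\]
matching the claimed bound. The sample complexity $\Theta(n^d T_{\max}^4/\eps^2)$ from Proposition~\ref{prop:generic-chow} becomes $\tilde O(n^{3d}/\eps^2)$ after absorbing the polylogarithmic factors into $\tilde O(\cdot)$, and the runtime is polynomial in the sample size.

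The only substantive ingredient is the Carbery--Wright tail bound, which is the tool that both controls $\delta$ and fixes the form of $T_{\max}$; everything else is bookkeeping. The main thing to watch is that the $(d+\log(1/\eps))^{2d}$ factor inside $\delta$ really does come out as $(d+\log(1/\eps))^{d}$ after the square root in Proposition~\ref{prop:generic-chow}, which is what produces the final $\eps \cdot O(d+\log(1/\eps))^{d}$ error rather than the $\eps \cdot O(d+\log(1/\eps))^{d/2}$ error obtained in the Gaussian/hypercube case (where the tail is doubly better, $\exp(-\Omega(T^{2/d}))$ rather than $\exp(-\Omega(T^{1/d}))$).
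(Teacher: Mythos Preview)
Your proposal is correct and follows essentially the same route as the paper: verify that a log-concave distribution with known moments up to degree $2d$ is reasonable (Definition~\ref{def:reasonable}) with $\gamma=0$, $Q_d(T)=\exp(-\Omega(T^{1/d}))$ from Fact~\ref{thm:deg-d-chernoff-log-concave}, $\delta=O(\eps(d+\log(1/\eps))^{2d})$, and $T_{\max}=\tilde O(n^{d/2}\log^d(n/\eps))$, and then plug into Proposition~\ref{prop:generic-chow}. One cosmetic slip: you wrote ``isotropic log-concave,'' but neither the statement nor your argument uses isotropy---the known $2d$ moments already give $\Sigma$ exactly.
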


Theorem~\ref{thm:robust-chow-logconcave} can be deduced from Proposition~\ref{prop:generic-chow}
via the following standard concentration inequality (see, e.g., Theorem 7 of \cite{CW:01}).

\begin{fact} \label{thm:deg-d-chernoff-log-concave}
Let $p:\R^n \to \R$ be a real degree-$d$ polynomial. 
Let $X$ be drawn from a log-concave distribution $D$ over $\R^n$. Then, for any
$T > e^d$, we have that $\Pr_{X \sim D}[ |p(X)| \geq T \|p\|_2 ]\leq \exp(-\Omega(T^{1/d}))$.
\end{fact}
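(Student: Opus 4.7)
The plan is to deduce the stated tail bound from a standard $L^q$-moment estimate for degree-$d$ polynomials under log-concave measures, followed by a one-line application of Markov's inequality. The key ingredient, which is the content of Theorem~7 of~\cite{CW:01} and the real substance of the claim, is the moment inequality
$$\left( \E_{X \sim D}\left[|p(X)|^q\right] \right)^{1/q} \leq (Cq)^{d} \cdot \|p\|_2,$$
valid for every $q \geq 2$, every degree-$d$ polynomial $p:\R^n \to \R$, every log-concave probability distribution $D$ on $\R^n$, and some absolute constant $C>0$. The exponent $d$ here (rather than $d/2$, as appears under $N(0,I)$) reflects the fact that one-dimensional marginals of log-concave measures have only sub-exponential tails, not sub-Gaussian ones, so a single factor of $q$ is ``spent'' per degree.

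Granted this moment bound, the tail estimate is immediate. Fix $T > e^d$ and a parameter $q \geq 2$ to be optimized. Markov's inequality applied to $|p(X)|^q$ gives
$$\Pr_{X \sim D}\left[|p(X)| \geq T\|p\|_2\right] \leq \frac{\E[|p(X)|^q]}{(T\|p\|_2)^q} \leq \left(\frac{(Cq)^d}{T}\right)^q = \exp\bigl(q\,(d\ln(Cq)-\ln T)\bigr).$$
Choosing $q = \lfloor T^{1/d}/(eC) \rfloor$, which is at least $2$ since $T > e^d$ and $C$ is absolute, makes $d\ln(Cq) \leq \ln T - d$, so the exponent is bounded by $-dq = -\Omega(T^{1/d})$. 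This yields the claimed tail bound.

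The only substantive step is therefore the moment inequality itself, which is not elementary and is the main obstacle. The standard argument reduces to the one-dimensional case by pushing $D$ forward along appropriate linear functionals: linear images of log-concave measures are log-concave, and one-dimensional log-concave densities decay at least exponentially after normalization. Carbery and Wright then combine these one-dimensional tail estimates with their anti-concentration lemma for polynomials via an induction on degree to obtain the uniform-in-$n$ moment bound above. Since this input is an off-the-shelf result from~\cite{CW:01}, the Markov-plus-optimization step above suffices to convert it into the concentration form we need; the remaining bookkeeping is just ensuring that the optimization in $q$ picks up exactly one factor of $T^{1/d}$ in the exponent, which is what distinguishes this bound from the $\exp(-\Omega(T^{2/d}))$ available in the Gaussian regime of Fact~\ref{thm:deg-d-chernoff}.
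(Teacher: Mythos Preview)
Your proposal is correct and follows the same route the paper implicitly takes: the paper states this as a Fact with a bare citation to Theorem~7 of \cite{CW:01}, and you simply unpack that citation by quoting the $L^q$ moment comparison $\|p\|_q \le (Cq)^d\|p\|_2$ and running the standard Markov-plus-optimize-in-$q$ argument. One tiny quibble: the assertion that $q=\lfloor T^{1/d}/(eC)\rfloor\ge 2$ whenever $T>e^d$ is not literally true for all absolute $C$, but for $T$ in the bounded range $e^d<T\le (2eC)^d$ the claimed bound $\exp(-\Omega(T^{1/d}))$ is a constant and holds trivially, so this is harmless.
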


\medskip

We now provide the details.
Note that $N(0, I)$ and $U_n$ satisfy Definition~\ref{def:reasonable} with 
$Q_d(T) = \exp(-\Omega(T^{2/d}))$ and $\gamma = 0$.
Indeed, for $D = N(0, I)$ we can calculate $\Sigma = \E_{X \sim D}[m(X)m^T(X)]$ exactly.

For $D = U_n$, observe that we only need to consider multilinear moments, in which case
it is easy to see that $\Sigma=I$: any monomial $m_i(x)$ takes values in $\{-1,1 \}$, 
so $\E_{X \sim D}[m_i(X)^2]=1$, and given two distinct monomials $m_i(x)$, $m_j(x)$, 
one of them contains a coordinate not appearing in the other.  
Thus, $\E_{X \sim D}[m_i(X)m_j(X)]=0$.
Because of the discrete setting, we can do better than the tail bound given by Fact~\ref{thm:deg-d-chernoff} for large thresholds. 
Since the samples are bounded, we can take $Q_d(T)=0$ for large enough $T$. 
In fact, we can see that $m(x)^T \Sigma^{-1} m(x)=\ell$ for all $x \in \{ \pm 1 \}^n$. 
This means that we can skip the pruning step entirely, 
and take $T_{\max}=\sqrt{\ell}$, 
since by Lemma \ref{lem:bound-all-polys} this is a bound for all polynomials we are interested in.

Similarly, log-concave distributions with known degree at most $2d$ moments 
satisfy Definition~\ref{def:reasonable} with 
$Q_d(T) = \exp(-\Omega(T^{1/d}))$ and $\gamma = 0$.

The following lemma completes the proof:


\begin{lemma} 
We have the following:
\begin{itemize}
\item[(i)] If $Q_d(T)= \exp(-\Omega(T^{2/d}))$, 
we can take $\delta= O(d(d+\ln(1/\eps)^d) \eps) $, $T_{\max}=O(nd\ln({n/\eps}))^{d/2}$.

\item[(ii)]  If instead $Q_d(T)= \exp(-\Omega(T^{1/d}))$, 
we can take $\delta= O((d+\ln(1/\eps)^{2d}) \eps)$, $T_{\max}=O(nd^2 \ln^2({n/\eps}))^{d/2}$.
\end{itemize}
\end{lemma}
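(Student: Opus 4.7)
The plan is to verify both parts by explicit calculation, splitting the $\delta$-integral at the crossover point where the two terms in the $\min$ are equal, and then solving a simple inequality to determine $T_{\max}$.

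For each case, I would first let $T_0$ be the threshold at which $Q_d(T_0) = \eps$. For $T \leq T_0$ the integrand $T \min\{\eps, Q_d(T)\}$ equals $T \eps$, contributing $\eps T_0^2 / 2$ to the integral. For $T \geq T_0$ the integrand is $T Q_d(T)$, which I would bound by substituting to reduce to an upper incomplete gamma integral and applying the standard estimate $\Gamma(k, u_0) = O(u_0^{k-1} e^{-u_0})$ for $u_0$ at least a moderate multiple of $k$. In case (i) with $Q_d(T) = \exp(-\Omega(T^{2/d}))$, the crossover is at $T_0 = O((d + \ln(1/\eps))^{d/2})$; the substitution $u = c T^{2/d}$ turns $\int_{T_0}^\infty T Q_d(T) dT$ into $O_d(\Gamma(d, \ln(1/\eps))) = O_d(\eps (\ln(1/\eps))^{d-1})$, which is dominated by the first piece $\eps T_0^2 /2 = O_d(\eps (d+\ln(1/\eps))^{d})$. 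In case (ii) with $Q_d(T) = \exp(-\Omega(T^{1/d}))$, the crossover sits at $T_0 = O((d + \ln(1/\eps))^{d})$ and the substitution $u = c T^{1/d}$ yields an incomplete $\Gamma(2d, \cdot)$, giving $O_d(\eps(d+\ln(1/\eps))^{2d-1})$ for the tail, again dominated by $\eps T_0^2/2 = O_d(\eps (d+\ln(1/\eps))^{2d})$.

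For $T_{\max}$, recall $\ell \leq (n+1)^d$, so $\ln(10\ell/\eps) = O(d\ln(n/\eps))$. In case (i), solving $\exp(-\Omega((T_{\max}/(2\sqrt{\ell}))^{2/d})) \leq \eps/(10\ell)$ gives $T_{\max}/(2\sqrt\ell) = O((d\ln(n/\eps))^{d/2})$, hence $T_{\max} = O(\sqrt\ell \cdot (d\ln(n/\eps))^{d/2}) = O((nd\ln(n/\eps))^{d/2})$, which exceeds $\sqrt\ell$ as required. In case (ii), the analogous inequality gives $T_{\max}/(2\sqrt\ell) = O((d\ln(n/\eps))^d)$, and hence $T_{\max} = O((nd^2\ln^2(n/\eps))^{d/2})$.

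None of the steps is really an obstacle; the only mildly delicate point is being careful about the implicit $d$-dependence hidden in the $\Omega(\cdot)$ inside $Q_d$, and in bounding $\Gamma(k, u_0)$ uniformly when $u_0 = \ln(1/\eps)$ may not dominate $k = d$ or $k = 2d$. This is handled by replacing $\ln(1/\eps)$ with $d + \ln(1/\eps)$ throughout (so that the incomplete gamma tail estimate is valid in the right regime), which is exactly how the final bounds are stated. The matching lower bound $T_{\max} \geq \sqrt\ell$ required in Definition~\ref{def:reasonable}(iv) is automatic from the formulas above.
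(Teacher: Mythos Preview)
Your proposal is correct and follows essentially the same approach as the paper: split the $\delta$-integral at the crossover point and bound each piece separately, then solve the defining inequality for $T_{\max}$. The only cosmetic difference is that the paper invokes a cited claim (Claim~\ref{clm:damn-integral}) for the tail integral $\int_{T_0}^\infty T\,Q_d(T)\,dT$, whereas you reduce it explicitly to an incomplete gamma function and apply the standard estimate $\Gamma(k,u_0)=O(u_0^{k-1}e^{-u_0})$; these are the same computation.
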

\begin{proof}

We have $Q_d(T) = \eps/\sqrt{\ell}$, 
when $T= O(\ln(\sqrt{\ell/\eps}))^{d/2}$. 
So, for (i), we can take $T_{\max} = \sqrt{\ell} \cdot O(\ln(\sqrt{\ell/\eps}))^{d/2} = O(nd\ln(n/\eps))^{d/2}$, 
since $\ell \leq n^d$ for $n > 1$.
For (ii), we obtain $T_{\max} = \sqrt{\ell} \cdot O(\ln(\sqrt{\ell/\eps}))^{d} = O(nd^2\ln^2(n/\eps))^{d/2}$.

\new{Next, we obtain the bound on $\delta$ for (i).}
To get a bound on $\delta$, we will need the following technical claim:
\begin{claim}[see, e.g., Claim 7.18 from~\cite{DiakonikolasKS16c}] \label{clm:damn-integral}
For any $R > 0$, $d \in \Z_+, \eps > 0$,
and $\exp(-(a/R)^{2/d}) = \eps$, we have
$$\int_a^{\infty} \exp(-(T/R)^{2/d}) T dT \leq (d^2/2) \eps (d+\ln(1/\eps))^{d-1} \;.$$
\end{claim}

For some $R$, we have
\begin{align*}
\int_{T=0}^\infty T \min \{ \eps, Q_d(T)  \} dT = \delta & \geq \int_{T=0}^\infty T \min \{ \eps, \exp(-(T/R)^{2/d})  \} dT \\
& = \int_{T=0}^{R\ln(1/\eps)^{d/2}} \eps T dT + \int_{T=R\ln(1/\eps)^{d/2}}^{\infty} T   \exp(-(T/R)^{2/d}) dT \\
& \leq \eps R^2 \ln(1/\eps)^{d}/2 + (d^2/2) \eps (d+\ln(1/\eps))^{d-1} \\
& = O(d(d+\ln(1/\eps)^d) \eps) \;.
\end{align*}
Thus, we can take $\delta= O(d(d+\ln(1/\eps)^d) \eps)$.

The case when $Q_d(T)= \exp(-\Omega(T^{1/d}))$ is similar.
\end{proof}

\section{Robust Learning of Polynomial Threshold Functions under Tame Distributions} \label{sec:ptfs}
In this section, we show the following theorem, which is a detailed version of Theorem~\ref{thm:ptfs-informal}:

\begin{theorem}[Learning Low-Degree PTFs with Nasty Noise] \label{thm:ptfs-formal}
There is a polynomial-time algorithm for learning degree-$d$ PTFs in the presence of nasty noise 
with respect to $N(0, I)$ or any log-concave distribution in $\R^n$ with 
known moments of degree at most $2d$. Specifically, if $\eps$ is the noise rate, the algorithm uses 
a set of $\tilde{O}(n^{3d}/\eps^4)$ $\eps$-corrupted samples, runs
in $\poly(n^d, 1/\eps)$ time, and outputs a hypothesis degree-$d$ PTF $h(x)$ that with high probability satisfies
$\Pr_{x \sim D} [h(x) \neq f(x)] \leq \tilde{O}(d^2 \eps^{1/(d+1)})$, where $f$ is the unknown target PTF.
\end{theorem}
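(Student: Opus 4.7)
\textbf{Proof proposal for Theorem~\ref{thm:ptfs-formal}.}
The plan is a two-stage ``approximate-Chow-then-reconstruct'' scheme. Let $f = \sign(p)$ denote the unknown target degree-$d$ PTF. First, I invoke Theorem~\ref{thm:robust-chow-logconcave} (log-concave case) or Theorem~\ref{thm:robust-chow-gaussian-uniform} (Gaussian case) on the $\eps$-corrupted sample of size $\tilde O(n^{3d}/\eps^4)$, obtaining a vector $v \in \R^\ell$ that approximates the degree-$d$ Chow parameters of $f$ to within Chow distance $\delta = \eps \cdot O_d(\log(1/\eps))^{d}$ in the log-concave case (and $\eps \cdot O_d(\log(1/\eps))^{d/2}$ in the Gaussian case). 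The sample budget is chosen so that $\delta$ dominates all additive sampling errors.

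Using $v$ as a surrogate for the true Chow parameters of $f$, I next apply the reconstruction algorithm of \cite{DeDFS14}, which generalizes the \cite{TTV:09} LP/boosting approach from LTFs to degree-$d$ PTFs. In $\poly(n^d, 1/\delta)$ time it returns a \emph{proper} degree-$d$ PTF $h = \sign(q)$ whose own degree-$d$ Chow parameters under $D$ are within Chow distance $O(\delta)$ of $v$; the triangle inequality then gives $\mathrm{Chow}_d(f,h) = O(\delta)$.

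The heart of the proof is converting this Chow closeness to an $L_1$ bound with the right exponent. Normalize $p$ so that $\|p\|_2 = 1$. On the disagreement region, $(f-h)(x)\cdot p(x) = 2|p(x)|$, while it vanishes on the agreement region; hence the Chow bound yields
\[
2\,\Ex_{X\sim D}\bigl[|p(X)|\cdot \mathbf{1}_{f(X)\neq h(X)}\bigr] \;=\; \Ex_{X\sim D}\bigl[(f-h)(X)\,p(X)\bigr] \;\le\; O(\delta).
\]
For any threshold $\eta > 0$, Markov's inequality bounds the disagreement mass inside $\{|p|\ge \eta\}$ by $O(\delta/\eta)$, while Carbery--Wright anti-concentration bounds the mass inside $\{|p|\le \eta\}$ by $O(d\,\eta^{1/d})$ under log-concave $D$. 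Summing the two and optimizing with $\eta \sim (\delta/d)^{d/(d+1)}$ gives
\[
\Pr_{X\sim D}\bigl[f(X)\neq h(X)\bigr] \;\le\; \tilde O(d^2)\cdot \delta^{1/(d+1)} \;=\; \tilde O\bigl(d^2\,\eps^{1/(d+1)}\bigr),
\]
as claimed.

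The main obstacle is this third step; the first two are essentially black-box invocations of Section~\ref{sec:robust-chow} and of \cite{DeDFS14} respectively. Its only nontrivial ingredient is the Carbery--Wright anti-concentration bound for degree-$d$ polynomials under log-concave measures, a standard tool in this area. One subtle point worth flagging: it is essential that the test polynomial plugged into the Chow inner product be the defining polynomial $p$ of $f$ (so that $(f-h)\cdot p$ is nonnegative pointwise on disagreement), rather than $q$; this asymmetry is harmless since Chow distance is symmetric in $f$ and $h$. The $d^{d/(d+1)}$ loss from optimizing $\eta$, together with a $\log(1/\eps)^{d/(d+1)} = \tilde O(1)$ loss from unpacking $\delta$, absorbs into the $\tilde O(d^2)$ factor in the final bound.
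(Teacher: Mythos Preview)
Your approach is essentially the paper's own proof: robustly estimate degree-$d$ Chow parameters, feed them into the TTV/DDFS reconstruction, and convert Chow closeness to $L_1$ closeness via the ``same-sign'' observation plus Carbery--Wright. Your step 3 argument is exactly the content of the paper's Lemma~\ref{lem:Chow-to-Hamming}.

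Two points you gloss over that the paper handles explicitly. First, the TTV/DDFS procedure (Theorem~\ref{thm:chow-to-weights}) is not a pure post-processing of the vector $v$: in each of its $O(1/\xi^2)$ iterations it must estimate the Chow parameters of its current bounded hypothesis $h'$ under $D$, and you only have $\eps$-corrupted unlabeled samples from $D$. The paper resolves this by relabeling the corrupted $x$'s with the known $h'(x)$ and re-running the robust Chow estimator; this is what produces the $\tilde O(n^{3d}/\eps^4)$ sample bound (rather than $n^{3d}/\eps^2$). Treating step~2 as a black box on input $v$ alone is not quite right.

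Second, the reconstruction outputs a PBF $h = P_1(q)$, not a PTF. Your identity $(f-h)(x)\,p(x) = 2|p(x)|$ on the disagreement region requires $h \in \{\pm 1\}$; for a PBF one instead has $(f-h)\,p = |f-h|\,|p| \ge 0$ pointwise, which is how Lemma~\ref{lem:Chow-to-Hamming} is phrased. One then rounds $P_1(q)$ to $\sign(q)$ via Lemma~\ref{lem:PBF-to-PTF} at the end. This is a minor fix, but as written your step~3 does not literally apply to the object step~2 produces.
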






To prove our theorem, we need an efficient algorithm that starts with approximations to the low-degree Chow parameters
and computes approximations to the coefficients of the polynomial. 
This can be done by known techniques, as is implicit in prior work~\cite{TTV:09, DeDFS14} (see also~\cite{DDS12icalp}).

\new{
\begin{remark}
{\em For LTFs under the Gaussian distribution, there is a much simpler algorithm to post-process the approximate
Chow parameters obtained from Theorem~\ref{thm:robust-chow-gaussian-uniform} that gives a final 
$L_1$-error of $O(\eps \sqrt{\log(1/\eps)})$. See Corollary~\ref{weakLTFCor}.}
\end{remark}
}

To state the relevant result we will require to establish Theorem~\ref{thm:ptfs-formal}, 
we introduce some terminology:
The algorithm will use a variant of PTFs, which
we call \emph{polynomial bounded functions} (PBFs).  The projection
function $P_1: \R \to [-1,1]$ is defined by $P_1(t)=t$ for $|t| \leq 1$
and $P_1(t)=\sign(t)$ otherwise.  A PBF $g: \{-1,1\}^n \to [-1,1]$ is a
function $g(x)=P_1(q(x))$, where $q: \R^n \to \R$ is a real degree-$d$ polynomial.


\new{
Let $\mathcal{L}$ be a family of polynomials $\ell: \R^n \to \R$ that give a basis 
for all polynomials of degree at most $d$. Then we say that two sets of  Chow parameters 
$a_\ell, b_\ell$ for $\ell \in \mathcal{L}$ are $\eps$-close in $\ell_2$-distance 
if for all polynomials $p(x)$ of degree at most $d$ and $\|p\|_2 \leq 1$,  
for the $c_\ell$ that satisfy $p(x) = \sum_{\ell \in \mathcal{L}} c_\ell \ell(x)$, 
we have that $\left|\sum_{\ell \in \mathcal{L}} c_\ell (a_\ell-b_\ell)\right| \leq \eps$. }
We have the following statement, which is implicit in~\cite{TTV:09, DeDFS14}:

\new{
\begin{theorem} \label{thm:chow-to-weights}
Let $D$ be a distribution on $\R^n$, 
$f : \R^n \to [-1,1]$ be a bounded function, and $\mathcal{L}$ be as above. 
There is an algorithm with the following properties:  Suppose the algorithm is given as input a list 
$(a_{\ell})_{\ell \in \mathcal{L}}$ of real values and a parameter $\xi>0$ 
such that the Chow parameters $(a_{\ell})_{\ell \in \mathcal{L}}$ and 
$(\E_{X \sim D} [f(X) \ell(X)])_{\ell \in \mathcal{L}}$ are $\xi$-close in $\ell_2$.
The algorithm then outputs a function $h : \R^n \to [-1,1]$ with the following properties:

\begin{enumerate}
\item [(i)] The Chow parameters $(\E_{X \sim D} [h(X) \ell(X)])_{\ell \in \mathcal{L}}$ 
and  $(\E_{X \sim D} [f(X) \ell(X)])_{\ell \in \mathcal{L}}$ are $O(\xi)$-close in $\ell_2$.

\item [(ii)] $h(x)$ is of the form $h(x) = P_1(\sum_{\ell \in \mathcal{L}} w_{\ell} \ell(x))$.
\end{enumerate}
The algorithm runs for {$O(1/\xi^2)$} iterations, where in each iteration 
it estimates the Chow parameters $(\E_{X \sim D}[h'(X)\ell(X)])_{\ell \in \mathcal{L}}$ 
to be $O(\xi)$-close in $\ell_2$. Here, each $h'$ is a function of the form
$h'(x)=P_1( {\frac \xi 2} \cdot \sum_{\ell \in \mathcal{L}} v_{\ell} \ell(x))$, 
where the $v_\ell$'s are integers whose absolute values sum to $O(1/\xi^2)$.
\end{theorem}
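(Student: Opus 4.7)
The plan is to prove Theorem~\ref{thm:chow-to-weights} by invoking the incremental boosting procedure underlying the Chow-reconstruction algorithms of \cite{TTV:09,DeDFS14}. Initialize integer weights $v^{(0)}_\ell = 0$ for all $\ell \in \mathcal{L}$, so $h_0 \equiv 0$. At iteration $t$, let $q_t(x) = (\xi/2)\sum_\ell v^{(t)}_\ell \ell(x)$ and $h_t(x) = P_1(q_t(x))$. Draw $\poly(|\mathcal{L}|/\xi)$ fresh samples from $D$ (these are unlabeled, so the noise model is irrelevant here since $h_t$ is a function we can evaluate explicitly) and compute empirical averages $\widehat b^{(t)}_\ell \approx \E_{X \sim D}[h_t(X)\ell(X)]$ that are $O(\xi)$-close in the dual $\ell_2$-metric. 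Let $r^{(t)}_\ell \eqdef a_\ell - \widehat b^{(t)}_\ell$ be the residual Chow vector. If $\|r^{(t)}\|$ is $O(\xi)$-small in the relevant dual norm, output $h = h_t$ and halt; otherwise, identify a basis element $\ell^\star$ and sign $\sigma \in \{\pm 1\}$ that witness the large residual, and update $v^{(t+1)}_{\ell^\star} \gets v^{(t)}_{\ell^\star} + \sigma$.

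Correctness rests on a convex-potential argument. Following \cite{TTV:09}, a natural choice is $\Phi_t = \E_{X \sim D}[q_t(X)^2] - 2\E_{X \sim D}[q_t(X)f(X)]$, where the second expectation is a linear functional of $q_t$ that can be read off from the given $a_\ell$'s (up to the input slack $\xi$). Expanding the update yields a linear term $-\xi\sigma r^{(t)}_{\ell^\star}$ plus a quadratic term $(\xi/2)^2 \|\ell^\star\|_2^2 \leq (\xi/2)^2$, so whenever the halting test fails the decrement satisfies $\Phi_t - \Phi_{t+1} \geq \Omega(\xi^2)$; the $O(\xi)$ per-iteration sampling error is absorbed into this decrement, and the $P_1$ contraction onto $[-1,1]$ together with $f \in [-1,1]$ only helps in passing from $q_t$ back to $h_t$. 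Since $\Phi_0 = O(1)$ and $\Phi_t$ is bounded below, the loop terminates after $T = O(1/\xi^2)$ iterations, yielding the claimed iteration count and the weight bound $\sum_\ell |v^{(T)}_\ell| \leq T = O(1/\xi^2)$. Conclusion (ii) is then immediate from the form of $h_T$, and conclusion (i) follows by combining the halting condition (Chow parameters of $h_T$ within $O(\xi)$ of the $a_\ell$'s) with the input hypothesis (the $a_\ell$'s within $\xi$ of $\E_{X \sim D}[f(X)\ell(X)]$) via the triangle inequality.

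The main technical obstacle is arranging the $\Omega(\xi^2)$ potential drop using a single $\pm 1$ integer step. Picking the coordinate $\ell^\star$ with largest residual entry naively loses a factor of $|\mathcal{L}| = n^d$, because the residual (viewed as a linear functional) may spread its mass thinly over $\mathcal{L}$. The standard remedy in \cite{TTV:09,DeDFS14} is to first pass to an orthonormal basis for degree-$d$ polynomials under $D$, which is accessible because we know the degree-$2d$ moments of $D$ appearing as Gram coefficients; in that basis the dual norm reduces to ordinary $\ell_2$, so the residual has a well-concentrated direction, and one then rounds the optimal continuous update to a $\pm 1$ step on a single basis element while absorbing the rounding loss into the $O(\xi)$ slack already available from sampling. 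With this ingredient in place, the rest of the argument proceeds by the direct computation sketched above, and we appeal to the analyses of \cite{TTV:09,DeDFS14} essentially as a black box.
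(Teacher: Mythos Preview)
The paper does not prove this statement; immediately after stating it, the authors say only that it ``is not explicitly stated in the above form in previous work, but it follows easily from their proofs,'' referring to \cite{TTV:09,DeDFS14}. So there is no paper-side argument to compare against beyond the original boosting proofs.

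Your sketch has the right architecture, but the chosen potential does not deliver the claimed drop. With $\Phi_t = \E[q_t(X)^2] - 2\E[q_t(X)f(X)]$, the update $q_{t+1} = q_t + (\xi/2)\sigma\ell^\star$ produces a first-order term $\xi\sigma\,\E[(q_t - f)\ell^\star]$, whereas the residual you actually measure is (approximately) $\E[(f - h_t)\ell^\star]$ with $h_t = P_1(q_t)$. Once $|q_t|$ exceeds $1$ on a set of nontrivial mass these two quantities are unrelated, and the sentence ``the $P_1$ contraction \ldots\ only helps'' is not an argument. The potential that makes this go through is the \cite{TTV:09} one: $\Phi_t = \E[\Psi(q_t(X))] - \E[q_t(X)f(X)]$, where $\Psi$ is the convex antiderivative of $P_1$ (so $\Psi(s)=s^2/2$ on $[-1,1]$ and $|s|-1/2$ outside). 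Because $\Psi' = P_1$ and $\Psi$ is $1$-smooth, the first-order change under the update is exactly $(\xi/2)\sigma\,\E[(h_t - f)\ell^\star]$, which is precisely what your halting test controls; and $|f|\le 1$ together with $\Psi(s)\ge |s|-1/2$ gives the lower bound $\Phi_t \ge -1/2$ needed for the iteration count.

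Your third paragraph flags a real difficulty but the proposed fix does not resolve it. Orthonormalizing makes the Chow distance the Euclidean $\ell_2$ norm of the residual vector, but an $\ell_2$ norm of order $\xi$ is compatible with every single coordinate being $O(\xi/\sqrt{|\mathcal{L}|})$, so a $\pm 1$ step on one basis element need not yield $\Omega(\xi^2)$ progress, and ``absorbing the rounding loss into the $O(\xi)$ slack'' cannot recover a $\sqrt{|\mathcal{L}|}$ factor. In \cite{TTV:09,DeDFS14} the update is taken along the full residual polynomial (normalized to unit $L_2$ norm), which is what gives the per-step drop; the integer-coefficient and $\ell_1$-bound description of the intermediate hypotheses in the theorem statement is a separate bookkeeping matter, not a consequence of restricting the updates to single coordinates.
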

}

In our application of the above theorem, the set $\mathcal{L}$ consists of the monomial functions $m_i(x)$, 
whereas the numbers $\alpha_0, \ldots, \alpha_{\ell}$ 
are the correlations of $f$ with $m_i$'s under the distribution $D$. 

We note that the above theorem is not explicitly stated in the above form
in previous work, but it follows easily from their proofs.

\new{To apply this algorithm, we need to show that we can get approximations 
to the Chow parameters not only of $f(x)$ but of each $h'(x)$ as well. 
We will use one of Theorems \ref{thm:robust-chow-gaussian-uniform} or \ref{thm:robust-chow-logconcave} 
to estimate the Chow parameters $\E_{X \sim D} [f(X) m_i(X)]$ to within error 
$\xi = \sqrt{d} \eps \cdot O(d+\log(1/\eps))^{d/2}$ or $\xi=\eps \cdot O(d+\log(1/\eps))^{d}$ respectively, 
using $N=\tilde{O}(n^{3d}/\eps^2)$ labeled $\eps$-corrupted samples from $D$. 
We can use the same algorithm to obtain estimates of $(\E_{X \sim D}[h'(X)\ell(X)])_{\ell \in \mathcal{L}}$ 
for the known PBFs $h'(x)$ used by the algorithm. In each iteration, we take $N$ 
$\eps$-corrupted samples from $D$ and label them according to the current hypothesis PBF $h'(x)$. 
Note that these labeled samples are still $\eps$-corrupted and so the requirements of 
Theorem \ref{thm:robust-chow-gaussian-uniform} or \ref{thm:robust-chow-logconcave} are satisfied. 
Thus, the approximations to the Chow parameters $(\E_{X \sim D}[h'(X)\ell(X)])_{\ell \in \mathcal{L}}$ 
the algorithm outputs are $\xi$-close in $\ell_2$-distance to the true parameters, as required. 
The algorithm uses $O(N/\xi)^2=O(N/\eps^2)= \tilde{O}(n^{3d}/\eps^4)$ 
$\eps$-corrupted samples in total.}

We have that $f(x)$ and $h(x)$ have Chow distance at most 
$\sqrt{d} \eps \cdot O(d+\log(1/\eps))^{d/2}$ or $\eps \cdot O(d+\log(1/\eps))^{d}$. 
We need to prove a bound on the $L_1$-distance.

For log-concave distributions, including the Gaussian, we will use:
\begin{lemma} \label{lem:Chow-to-Hamming} Let $D$ be a log-concave distribution. 
Let $f(x)$ be a degree-$d$ PTF and $g:\R^n \rightarrow [-1,1]$ be a bounded function
whose Chow parameters are $\eps$-close in $\ell_2$. 
Then, $\E_{X \sim D}[|f(X)-g(X)|]$ is at most $O(d \eps^{1/(d+1)})$. 
\end{lemma}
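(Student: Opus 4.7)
The plan is to use the fact that $f$ has a polynomial ``sign structure'' together with Carbery--Wright anticoncentration for log-concave distributions. Write $f(x) = \sign(q(x))$ for a degree-$d$ polynomial $q$; rescaling $q$ does not change $f$, so we may assume $\|q\|_2 = 1$ (with respect to $D$). Since $q$ is then a normalized degree-$d$ polynomial, the hypothesis that the Chow parameters of $f$ and $g$ are $\eps$-close in $\ell_2$ yields
\[
\bigl|\E_{X \sim D}[q(X)(f(X)-g(X))]\bigr| \leq \eps.
\]

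The key observation is the pointwise identity $q(x)(f(x)-g(x)) = |q(x)| \cdot |f(x)-g(x)|$. Indeed, since $f=\sign(q)$ we have $qf = |q|$, and since $f\in\{\pm 1\}$ and $g\in[-1,1]$ we have $|f-g| = 1-fg$, so
\[
q(f-g) = qf - qg = |q| - (|q|f)g = |q|(1-fg) = |q|\cdot|f-g|.
\]
In particular the integrand is nonnegative, and we deduce
\[
\E_{X \sim D}\bigl[|q(X)|\cdot|f(X)-g(X)|\bigr] \leq \eps.
\]

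Now split the expectation of $|f-g|$ into the ``bulk'' region $\{|q|\geq\tau\}$ and the ``boundary'' region $\{|q|<\tau\}$ for a threshold $\tau>0$ to be chosen. On the bulk region, a Markov-type argument gives
\[
\E\bigl[|f-g|\,\mathbf{1}_{|q|\geq \tau}\bigr] \leq \tau^{-1}\cdot \E\bigl[|q|\cdot|f-g|\bigr] \leq \eps/\tau.
\]
On the boundary region, since $|f-g|\leq 2$, it suffices to bound $\Pr[|q(X)|<\tau]$. This is exactly where the log-concavity assumption is used: by the Carbery--Wright anticoncentration inequality for log-concave distributions, for a normalized degree-$d$ polynomial $q$ we have $\Pr_{X \sim D}[|q(X)| < \tau] \leq C\, d\, \tau^{1/d}$ for an absolute constant $C$. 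Hence
\[
\E[|f-g|] \leq \eps/\tau + 2Cd\,\tau^{1/d}.
\]

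Optimizing over $\tau$ by setting the two terms equal gives $\tau = \Theta((\eps/d)^{d/(d+1)})$, and substituting back yields $\E[|f-g|] = O(d\,\eps^{1/(d+1)})$, as desired. The only nontrivial input is Carbery--Wright anticoncentration, which is the main obstacle in the sense that without it (e.g., for an arbitrary distribution) the boundary term cannot be controlled; everything else is an elementary manipulation exploiting that $f$ is the sign of a degree-$d$ polynomial.
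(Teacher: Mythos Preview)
Your proof is correct and takes essentially the same approach as the paper: both use the pointwise identity $(f-g)q = |f-g|\,|q|$ to get $\E[|q|\cdot|f-g|]\le \eps$ and then invoke Carbery--Wright anticoncentration for log-concave measures. Your direct threshold split at $\{|q|\ge\tau\}$ and optimization over $\tau$ is in fact cleaner than the paper's version, which obtains the same inequality $\delta = O(d(\eps/\delta)^{1/d})$ via a contradiction argument involving tail integrals, but the ingredients and the resulting bound are identical.
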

\begin{proof}
We can write $f(x)=\sgn(p(x))$, where $p(x)$ is a degree at most $d$ polynomial. Let $X$ be distributed as $D$. 
We have that $|\E[(f(X)-g(X))p(X)]| \leq \eps \|p\|_2$. 
Note that if $f(x)-g(x)\neq 0$ and $f(x) \neq 0$, 
since $|f(x)|=1$ and $|g(x)| \leq 1$, we get that $\sgn(f(x)-g(x))=\sgn(f(x))=\sgn(p(x))$. 
Thus, $(f(x)-g(x))p(x) = |f(x)-g(x)||p(x)| \geq 0$. 
Now we have that $\E[|f(X)-g(X)||p(X)|] \leq \eps \|p\|_2$. 
Let $\delta$ be $\E[|f(X)-g(X)|]$. Then, by Theorem 8 of \cite{CW:01}, we have
$$\Pr[|p(X)| \leq 3\eps \|p\|_2 /\delta] \leq O(d (3\eps/\delta)^{1/d}) \;.$$
Suppose for a contradiction, that this probability is smaller than $\delta/4$. 
Then, for any $t$, if $\Pr[|f(x)-g(x)| \geq t] > \delta/4$, 
then by a union bound with probability at least 
$\Pr[|f(x)-g(x)| \geq t]-\delta/4$, we have both 
$|f(x)-g(x)| \geq t$ and $|p(x)| > 3\eps \|p\|_2 /\delta$, 
and so $|f(x)-g(x)||p(x)| \geq 3\eps \|p\|_2 t /\delta$. 
In summary, for any $t > 0$, we have
$$\Pr[|f(x)-g(x)||p(x)| \geq 3\eps \|p\|_2 t /\delta] \geq \Pr[|f(x)-g(x)| \geq t] - \delta/4 \;.$$
Thus, we have
\begin{align*}
\delta & = \E[|f(x)-g(x)|] = \int_0^2 \Pr[|f(x)-g(x)| \geq t] dt \\
& \leq \int_0^2 \left( \delta/4 + \Pr[|f(x)-g(x)||p(x)| \geq 3\eps \|p\|_2 t /\delta] \right)  dt\\
& = \delta/2 + \int_0^2 \Pr[|f(x)-g(x)||p(x)| \geq 3\eps \|p\|_2 t /\delta]  dt \\
& = \delta /2 + \delta/(3 \eps \|p\|_2)  \int_0^{6\eps\|p\|_2/\delta} \Pr[|f(x)-g(x)||p(x)| \geq T] dT \\
& \leq \delta /2 + \delta/(3 \eps \|p\|_2)  \int_0^{\infty} \Pr[|f(x)-g(x)||p(x)| \geq T] dT \\
& \leq \delta /2 + \delta/(3 \eps \|p\|_2)  \cdot \eps \|p\|_2 \leq 5\delta/6 \;,
\end{align*}
which is our contradiction. Therefore, 
$$\delta/4 = O(d (3\eps/\delta)^{1/d}) \;.$$
Rearranging gives 
$3\eps/\delta = \Omega(\delta/d)^d$ and $\delta = O(d \eps^{1/(d+1)})$, 
which completes the proof.
\end{proof}

Now we note that if a PBF is close then so is the corresponding PTF.
\begin{lemma} \label{lem:PBF-to-PTF} 
If $f(x)$ is a degree-$d$ PTF, $g(x)=P_1(p(x))$, and 
$g'(x)=\sgn(p(x))$ for some function $p(x)$, 
then the $L_1$-distance between $f$ and $g'$, 
$\E[|f(X)-g'(X)|]/2$, is at most 
$\E[|f(X)-g(X)|]$. 
\end{lemma}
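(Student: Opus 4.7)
The plan is to establish the bound pointwise: I will show that for every $x \in \R^n$,
\[
|f(x) - g'(x)| \leq 2\,|f(x) - g(x)|,
\]
from which the lemma follows by taking expectations over $X \sim D$ and dividing by $2$.

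For the pointwise inequality, I would split into two cases. If $f(x) = g'(x)$, the left-hand side is $0$ and there is nothing to prove. If $f(x) \ne g'(x)$, then since both $f(x)$ and $g'(x) = \sgn(p(x))$ lie in $\{-1,+1\}$, we must have $g'(x) = -f(x)$, so $|f(x) - g'(x)| = 2$. The key observation is that $g(x) = P_1(p(x))$ is obtained from $p(x)$ by clipping, so $g(x)$ has the same sign as $p(x)$ (and hence as $g'(x)$), or $g(x) = 0$. In particular, $g(x)$ and $f(x)$ have opposite signs (or $g(x) = 0$), which yields
\[
|f(x) - g(x)| = |f(x)| + |g(x)| \geq |f(x)| = 1,
\]
since $f(x) \in \{-1,+1\}$. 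Therefore $2\,|f(x) - g(x)| \geq 2 = |f(x) - g'(x)|$, as required.

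Taking expectations gives $\E[|f(X) - g'(X)|] \leq 2\,\E[|f(X) - g(X)|]$, i.e., $\E[|f(X) - g'(X)|]/2 \leq \E[|f(X) - g(X)|]$, which is the claim. There is no real obstacle here; the only subtlety is handling the degenerate case $p(x) = 0$ (where $g(x) = 0$), but that is absorbed by the observation that $|f(x)| = 1$ forces $|f(x) - g(x)| \geq 1$ whenever $f(x)$ and $g(x)$ fail to share a sign.
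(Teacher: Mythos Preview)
Your proof is correct and follows essentially the same approach as the paper: both establish the pointwise inequality $|f(x)-g'(x)| \leq 2|f(x)-g(x)|$ by observing that when $f(x)\neq g'(x)$, the values $f(x)$ and $g(x)$ cannot share a sign, forcing $|f(x)-g(x)|\geq 1$. The paper phrases the measure-zero caveat as ``with probability $1$, $|f(x)|=1$,'' while you handle it via the $p(x)=0$ case, but the substance is identical.
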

\begin{proof} 
Note that $g'(x)=\sgn(g(x))$. 
Thus, when $f(x) \neq g'(x)$, we have $\sgn(f(x)) \neq \sgn(g'(x))$. 
However, with probability $1$, $|f(x)|=1$, and then 
when $f(x) \neq g'(x)$, $|f(x)-g(x)| \geq 1$. 
Thus, we have $|f(X)-g'(X)| \leq 2 |f(x)-g(x)|$. 
Taking expectations under $D$ gives the lemma. 
\end{proof}

\begin{proof}[Proof of Theorem \ref{thm:ptfs-formal}]
As explained above, we use Theorems \ref{thm:robust-chow-gaussian-uniform} 
or \ref{thm:robust-chow-logconcave} to approximate the Chow parameters of $f(x)$ 
and as a subroutine in Theorem \ref{thm:chow-to-weights} to produce a 
PBF $h(x)=P_1(p(x))$ of degree at most $d$, 
which has Chow distance from $f(x)$ at most $\sqrt{d} \eps \cdot O(d+\log(1/\eps))^{d/2}$, if $D=N(0,I)$, 
or $\eps \cdot O(d+\log(1/\eps))^{d}$ otherwise.
By Lemma \ref{lem:Chow-to-Hamming}, we have that 
$\E_{X \sim D}[|f(X)-h(X)|] \leq O(d \eps^{1/(d+1)} \cdot O(d+\log(1/\eps)))$ in either case. 
By Lemma \ref{lem:PBF-to-PTF}, we also have that the $L_1$-distance 
$\E_{X \sim D}[|f(X)-h'(X)|]/2 \leq \tilde{O}(d^2 \eps^{1/(d+1)})$, 
where $h'(x)=\sgn(p(x))$ is a degree-$d$ PTF. 
We output this $h'(x)$.
\end{proof}

For the uniform distribution on $\{-1,1\}^n$, 
we obtain $L_1$-distance $2^{-\Omega(\sqrt[3]{\log(1/\eps)})}$ for the case $d=1$ 
by using a similar argument except using Theorem 7 of \cite{DeDFS14} 
in place of Lemma \ref{lem:Chow-to-Hamming}.

\section{Optimally Robust Learning of LTFs under the Gaussian Distribution} \label{sec:ltfs}

\newcommand{\erf}{\mathrm{erf}}
\newcommand{\polylog}{\mathrm{polylog}}

In this section, we prove the following theorem, a restatement of Theorem~\ref{thm:ltf-optimal-informal}:

\begin{theorem}[Near-Optimally Learning LTFs with Nasty Noise]  \label{LTFThrm} 
There is a $\poly(n/\eps)$ time algorithm that learns arbitrary LTFs under 
the standard Gaussian distribution on $\R^n$ to error $O(\eps)$ in the presence of nasty noise at rate $\eps$.
\end{theorem}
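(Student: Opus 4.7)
The plan is to bootstrap the $O(\eps\sqrt{\log(1/\eps)})$ bound coming from Corollary~\ref{weakLTFCor} (the specialization of Theorem~\ref{thm:ptfs-formal} to $d=1$ and $D = N(0,I)$) into the desired $O(\eps)$ bound by iterating a localization subroutine in the spirit of~\cite{ABL17}. First I would run the weak learner once to obtain an initial LTF $h_0 = \sgn(w_0 \cdot x - \theta_0)$ with $L_1$-error $\delta_0 = O(\eps\sqrt{\log(1/\eps)})$ from the unknown target $f$; let $H_0$ be the separating hyperplane of $h_0$.

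The core of the argument is an iterative step that, given a current hypothesis $h_i$ of $L_1$-error $\delta_i$ with separator $H_i$, produces a refinement $h_{i+1}$ of error $\delta_{i+1} = O(\eps\sqrt{\log(\delta_i/\eps)})$. To implement it, I would perform rejection sampling on the $\eps$-corrupted input to simulate samples from a new Gaussian $D_i$ whose marginal along the direction normal to $H_i$ is compressed to standard deviation $\sigma_i \asymp \delta_i$, with center on $H_i$ and orthogonal marginals unchanged. Since $\sup_x D_i(x)/N(0,I)(x) = O(1/\sigma_i)$, the nasty adversary's $\eps$-fraction of corruptions can inflate to an effective rate $\eps_i = O(\eps/\sigma_i)$ in the localized sample; however, the symmetric difference between $f$ and $h_i$, which lies in a wedge of angular width $\Theta(\delta_i)$ flanking $H_i$, acquires $D_i$-measure $\Theta(\delta_i/\sigma_i) = \Theta(1)$. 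Re-running the weak learner on the simulated $D_i$-samples yields $h_{i+1}$ with $D_i$-error $O(\eps_i\sqrt{\log(1/\eps_i)})$; translating back to $N(0,I)$ via the density ratio of $\sigma_i$ gives $N(0,I)$-error $O(\sigma_i \cdot \eps_i \sqrt{\log(1/\eps_i)}) = O(\eps \sqrt{\log(\delta_i/\eps)})$, as claimed.

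Unrolling the recursion $\delta_{i+1} = C\eps\sqrt{\log(\delta_i/\eps)}$ from $\delta_0/\eps = \sqrt{\log(1/\eps)}$ drives the ratio $\delta_i/\eps$ down to $O(1)$ in $O(\log^\ast(1/\eps))$ iterations. At that point $h_i$ achieves $L_1$-error $O(\eps)$, matching the information-theoretic lower bound of~\cite{BEK:02} up to constants. Each iteration consists of a single invocation of the weak learner on $\poly(n,1/\eps)$ post-rejection samples, so the total running time is $\poly(n,1/\eps)$.

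The hard part will be making the rejection-sampling reduction rigorous \emph{against nasty noise}. A fully adaptive adversary can pour its entire $\eps$-budget into the tiny localization band around $H_i$, so one has to verify both that conditioning on the acceptance event preserves the identity of the honest part of the sample (i.e.\ the honest samples surviving rejection are truly $D_i$-distributed, by Bernstein concentration on the number of acceptances) and that the combined corrupted fraction after rejection is at most $O(\eps/\sigma_i)$ with high probability. A second subtlety, absent in the origin-centered setting of~\cite{ABL17}, is that the separating hyperplane of an arbitrary LTF can lie far from the origin: the localized Gaussian $D_i$ must therefore be centered at the point of $H_i$ closest to the origin rather than at the origin itself, and each iteration must simultaneously refine both the normal direction $w_i$ and the threshold $\theta_i$ in a way that keeps $H_i$ close to the true separator of $f$ throughout the recursion.
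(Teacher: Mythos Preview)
Your high-level plan matches the paper's: start from the $O(\eps\sqrt{\log(1/\eps)})$ weak learner of Corollary~\ref{weakLTFCor}, localize via rejection sampling to a thin Gaussian slab around the current separating hyperplane, re-learn, and translate back. The recursion $\delta_{i+1} = O(\eps\sqrt{\log(\delta_i/\eps)})$ is exactly the one established in Lemma~\ref{LTFiterationLem} and iterated in Corollary~\ref{LTFAlgCor}. Two minor differences worth noting: the paper works throughout with approximations to the degree-$1$ Chow parameters (via Lemmas~\ref{LTFDistLem} and~\ref{ChowLTFLem}) rather than tracking $L_1$-error directly, and it estimates the threshold $\theta$ \emph{once} from $\E[f]$ and holds it fixed rather than refining it each round.

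There is, however, a genuine gap in your sketch when the threshold $\theta$ is large. Your density-ratio claim $\sup_x D_i(x)/N(0,I)(x) = O(1/\sigma_i)$ holds only when $H_i$ passes near the origin; when $H_i$ sits at distance $\theta$, the localized Gaussian $D_i = N(-\theta v, A_{v,\sigma})$ has $\sup D_i/N(0,I) = \sigma^{-1}\exp(\theta^2/(2(1-\sigma^2)))$, so the acceptance probability is $\sigma e^{-\theta^2/(2(1-\sigma^2))}$ (Lemma~\ref{rejectionLem}) and the noise inflates to $\Theta(\eps e^{\theta^2/2}/\sigma)$, not $\Theta(\eps/\sigma)$. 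The paper compensates by taking $\sigma = \delta e^{\theta^2/2}$, which restores the inflated rate to $O(\eps/\delta)$ but introduces the side condition $\sigma\theta = O(1)$ needed for the restricted LTF to have bounded threshold (Lemma~\ref{restrictionLTFLem}), i.e., $\delta\,\theta e^{\theta^2/2} = O(1)$. When $\theta e^{\theta^2/2}$ lies in the window between roughly $1/(\eps\sqrt{\log(1/\eps)})$ and $1/\eps$ --- a range that is nonempty and still compatible with $1-|\erf(\theta)| = \Omega(\eps)$ --- this constraint is already violated at the starting $\delta_0 = \Theta(\eps\sqrt{\log(1/\eps)})$, and the basic iteration cannot begin.

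The paper handles this regime with a separate and rather delicate argument (Proposition~\ref{LTFiterationProp}): it \emph{guesses} the magnitude $b$ of the perpendicular component of the true defining vector to accuracy $1/\log(1/\eps)$, fixes $\sigma = 1/\theta$, and then \emph{randomizes} the center $s$ of the rejection procedure over an interval of length $b$. Although for any fixed $s$ the adversary could concentrate its entire budget in the acceptance band, averaging over $s$ bounds the expected accepted-corruption fraction, and a careful calculation shows the resulting Chow-parameter error is $\delta/2 + O(\eps)$. Because of the guessing, each round succeeds only with probability $1/\polylog(1/\eps)$, so the final algorithm runs $\log(1/\eps)^{O(\log\log(1/\eps))}$ independent trials and finishes with a hypothesis-testing tournament. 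Your proposed fix of ``center on $H_i$ and refine $(w_i,\theta_i)$ jointly'' does not address this: centering on $H_i$ is precisely what the basic iteration already does, and it is exactly what fails in this large-$\theta$ regime.
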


\new{
In the subsequent discussion, all probabilities and expectations are with respect to the standard 
$n$-dimensional Gaussian distribution, $N(0, I)$, unless otherwise specified. We use $G(x)$ to denote the pdf
of the standard one-dimensional Gaussian distribution.}

Before we get into the proof of Theorem \ref{LTFThrm}, 
we begin with some preliminary discussion. 
Note that any non-constant LTF over $\R^n$ can be expressed uniquely in the form
$$
f(x) = \sgn(v\cdot x + \theta) \;,
$$
for some unit vector $v$ and real number $\theta$. 
We call $v$ the \emph{defining vector} and we call $\theta$ the \emph{threshold}.

We first point out that the threshold of an LTF is easy to approximate from samples. 
In particular, we have that $\E[f]=\erf(\theta)$. Since the expectation of $f$ can be computed 
to $O(\eps)$ error even in the presence of noise in the nasty model, 
this allows us to compute an approximation $\theta_0$ to $\theta$ 
so that $\erf(\theta_0)-\erf(\theta)=O(\eps)$. 
We note that replacing $\theta$ by $\theta_0$ in the definition of our LTF 
introduces an error of only $O(\eps)$. Therefore, up to this additional $O(\eps)$ error, 
we may assume that the threshold of the function we are trying to learn is known to the algorithm. 
As it will simplify our analysis, we will therefore treat $\theta$ as if it were known to our algorithm.

Next, in order learn our threshold up to a given error, 
we will need to have a better idea of how much 
an error in our parameters contributes to an error in our function. We prove the following:
\begin{lemma}\label{LTFDistLem}
Given two LTFs $f(x)=\sgn(v\cdot x+\theta)$ and $g(x)=\sgn(w\cdot x+\theta)$ with the same threshold, 
we have that
$$
\|f-g\|_1 = O(\|v-w\|_2 G(\theta)) \;.
$$
\end{lemma}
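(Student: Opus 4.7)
My plan is to reduce the calculation to two dimensions and then bound an explicit bivariate Gaussian integral. By the rotational invariance of $N(0,I)$, it suffices to work in the plane spanned by $v$ and $w$: set $(Y_1, Y_2) := (v\cdot X,\, w\cdot X)$, a centered bivariate Gaussian with unit variances and correlation $\rho = v\cdot w = 1 - \|v-w\|_2^2/2$. Since $|f-g|$ is either $0$ or $2$, and equals $2$ precisely on the event $\{(Y_1+\theta)(Y_2+\theta) < 0\}$, we have $\|f-g\|_1 = 2\Pr[(Y_1+\theta)(Y_2+\theta)<0]$, and by the symmetry of $(Y_1,Y_2)$ under swap this equals $4\Pr[Y_1 > -\theta,\, Y_2 < -\theta] = 4\bigl[\Phi(\theta) - \Phi_2(\theta,\theta;\rho)\bigr]$, where $\Phi_2(\cdot,\cdot;\rho)$ is the bivariate normal CDF with correlation $\rho$.

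The key tool is the classical differentiation-in-$\rho$ identity $\partial_\rho \Phi_2(a,b;\rho) = \phi_2(a,b;\rho)$. Combined with the boundary value $\Phi_2(\theta,\theta;1) = \Phi(\theta)$ and the closed form $\phi_2(\theta,\theta;r) = \frac{1}{2\pi\sqrt{1-r^2}} e^{-\theta^2/(1+r)}$, this yields
\begin{equation*}
\|f-g\|_1 \;=\; 4\int_\rho^1 \phi_2(\theta,\theta;r)\, dr \;=\; \frac{2}{\pi}\int_\rho^1 \frac{e^{-\theta^2/(1+r)}}{\sqrt{1-r^2}}\, dr \;=\; \frac{2}{\pi}\int_0^{1-\rho}\frac{e^{-\theta^2/(2-s)}}{\sqrt{s(2-s)}}\, ds,
\end{equation*}
the last equality coming from the substitution $s = 1-r$. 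This representation is the analytic heart of the proof.

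In the main regime $\|v-w\|_2 \leq \sqrt{2}$ (equivalently, $\rho \geq 0$ and $1-\rho \leq 1$), for every $s \in [0, 1-\rho]$ we have $e^{-\theta^2/(2-s)} \leq e^{-\theta^2/2} = \sqrt{2\pi}\,G(\theta)$ and $\sqrt{2-s} \geq 1$, so these factors pull out of the integral and only the integrable singularity $\int_0^{1-\rho} s^{-1/2}\, ds = 2\sqrt{1-\rho} = \sqrt{2}\,\|v-w\|_2$ remains, producing $\|f-g\|_1 = O(\|v-w\|_2 G(\theta))$. The complementary regime $\|v-w\|_2 > \sqrt{2}$ is handled in two short subcases: if $|\theta|$ is bounded then $G(\theta)$ is bounded away from $0$ and the trivial bound $\|f-g\|_1 \leq 2$ already matches the target; otherwise the Mills-ratio bound $\Pr[f\neq g] \leq 2(1-\Phi(|\theta|)) \leq 2G(\theta)/|\theta|$, combined with $|\theta|\cdot\|v-w\|_2 \gtrsim 1$, gives the claim directly. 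I expect the main obstacle to be cleanly isolating the factor $\sqrt{1-\rho} \asymp \|v-w\|_2$ from the $s^{-1/2}$ singularity at $r=1$; once the integral representation above is in hand, everything else is routine calculus and case analysis.
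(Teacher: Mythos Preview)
Your proof is correct and complete, but it proceeds by a genuinely different route than the paper's. The paper argues by a path argument: it reduces to infinitesimal perturbations $w=\frac{1}{\sqrt{1+\gamma^2}}(v+\gamma u)$ with $u\perp v$, computes the derivative $P'(0)=2G(\theta)$ of the disagreement probability at $\gamma=0$ via a direct one-variable Gaussian integral, and then integrates along a geodesic from $v$ to $w$ using the triangle inequality. You instead reduce globally to a bivariate normal probability, invoke the classical Plackett/Sheppard identity $\partial_\rho\Phi_2(a,b;\rho)=\phi_2(a,b;\rho)$ to obtain an exact integral representation, and bound it by elementary estimates.

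What each approach buys: the paper's argument is conceptually minimal (a single derivative and an appeal to path integration), but as written it leaves implicit the uniformity of the $o(\gamma)$ error term needed to sum over the path. Your approach is more self-contained and yields an explicit constant, namely $\|f-g\|_1\le (4/\sqrt{\pi})\,\|v-w\|_2\,G(\theta)$ in the main regime $\|v-w\|_2\le\sqrt2$; it also handles the complementary regime cleanly without any limiting argument. The cost is that you rely on the Plackett identity as a black box, whereas the paper's derivative computation is entirely from first principles.
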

\begin{proof}
We first note that it suffices to prove this result for small values of $\|v-w\|_2$, 
as we can take a path from $v$ to $w$ consisting of small steps, 
the sum of whose lengths is $O(\|v-w\|_2)$. 
Thus, we consider $w=\frac{1}{\sqrt{1+\gamma^2}}(v+\gamma u)$ for some $u\perp v$. 
We note that, up to $O(\gamma^2)$ error, 
we may replace $g(x)$ by $h(x)=\sgn((v+\gamma u)\cdot x + \theta)$. 
We note that $f$ and $h$ only differ when $|v\cdot x +\theta| \leq  \gamma u\cdot x$. 
As $v\cdot x$ and $u\cdot x$ are independent Gaussians, we have that the probability of this event equals
$$
P(\gamma) := \int_{-\infty}^\infty \int_{\theta-\gamma s}^{\theta+\gamma s} G(s) G(t) dt ds \;.
$$
Notice that the derivative of $P$ at $0$ is given by
$$
P'(0) = \int_{-\infty}^\infty 2 s G(s) G(\theta) ds = 2 G(\theta) \;.
$$
Therefore, $P(\gamma) = 2 G(\theta)\gamma + o(\gamma)$ as $\gamma\rightarrow 0$, 
and thus for sufficiently small $\gamma$, $\|f-g\|_1 = O(\gamma G(\theta))$. 
This completes our proof.
\end{proof}

As our main technique is to learn via the Chow parameters, 
we will want to know the relationship between our threshold function 
and its Chow parameters. In particular, we have:

\begin{lemma}\label{ChowLTFLem}
The degree-$1$ Chow parameters of the LTF 
$f(x)=\sgn(v\cdot x + \theta)$ with $\|v\|_2=1$ 
are $2G(\theta)v$.
\end{lemma}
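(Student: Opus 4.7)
The plan is to compute the vector $\E_{X\sim N(0,I)}[f(X) X] \in \R^n$ directly and show that it equals $2G(\theta) v$. This is really just a one-dimensional calculation dressed up in $n$ dimensions, so the strategy is to collapse to the one-dimensional case via rotational symmetry.

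First, I would exploit the rotational invariance of $N(0,I)$: pick an orthogonal transformation that maps $v$ to the first standard basis vector $e_1$. Under this change of variables, $f$ becomes $\sgn(x_1 + \theta)$, and it suffices to compute $\E[\sgn(X_1+\theta) X]$ when $X \sim N(0,I)$, then rotate back. Since the coordinates $X_1,\dots,X_n$ are independent and $\sgn(X_1+\theta)$ depends only on $X_1$, for each $i \geq 2$ we have
$$\E[\sgn(X_1+\theta) X_i] = \E[\sgn(X_1+\theta)] \cdot \E[X_i] = 0,$$
so the only nonzero component is the first.

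Next, I would compute the one-dimensional integral
$$\E[\sgn(X_1+\theta) X_1] = \int_{-\infty}^{\infty} \sgn(t+\theta)\, t\, G(t)\, dt.$$
Splitting the integration region at $t=-\theta$ and using the elementary identity $\frac{d}{dt}(-G(t)) = t\,G(t)$ (since $G'(t) = -t\,G(t)$), each of the two resulting integrals evaluates to $G(-\theta) = G(\theta)$, and they add (with signs from $\sgn$) to give $2G(\theta)$.

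Combining, the Chow parameter vector in the rotated frame is $2G(\theta) e_1$; rotating back yields $2G(\theta) v$, as claimed. There is no real obstacle here — the only thing to be careful about is the sign bookkeeping when splitting the integral at $t = -\theta$ and the use of the symmetry $G(-\theta)=G(\theta)$.
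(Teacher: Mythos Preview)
Your proposal is correct and follows essentially the same approach as the paper: the paper likewise observes that $\E[f(G)(w\cdot G)]=0$ for all $w\perp v$ (your rotation-to-$e_1$ argument is just a coordinate version of this), and then computes the one-dimensional integral $\int_{-\infty}^{\infty}\sgn(t+\theta)\,t\,G(t)\,dt = 2G(\theta)$ in the same way.
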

\begin{proof}
It is clear that $\E[f(G)(w\cdot G)]=0$ for all $w\perp v$. 
Thus, we only need to evaluate $\E[f(G)(v\cdot G)]$. It is easy to see that this is
\begin{align*}
\int_{-\infty}^\infty \sgn(t+\theta)tG(t)dt & = \int_{-\theta}^\infty 2t G(t) dt = 2G(\theta) \;.
\end{align*}
This completes our proof.
\end{proof}

Combining this with Lemma \ref{LTFDistLem} and Theorem \ref{thm:robust-chow-gaussian-uniform}, 
we easily obtain the following pair of corollaries:
\begin{corollary}\label{chowToLTFCor}
There is an algorithm that given an $\eps$-approximation to the degree-$1$ Chow parameters 
of an LTF along with an $\eps$-approximation of its expectation, yields an $O(\eps)$-approximation of the function.
\end{corollary}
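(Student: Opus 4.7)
The plan is to produce a hypothesis LTF $g(x) = \sgn(\tilde v\cdot x + \tilde\theta)$ where $\tilde\theta$ is extracted from the expectation approximation and $\tilde v$ is obtained by normalizing the approximate Chow vector. Correctness then follows by bounding the two sources of error and applying Lemma \ref{LTFDistLem}.

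The first step is to recover the threshold. Write $\E[f]$ as a monotone function of $\theta$ (namely $2\Phi(\theta)-1$), and choose $\tilde\theta$ so that $2\Phi(\tilde\theta)-1$ equals a suitably clipped version of the given approximation $\tilde E$. With this choice, the LTF $h(x) = \sgn(v\cdot x + \tilde\theta)$ (using the \emph{true} defining vector) disagrees with $f$ only on the slab where $v\cdot x$ lies between $-\theta$ and $-\tilde\theta$, a region of Gaussian mass $|\Phi(\theta)-\Phi(\tilde\theta)| = O(\eps)$. Hence $\|h-f\|_1 = O(\eps)$, and it suffices to build $g$ with $\|g-h\|_1 = O(\eps)$.

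The second step is to recover the direction. By Lemma \ref{ChowLTFLem} the true Chow vector has norm exactly $2G(\theta)$, so the approximate Chow vector $\tilde c$ satisfies $\|\tilde c\|_2 = 2G(\theta)\pm\eps$. I would split on a threshold $C\eps$ for $\|\tilde c\|_2$. In the \textbf{non-degenerate case} $\|\tilde c\|_2 > C\eps$, set $\tilde v = \tilde c/\|\tilde c\|_2$. Standard perturbation gives $\|\tilde v-v\|_2 = O(\eps/G(\theta))$, and Lemma \ref{LTFDistLem} applied with common threshold $\tilde\theta$ yields
\[
\|g-h\|_1 = O\!\left(\|\tilde v-v\|_2\, G(\tilde\theta)\right) = O(\eps),
\]
provided $G(\tilde\theta) = O(G(\theta))$, which holds because a moderate $G(\theta)$ forces $|\theta-\tilde\theta|$ to be small (the derivative of $\Phi$ at $\theta$ is $G(\theta)$, and $|\Phi(\theta)-\Phi(\tilde\theta)| = O(\eps)$). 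In the \textbf{degenerate case} $\|\tilde c\|_2 \leq C\eps$, we must have $G(\theta) = O(\eps)$, hence $|\theta|$ is large and $f$ is $O(\eps)$-close to the constant $\sgn(\theta)$ by the Gaussian tail bound $\Phi(-|\theta|) \leq G(\theta)$; then any choice of unit $\tilde v$ paired with the large $\tilde\theta$ gives an $O(\eps)$-approximation as well.

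The main subtlety, and the only step requiring any real argument, is verifying that the threshold estimate $\tilde\theta$ and the true threshold $\theta$ induce comparable values of $G$ in the non-degenerate regime, so that the Lipschitz constant in Lemma \ref{LTFDistLem} matches the denominator in the perturbation bound for $\tilde v$. This reduces to the elementary observation that if $G(\theta)\geq C\eps$ then $|\theta - \tilde\theta|$ is bounded away from anything that could send $G(\tilde\theta)$ to a much smaller scale. All remaining calculations are routine, and combining the two error bounds via the triangle inequality gives $\|f-g\|_1 = O(\eps)$ as claimed.
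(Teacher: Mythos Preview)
Your proposal is correct and follows essentially the same approach as the paper: normalize the approximate Chow vector to recover the defining direction (with error $O(\eps/G(\theta))$), recover the threshold from the expectation via $\erf^{-1}$, and combine the two error contributions using Lemma~\ref{LTFDistLem} and a slab argument. The only noteworthy difference is the order of the two corrections: the paper first applies Lemma~\ref{LTFDistLem} with the \emph{true} threshold $\theta$ (so the $G(\theta)$ factors cancel exactly) and only afterwards swaps $\theta$ for $\theta'=\erf^{-1}(m)$, whereas you first replace the threshold and then the direction, which forces you to argue that $G(\tilde\theta)=O(G(\theta))$ and to treat the degenerate small-$G(\theta)$ case separately. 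The paper's ordering sidesteps that subtlety entirely, but your more explicit handling of the degenerate regime is arguably more careful than the paper's terse treatment.
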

\begin{proof}
Let $u$ be our approximation of the degree-$1$ Chow parameters.
By Lemma \ref{ChowLTFLem}, if the true threshold is $\theta$, $u/\|u\|_2$ is within 
$(\eps /2) G(\theta)$ of the defining vector of the LTF. 
Therefore, by Lemma \ref{LTFDistLem}, we have that $f$ is within $O(\eps)$ of $\sgn(u/\|u\|_2\cdot x +\theta)$. 
By replacing $\theta$ by $\theta' = \erf^{-1}(m)$, where $m$ is our approximate expectation of $f$, 
we introduce another $O(\eps)$ error. This completes the proof.
\end{proof}

\begin{corollary}\label{weakLTFCor}
There exists an algorithm to learn a linear threshold function to error 
$O(\eps \sqrt{\log(1/\eps)})$, where $\eps$ is the noise rate in the nasty model, 
using $\poly(n/\eps)$ time and samples.
\end{corollary}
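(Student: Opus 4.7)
The plan is to combine Theorem \ref{thm:robust-chow-gaussian-uniform} (robust estimation of Chow parameters) with Corollary \ref{chowToLTFCor} (reconstruction of an LTF from approximate Chow parameters). First, I would invoke Theorem \ref{thm:robust-chow-gaussian-uniform} with $d=1$ and $D=N(0,I)$ on the $\eps$-corrupted sample, obtaining estimates of $\E_{X \sim D}[f(X)x_i]$ for every $1 \leq i \leq n$ whose linear combinations approximate $\E_{X \sim D}[f(X)(w\cdot X)]$ to within $\sqrt{1}\cdot\eps \cdot O(1+\log(1/\eps))^{1/2} = O(\eps\sqrt{\log(1/\eps)})$ for every unit $w$. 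Since under $N(0,I)$ the coordinate functions $x_1,\dots,x_n$ form an orthonormal basis for the mean-zero degree-$1$ polynomials, this guarantee is precisely that the output vector $u \in \R^n$ approximates the true degree-$1$ Chow parameter vector $2G(\theta)v$ (Lemma \ref{ChowLTFLem}) in Euclidean norm within $O(\eps\sqrt{\log(1/\eps)})$.

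Next, I would estimate $\E[f]$ to within $O(\eps)$. This is a robust one-dimensional mean-estimation problem for a $[-1,1]$-valued function and is immediate: in the nasty model the empirical mean on the corrupted set differs from the true mean by at most $2\eps$, since replacing an $\eps$-fraction of samples changes the average by at most $2\eps$. (Alternatively, this falls out of the $d=0$ case of Theorem \ref{thm:robust-chow-gaussian-uniform}.)

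Finally, I would feed $u$ and the mean estimate into Corollary \ref{chowToLTFCor} with slack $\eps' = O(\eps\sqrt{\log(1/\eps)})$. That corollary, whose proof normalizes $u$ and applies Lemma \ref{LTFDistLem}, immediately yields an LTF hypothesis within $L_1$-distance $O(\eps') = O(\eps\sqrt{\log(1/\eps)})$ of $f$. The sample complexity from Theorem \ref{thm:robust-chow-gaussian-uniform} is $\tilde O(n^3/\eps^2)$ and the running time is $\poly(n/\eps)$, matching the stated bounds.

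There is no substantive obstacle here: the entire argument is a direct composition of previously established results. The only point worth double-checking is the translation between the "Chow-distance" guarantee of Theorem \ref{thm:robust-chow-gaussian-uniform} (an upper bound on $|L(p) - \E[f(X)p(X)]|$ for normalized $p$) and the Euclidean-norm guarantee on the vector of Chow parameters used implicitly in Corollary \ref{chowToLTFCor}; this equivalence is built into Definition in Section \ref{ssec:prelims} and is cleanest in the Gaussian setting, where the linear monomials are already orthonormal.
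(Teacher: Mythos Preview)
Your proposal is correct and matches the paper's proof essentially step for step: estimate $\E[f]$ to $O(\eps)$ accuracy, use Theorem~\ref{thm:robust-chow-gaussian-uniform} with $d=1$ to get an $O(\eps\sqrt{\log(1/\eps)})$-approximation to the degree-$1$ Chow parameters, and then invoke Corollary~\ref{chowToLTFCor}. Your added remarks on the equivalence between the Chow-distance guarantee and the Euclidean-norm guarantee, and on why the empirical mean suffices under nasty noise, are accurate elaborations that the paper leaves implicit.
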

\begin{proof}
The algorithm is as follows:
\begin{enumerate}
\item Take $O(1/\eps^2)$ samples to obtain an $\eps$-approximation, $m$, of $\E[f]$.
\item Using Theorem \ref{thm:robust-chow-gaussian-uniform}, 
compute $u$, an $O(\eps\sqrt{\log(1/\eps)})$-approximation to the degree-$1$ Chow parameters of $f$.
\item Apply Corollary \ref{chowToLTFCor}.
\end{enumerate}
\end{proof}

Although this algorithm only learns to error $O(\eps\sqrt{\log(1/\eps)})$, it can be improved using boosting. 
The basic idea will be to refocus our attention towards the samples close to the boundary 
between the $+1$ and $-1$ regions of our function. A very convenient way to do this restriction 
is to do rejection sampling in such a way that we end up with another Gaussian 
centered around the separating hyperplane. For this, we need to define 
an appropriate method of rejection sampling.

\begin{definition}
For a unit vector $v\in \R^n$ and real numbers $\theta,\sigma$ with $\sigma<1$, 
define the $(v,\theta,\sigma)$-rejection procedure to be the one that 
given a point $x\in \R^n$ accepts it with probability 
$$\exp(-(\sigma^{-2}-1) (v\cdot x+\theta/(1-\sigma^2))^2/2) \;,$$ 
and rejects it otherwise.
\end{definition}

This definition will be useful to us because of the following property:
\begin{lemma}\label{rejectionLem}
If elements $x$ taken from the standard Gaussian $N(0,I)$ 
are fed into the $(v,\theta,\sigma)$-rejection procedure, 
a point is accepted with probability $\sigma \exp(-\theta^2/(2(1-\sigma^2)))$.
Moreover, the distribution on $x$ conditional on acceptance is that 
of $N(-\theta v, A_{v,\sigma})$, where $A_{v\sigma} = I - (1-\sigma^2) vv^T$
 is the matrix with eigenvalue $\sigma^2$ in the $v$-direction 
 and eigenvalue $1$ in all orthogonal directions.
\end{lemma}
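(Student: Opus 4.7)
The plan is to compute the joint density of a sample and acceptance directly and recognize the result as a scaled Gaussian. First I would decompose $x \in \R^n$ along $v$ and its orthogonal complement: write $x = y v + x_\perp$, where $y = v \cdot x$ and $x_\perp \perp v$. Since $N(0,I)$ is rotationally symmetric, its density factors as the product of a standard one-dimensional Gaussian in $y$ and a standard $(n-1)$-dimensional Gaussian in $x_\perp$. Crucially, the acceptance probability depends only on $y$, so the joint density of $(x, \text{accept})$ factorizes as
\[
(2\pi)^{-(n-1)/2} \exp\!\left(-\tfrac{1}{2}\|x_\perp\|_2^2\right) \cdot (2\pi)^{-1/2} \exp\!\left(-\tfrac{y^2}{2}\right) \exp\!\left(-\tfrac{\sigma^{-2}-1}{2}\left(y + \tfrac{\theta}{1-\sigma^2}\right)^2\right).
\]
The orthogonal part is already the standard $(n-1)$-dimensional Gaussian on $\{x : v \cdot x = 0\}$, so the remaining work is in the $y$-coordinate.

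Next I would complete the square in $y$. Setting $\alpha = \sigma^{-2}-1$ and $\beta = \theta/(1-\sigma^2)$, the exponent in $y$ is $-\tfrac{1}{2}\bigl[(1+\alpha)y^2 + 2\alpha \beta y + \alpha\beta^2\bigr] = -\tfrac{1}{2\sigma^2} y^2 - \alpha\beta y - \tfrac{\alpha\beta^2}{2}$. Using $\sigma^2\alpha = 1-\sigma^2$ to simplify $\sigma^2 \alpha \beta = \theta$ and $\alpha\beta^2 = \tfrac{\theta^2}{\sigma^2(1-\sigma^2)}$, this rearranges cleanly to
\[
-\frac{(y+\theta)^2}{2\sigma^2} \;-\; \frac{\theta^2}{2(1-\sigma^2)}.
\]
Thus the $y$-marginal of the joint density is $(2\pi)^{-1/2} \exp\!\left(-\tfrac{(y+\theta)^2}{2\sigma^2}\right) \exp\!\left(-\tfrac{\theta^2}{2(1-\sigma^2)}\right)$, which is (up to the constant factor) a Gaussian in $y$ with mean $-\theta$ and variance $\sigma^2$.

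Finally, integrating this $y$-marginal over $\R$ yields $\sigma \exp(-\theta^2/(2(1-\sigma^2)))$, giving the claimed acceptance probability. The conditional distribution, obtained by normalizing, is independent across the $v$-direction and $v^\perp$: it is $N(-\theta, \sigma^2)$ along $v$ and standard Gaussian on $v^\perp$, which is exactly $N(-\theta v, I - (1-\sigma^2) v v^T) = N(-\theta v, A_{v,\sigma})$. The entire argument is a direct density calculation, so there is no real obstacle beyond carefully tracking the constants when completing the square; the identities $\sigma^2 \alpha = 1-\sigma^2$ and $\sigma^2 \alpha \beta = \theta$ are what make the quadratic collapse to the stated form.
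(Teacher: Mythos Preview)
Your proof is correct and follows essentially the same approach as the paper: both reduce to the one-dimensional problem along $v$ by noting that the orthogonal directions are standard Gaussian and independent of the acceptance probability, then complete the square in the $v$-coordinate to recognize the joint density as a constant times the $N(-\theta,\sigma^2)$ density. Your write-up is actually a bit more explicit about the algebra (introducing $\alpha,\beta$ and the identities $\sigma^2\alpha = 1-\sigma^2$, $\sigma^2\alpha\beta=\theta$), which makes the completion of the square easier to verify than in the paper's more compressed calculation.
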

\begin{proof}
First, we note that the distribution of $x$ in directions orthogonal to $v$ is Gaussian 
distributed and independent on both the $v$-component and the rejection probability. 
Therefore, it suffices to consider the one-dimensional problem of a Gaussian 
just along the line parallel to $v$. In this case, the probability that $x=tv$ 
and is accepted by our rejection procedure is to
\begin{align*}
\frac{1}{\sqrt{2\pi}}e^{-t^2/2}\exp(-(\sigma^{-2}-1) (t+\theta/(1-\sigma^2))^2/2) & = \frac{1}{\sqrt{2\pi}}\exp(-(\sigma^{-2}t^2 + 2\sigma^{-2} t\theta +\theta^2/(\sigma^2-\sigma^4))/2)\\
& =\frac{1}{\sqrt{2\pi}}\exp(-(t+\theta)^2/(2\sigma^2))\exp(\theta^2/(2(1-\sigma^2)))\\
& = \sigma \exp(\theta^2/(2(1-\sigma^2)))\left[\frac{1}{\sqrt{2\pi\sigma^2}}\exp(-(t+\theta)^2/(2\sigma^2))\right].
\end{align*}
Since the latter term is the probability density function of $N(-\theta,\sigma^2)$, 
this proves the second statement. This also implies that the integral over $t$ 
must be $\sigma \exp(-\theta^2/(2(1-\sigma^2)))$, which proves the first statement.
\end{proof}

Next, we will want to know what happens when an LTF is passed through this restriction procedure. 
In particular, if we have an LTF $f$ that is nearly $\sgn(v\cdot x +\theta)$ and apply this rejection procedure to the inputs, 
and renormalize the outputs to make them a standard Gaussian, 
we will get samples from another LTF that will tell us about the errors in our original approximation. 
More precisely, we have the following result:
\begin{lemma}\label{restrictionLTFLem}
Let $v\in \R^n$ be a unit vector and let $\theta,\sigma$ be real numbers with $1/2>\sigma$ and $\sigma \theta = O(1)$. 
Let $f(x) = \sgn(u\cdot x +\theta)$ be an LTF with threshold $\theta$, and 
$u=av+bw$ for some unit vector $w\perp v$ and $a, b \in \R$ with $a^2+b^2=1$ and $b=O(\sigma)$. 
Suppose that for some $0<\eps \ll \sigma e^{-\theta^2/2}$, 
we are given an $\eps$-corrupted set of samples from the distribution $(X,f(X))$, 
where $X\sim N(0,I)$, and $(v,\theta,\sigma)$-rejection sample based on the first coordinate. 
Then, the resulting distribution is $O(\eps e^{\theta^2/2}/\sigma)$-close to the distribution 
$(A_{v\sigma}^{1/2} Y-\theta v, g(Y))$, where $Y\sim N(0,I)$ and $g$ is the LTF
$$
g(y) = \sgn((av+bw/\sigma)\cdot y + \theta(1-a)/\sigma) \;.
$$
Furthermore, given $\theta,v,\sigma$, and a $\delta$-approximation to the degree-$1$ Chow parameters of $g$, 
one can obtain an $O(\delta \sigma e^{-\theta^2/2})$-approximation to the Chow parameters of $f$.
\end{lemma}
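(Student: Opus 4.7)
The plan is to prove the two claims of the lemma in turn. The first claim is anchored on the algebraic identity $f(A_{v\sigma}^{1/2} Y - \theta v) = g(Y)$ for all $Y \in \R^n$; assuming this identity, Lemma \ref{rejectionLem} says that clean labeled samples $(X, f(X))$ surviving $(v,\theta,\sigma)$-rejection are distributed exactly as $(A_{v\sigma}^{1/2} Y - \theta v, g(Y))$ with $Y \sim N(0,I)$. To verify the identity, I use the explicit form $A_{v\sigma}^{1/2} = \sigma vv^T + (I - vv^T)$ and compute
\[
u \cdot (A_{v\sigma}^{1/2} Y - \theta v) + \theta = a\sigma(v\cdot Y) + b(w\cdot Y) + \theta(1-a) = \sigma\bigl[(av + (b/\sigma)w)\cdot Y + \theta(1-a)/\sigma\bigr],
\]
whose sign is $g(Y)$. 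Lemma \ref{rejectionLem} also gives an acceptance probability of $\sigma e^{-\theta^2/(2(1-\sigma^2))} = \Theta(\sigma e^{-\theta^2/2})$, using $\sigma\theta = O(1)$ to bound $\theta^2/(1-\sigma^2) = \theta^2 + O(1)$.

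Given the identity, the first claim reduces to a contamination count. Of the input samples, at most an $\eps$-fraction are corrupted, and the adversary can choose them so that all survive rejection sampling. Meanwhile, the $(1-\eps)$-fraction of clean samples produces $\Theta(\sigma e^{-\theta^2/2})(1-\eps)$ accepted samples, each distributed exactly as in the target coupling. Thus the rejection-sampled set is $\eps'$-corrupted for some $\eps' = O(\eps e^{\theta^2/2}/\sigma)$, which is $o(1)$ under the hypothesis $\eps \ll \sigma e^{-\theta^2/2}$ and bounds the total variation distance to the ideal distribution.

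For the Chow-recovery claim, set $\tilde u := av + (b/\sigma)w$ and $\tilde\theta := \theta(1-a)/\sigma$, so $g(y) = \sgn(\tilde u\cdot y + \tilde\theta)$, and Lemma \ref{ChowLTFLem} yields $C_g = 2G(\tilde\theta/\|\tilde u\|_2)\,\tilde u/\|\tilde u\|_2$. WLOG $a>0$; then $b = O(\sigma)$ forces $a = 1 - O(\sigma^2)$, so $\tilde\theta = O(\theta\sigma) = O(1)$ and $\|\tilde u\|_2 = \Theta(1)$, hence $\|C_g\|_2 = \Theta(1)$. Given $\hat C_g$ with $\|\hat C_g - C_g\|_2 \leq \delta$, the unit vector $\hat C_g/\|\hat C_g\|_2$ approximates $\tilde u/\|\tilde u\|_2$ to $\ell_2$ error $O(\delta)$; its component orthogonal to $v$ gives an $O(\delta)$-approximation to $(b/\sigma)w/\|\tilde u\|_2$. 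Rescaling by $\sigma\|\tilde u\|_2$, which can be recovered from $\|\hat C_g\|_2$ together with the direction of $\hat C_g$ by inverting the explicit formula above, yields an estimate $\hat p$ with $\|\hat p - bw\|_2 = O(\delta\sigma)$. I then set $\hat u := \sqrt{1 - \|\hat p\|_2^2}\,v + \hat p$. Since $\|bw\|_2 = b = O(\sigma)$, the identity $\|\hat p\|_2^2 - b^2 = (\hat p - bw)\cdot(\hat p + bw)$ gives $\bigl|\|\hat p\|_2^2 - b^2\bigr| = O(\delta\sigma^2)$, so $\bigl|\sqrt{1 - \|\hat p\|_2^2} - a\bigr| = O(\delta\sigma^2)$. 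Combining, $\|\hat u - u\|_2 = O(\delta\sigma)$, and $\hat C_f := 2G(\theta)\hat u$ approximates $C_f = 2G(\theta)u$ to error $O(\delta\sigma\cdot G(\theta)) = O(\delta\sigma e^{-\theta^2/2})$.

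The delicate point, and the main obstacle, is the $v$-parallel coefficient of $u$. A naive inversion of the formula for $C_g$ would recover $\tilde u$ to $O(\delta)$ accuracy in both components; the map $\tilde u \mapsto u$ compresses the perpendicular direction by a factor of $\sigma$ but leaves the $v$-direction untouched, so this route would only deliver $O(\delta e^{-\theta^2/2})$ accuracy for $\hat C_f$. The crucial trick is to discard the direct estimate of $a$ altogether and recover it from the constraint $a = \sqrt{1 - b^2}$: the smallness of $b$ converts our $O(\delta\sigma)$ perpendicular accuracy into the quadratically better $O(\delta\sigma^2)$ accuracy for $\hat a$, which suffices for the overall $O(\delta\sigma e^{-\theta^2/2})$ bound.
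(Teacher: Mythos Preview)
Your proof is correct and follows essentially the same approach as the paper's: the same algebraic identification $g(y)=f(A_{v\sigma}^{1/2}y-\theta v)$, the same contamination-divided-by-acceptance bound for part one, and the same key trick of recovering $a$ via $a=\sqrt{1-b^2}$ rather than directly, so that the $O(\delta\sigma)$ accuracy in $b$ upgrades to $O(\delta\sigma^2)$ in $a$. The one place you are slightly less explicit than the paper is the recovery of $\|\tilde u\|_2$: since $\|\tilde u\|_2$ depends on $a$ and $b$, which are what you are trying to extract, the cleanest route (and what the paper does) is to first read off $\sin\beta=(b/\sigma)/\|\tilde u\|_2$ from the direction of $\hat C_g$, solve the scalar equation together with $a^2+b^2=1$ to obtain $b$ to error $O(\sigma\delta)$, and only then compute $\|\tilde u\|_2$ and rescale; invoking $\|\hat C_g\|_2$ is not actually needed for this step.
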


The last part of this lemma is particularly relevant, 
as if we can $\delta$-approximate the degree-$1$ Chow parameters of $g$ for $\delta=O(\eps e^{-\theta^2/2}/\sigma)$ 
(the error to which we can compute $g$), this allows us to $O(\eps)$-approximate our original $f$. 
Of course, this may still not be possible to do with Corollary \ref{weakLTFCor} alone. 
However, we will only be off by a $\sqrt{\log(1/\delta)}$-factor, 
rather than a $\sqrt{\log(1/\eps)}$ factor. 
This is particularly useful if we can pick $\sigma$ to be very small.

\begin{proof}
If all of our samples were exactly coming from $(X,f(X))$, 
then by Lemma \ref{rejectionLem}, the distribution conditional on acceptance 
would be $(Z,f(Z))$ where $Z$ is distributed as $N(-\theta v, A_{v,\sigma})$. 
Letting $Z=A_{v,\sigma}^{1/2}Y-\theta v$, we have that $Y$ is distributed as the standard normal, 
and our distribution is equivalent to $(A_{v,\sigma}^{1/2}Y-\theta v,g(Y))$, where
\begin{align*}
g(y) & = f(A_{v,\sigma}^{1/2}y-\theta v)\\
& = \sgn((av+bw)\cdot (A_{v,\sigma}^{1/2} y-\theta v) + \theta)\\
& = \sgn(A_{v,\sigma}^{1/2}(av+bw)\cdot y + \theta(1-a))\\
& = \sgn((a\sigma v+bw)\cdot y +\theta(1-a))\\
& = \sgn((av+bw/\sigma)\cdot y + \theta(1-a)/\sigma) \;.
\end{align*}
By Lemma \ref{rejectionLem}, the probability of a sample being accepted
is at least $$\sigma \exp(-\theta^2/(2(1-\sigma^2))) - \eps \gg \sigma e^{-\theta^2/2} \;.$$ 
Therefore, the variation distance between the conditional distribution 
and $(A_{v,\sigma}^{1/2}Y-\theta v,g(Y))$ is at most the distance between 
our original distribution and $(X,f(X))$ 
divided by our probability of accepting, or $O(\eps e^{\theta^2/2}/\sigma)$.

For the last statement, note that $\|av+bw/\sigma\|_2^2 \geq {a^2+b^2} =1$, 
and $\|av+bw/\sigma\|_2^2 \leq a^2 + (b/\sigma)^2 = O(1)$. 
This means that $\|av+bw/\sigma\|_2 = \Theta(1)$. 
Hence, $g$ is an LTF with threshold
$$
\theta(1-a)/\sigma\cdot \Theta(1) = O\left(\frac{\theta(1-a^2)}{(1+a)\sigma} \right) 
= O\left(\frac{\theta b^2}{(1+a)\sigma} \right) = O(1) \;.
$$
Therefore, by Lemma \ref{ChowLTFLem}, the degree-$1$ Chow parameters of $g$ 
are a constant multiple of $av+bw/\sigma$. Thus, if $u$ is a $\delta$-approximation 
of the degree-$1$ Chow parameters, we have that 
$\|u/\|u\|_2 -(av+bw/\sigma)/\|(av+bw/\sigma)\|_2\|_2 = O(\delta)$. 
Taking the component perpendicular to $v$, we find that
$$
\frac{b/\sigma}{\sqrt{a^2+(b/\sigma)^2}} = \frac{\sqrt{|u|_2^2-(v\cdot u)^2}}{|u|_2} + O(\delta) =(C+O(\delta)) \;.
$$
We can then solve for $b$ as
$$
b = \sigma \sqrt{\frac{a^2(C+O(\delta)^2)}{(1-(C+O(\delta))^2)}} \;.
$$
Noting that $C$ is bounded away from $1$, 
this allows us to compute $b$ to error $O(\sigma\delta)$. 
We can then compute $a$ to error $O(\sigma\delta)^2$ as $a=\sqrt{1-b^2}$.

Next, we note that
$$
\left\| (av+bw/\sigma) - \frac{u\sqrt{a^2+(b/\sigma)^2}}{\|u\|_2} \right\|_2 = O(\delta).
$$
Considering the part of $\frac{u\sqrt{a^2+(b/\sigma)^2}}{|u|_2}$ orthogonal to $v$, 
we obtain an $O(\delta)$-approximation of $bw/\sigma$. 
This gives us an $O(\delta\sigma)$-approximation of $bw$, 
and combined with an $O(\delta\sigma)$-approximation to $a$, 
we can obtain an $O(\delta\sigma)$-approximation to $av+bw$, 
the defining vector for $f$. By Lemma~\ref{ChowLTFLem}, this is sufficient 
to obtain an $O(\delta\sigma e^{-\theta^2/2})$-approximation 
to the degree-$1$ Chow parameters $f$. 
This completes our proof.
\end{proof}

This allows us to iteratively improve 
our approximations to the Chow parameters of an LTF.
\begin{lemma}\label{LTFiterationLem}
Let $f$ be an LTF with threshold $\theta$. 
Suppose that we are given $\theta$, 
a $\delta$-approximation to the degree-$1$ Chow parameters of $f$, 
and sample access to an $\eps$-corrupted version of $(G,f(G))$.  
Then, if $\epsilon \ll \delta \ll 1$ and $\delta \theta e^{\theta^2/2} = O(1)$, 
there is an algorithm that takes polynomial time and samples, 
and returns an $O(\eps \sqrt{\log(\delta/\eps)}$-approximation to the degree-$1$ 
Chow parameters of $f$.
\end{lemma}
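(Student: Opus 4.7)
The plan is to use the given $\delta$-approximation of the Chow parameters as a rough guess for the defining direction of $f$, localize by rejection sampling in a thin slab around the corresponding separating hyperplane, learn the restricted LTF via the weaker algorithm of Corollary~\ref{weakLTFCor}, and then translate the improved information back to a refined approximation of the original Chow parameters using the last part of Lemma~\ref{restrictionLTFLem}. The key quantitative point is that localization shrinks the error in the parameter we care about (the angular component orthogonal to our guess) by a factor of $\sigma$, while only inflating the effective noise rate by a factor of $1/\sigma$, and the $\sqrt{\log(1/\cdot)}$ factor from Corollary~\ref{weakLTFCor} is then evaluated at the inflated noise rate rather than the original one.

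More concretely, first I would normalize the given Chow-parameter estimate; by Lemma~\ref{ChowLTFLem}, this yields a unit vector $\hat v$ with $\|\hat v - v\|_2 = O(\delta/G(\theta)) = O(\delta e^{\theta^2/2})$, so if $v$ is the true defining vector of $f$, then $\hat v = a v + b w$ for some $w\perp \hat v$ with $|b| = O(\delta e^{\theta^2/2})$ and $a = \sqrt{1-b^2}$. Next I would set the localization width to $\sigma = \Theta(\delta e^{\theta^2/2})$, which is $<1/2$ and satisfies $\sigma\theta = O(1)$ by the hypothesis $\delta\theta e^{\theta^2/2} = O(1)$, and crucially makes $b = O(\sigma)$ so that Lemma~\ref{restrictionLTFLem} applies to $\hat v$ in place of $v$. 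I would then run the $(\hat v, \theta, \sigma)$-rejection procedure on the $\eps$-corrupted samples; by Lemma~\ref{restrictionLTFLem} the resulting samples, after the affine change of variables $y = A_{\hat v,\sigma}^{-1/2}(x + \theta \hat v)$, are $O(\eps e^{\theta^2/2}/\sigma)$-close in total variation to an $\eps'$-corrupted sample from $(Y, g(Y))$ with $Y\sim N(0,I)$, for the LTF $g$ described there and $\eps' = O(\eps e^{\theta^2/2}/\sigma) = O(\eps/\delta)$. I would apply Corollary~\ref{weakLTFCor} to this data to obtain an approximation to the degree-$1$ Chow parameters of $g$ within error
\[
O\bigl(\eps' \sqrt{\log(1/\eps')}\bigr) \;=\; O\!\left(\tfrac{\eps}{\delta}\sqrt{\log(\delta/\eps)}\right).
\]
Finally, the translation statement at the end of Lemma~\ref{restrictionLTFLem} turns this into an approximation of the Chow parameters of $f$ with error
\[
O\!\left(\tfrac{\eps}{\delta}\sqrt{\log(\delta/\eps)} \cdot \sigma \cdot e^{-\theta^2/2}\right) \;=\; O\!\bigl(\eps \sqrt{\log(\delta/\eps)}\bigr),
\]
as required.

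The main obstacle is the choice of $\sigma$ and verifying the hypotheses of Lemma~\ref{restrictionLTFLem} for $\hat v$ rather than the true $v$. One needs $\sigma$ large enough that (i) the rejection acceptance rate $\sigma e^{-\theta^2/2}$ dominates the noise rate $\eps$ (so the conditional total-variation bound is nontrivial), and (ii) the angular error $b$ of the current guess satisfies $b = O(\sigma)$ so that the restricted LTF $g$ has bounded threshold; but small enough that the final multiplicative factor $\sigma$ actually improves the error. The choice $\sigma = \Theta(\delta e^{\theta^2/2})$ is essentially forced by these constraints, and the inequality $\eps \ll \delta$ is exactly what makes $\eps' = \eps/\delta$ well inside the regime where Corollary~\ref{weakLTFCor} runs. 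A secondary technical point is that Lemma~\ref{restrictionLTFLem} is stated around the true $v$, so I would need to observe that the same proof goes through with $\hat v$ once $b = O(\sigma)$, since the argument only uses the decomposition of the defining vector relative to the rejection direction and the bounds on the resulting threshold and norm.
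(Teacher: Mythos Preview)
Your proposal is correct and follows essentially the same approach as the paper: set $\sigma=\Theta(\delta e^{\theta^2/2})$, perform the $(\hat v,\theta,\sigma)$-rejection procedure, estimate the degree-$1$ Chow parameters of the restricted LTF $g$ at the inflated noise rate $O(\eps/\delta)$ via Theorem~\ref{thm:robust-chow-gaussian-uniform}, and then invoke the last part of Lemma~\ref{restrictionLTFLem} to translate back. Two small remarks: (i) your decomposition should read $v=a\hat v+bw$ with $w\perp\hat v$ (the true direction written in terms of the rejection direction), which is exactly how Lemma~\ref{restrictionLTFLem} is stated, so your final worry about ``$\hat v$ versus $v$'' is unnecessary; (ii) what you actually need from Corollary~\ref{weakLTFCor} is the Chow-parameter step inside its proof (i.e., Theorem~\ref{thm:robust-chow-gaussian-uniform}), not the final LTF output.
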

\begin{proof}
Let $\sigma=\delta e^{\theta^2/2}$ and $v$ be the normalization of our approximation 
of the degree-$1$ Chow parameters of $f$. We note that $\theta \sigma =O(1)$. 
We also note that if $f$ is defined by the unit vector $v'$, 
then the degree-$1$ Chow parameters of $f$ are $2G(\theta)v'$, 
which is within $O(\sigma G(\theta))$ of $2G(\theta)v$. 
Therefore, $\|v-v'\|_2 \leq O(\sigma).$ This means that 
$v'=av+bw$ for some $w\perp v$ and $a^2+b^2=1$ with $b=O(\sigma)$. 
Now taking our samples from $(X,f(X))$ and $(v,\theta,\sigma)$-rejection sampling 
based on the first coordinate, by Lemma \ref{restrictionLTFLem} 
we obtain $O(\eps e^{\theta^2/2}/\sigma)=O(\eps/\delta)$-noisy samples 
to $(A_{v\sigma}^{1/2} Y-\theta v, g(Y))$. Inverting the linear transformation in the first coordinate 
and applying the algorithm from Theorem \ref{thm:robust-chow-gaussian-uniform}, 
we obtain an $O(\eps/\delta \sqrt{\log(\eps/\delta)})$-approximation to 
the degree-$1$ Chow parameters of $g$. Applying Lemma \ref{restrictionLTFLem} again, 
this gives us an $O(\eps/\delta \sqrt{\log(\eps/\delta)}e^{\theta^2/2}/\sigma)$-approximation 
to the degree-$1$ Chow parameters of $g$. 
But this is simply an $O(\eps\sqrt{\log(\delta/\eps)})$-approximation, as desired.
\end{proof}

Iterating this result, we immediately obtain the following corollary:
\begin{corollary}\label{LTFAlgCor}
Let $f$ be an LTF with threshold $\theta$ such that $\theta e^{\theta^2/2} = O(\eps^{-1}/\sqrt{\log(1/\eps)})$. 
Then there is an algorithm that given $\theta$ and sample access to an $\eps$-corrupted version of $(G,f(G))$, 
takes polynomial time and samples and returns and $O(\eps)$-approximation 
to the degree-$1$ Chow parameters of $f$.
\end{corollary}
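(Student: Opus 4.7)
The plan is to prove the corollary by iterating Lemma~\ref{LTFiterationLem} starting from the weak initial approximation given by Corollary~\ref{weakLTFCor}. The overall strategy is to produce a sequence of approximations with error $\delta_0 > \delta_1 > \delta_2 > \cdots$ to the degree-$1$ Chow parameters of $f$, where each $\delta_{i+1}$ is obtained by plugging $\delta_i$ into Lemma~\ref{LTFiterationLem}, and to show that this sequence contracts to $O(\eps)$ after only a very small number of rounds.

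First, I would initialize by invoking Corollary~\ref{weakLTFCor}, whose proof actually yields an $\eps\sqrt{\log(1/\eps)}$-approximation to the degree-$1$ Chow parameters of $f$ (not just an approximation of $f$ itself). This gives $\delta_0 = O(\eps\sqrt{\log(1/\eps)})$. The key thing to check before iterating is that the Lemma~\ref{LTFiterationLem} hypothesis $\delta_0 \theta e^{\theta^2/2} = O(1)$ is satisfied: by the corollary's assumption $\theta e^{\theta^2/2} = O(\eps^{-1}/\sqrt{\log(1/\eps)})$, we have $\delta_0 \theta e^{\theta^2/2} = O(\eps\sqrt{\log(1/\eps)}) \cdot O(\eps^{-1}/\sqrt{\log(1/\eps)}) = O(1)$, exactly matching what is required. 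Since the sequence $\delta_i$ will be decreasing, this hypothesis continues to hold throughout the iteration.

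Next I would set up the recursion $\delta_{i+1} = C \eps \sqrt{\log(\delta_i/\eps)}$ guaranteed by Lemma~\ref{LTFiterationLem}. Writing $\delta_i = \eps \cdot r_i$, this becomes $r_{i+1} = C\sqrt{\log r_i}$. Starting from $r_0 = O(\sqrt{\log(1/\eps)})$, a single iteration gives $r_1 = O(\sqrt{\log\log(1/\eps)})$, then $r_2 = O(\sqrt{\log\log\log(1/\eps)})$, and so on. Since $r \mapsto C\sqrt{\log r}$ is a rapid contraction towards a constant, after $O(\log^\ast(1/\eps))$ iterations we have $r_i = O(1)$, i.e., $\delta_i = O(\eps)$. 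At that point the hypothesis $\eps \ll \delta$ of Lemma~\ref{LTFiterationLem} no longer holds and we stop, outputting the current approximation as our final $O(\eps)$-approximation to the degree-$1$ Chow parameters of $f$.

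The main thing to be careful about is that the number of iterations and the samples used per iteration must remain polynomial. The iteration count is doubly-logarithmic in $1/\eps$, and each call to Lemma~\ref{LTFiterationLem} runs Theorem~\ref{thm:robust-chow-gaussian-uniform} on a rejection-sampled input with acceptance probability $\Omega(\sigma e^{-\theta^2/2}) = \Omega(\delta_i)$, which is polynomial in $1/\eps$ since $\delta_i \geq \eps$. Multiplying these gives an overall polynomial cost. There is no real conceptual obstacle here — the work is entirely in bookkeeping the contraction $r_{i+1} = C\sqrt{\log r_i}$ and verifying that the Lemma~\ref{LTFiterationLem} hypotheses (particularly $\delta \theta e^{\theta^2/2} = O(1)$ and $\eps \ll \delta$) remain valid at every stage of the iteration.
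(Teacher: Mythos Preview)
Your proposal is correct and follows essentially the same approach as the paper: initialize with the $O(\eps\sqrt{\log(1/\eps)})$ Chow-parameter approximation from Theorem~\ref{thm:robust-chow-gaussian-uniform}, then iterate Lemma~\ref{LTFiterationLem} until the error stabilizes at $O(\eps)$, checking along the way that the hypothesis $\delta_i\theta e^{\theta^2/2}=O(1)$ is maintained because the $\delta_i$ only decrease. The only minor difference is bookkeeping: the paper uses the stopping rule ``$\delta_i \geq \delta_{i-1}/2$'' and bounds the number of rounds by $\log(\delta_0/\eps)$, whereas you track the recursion $r_{i+1}=C\sqrt{\log r_i}$ directly and obtain the sharper $O(\log^*(1/\eps))$ iteration count---either bound suffices for the polynomial-time claim.
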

\begin{proof}
The algorithm is as follows:
\begin{enumerate}
\item Using Theorem \ref{thm:robust-chow-gaussian-uniform}, we obtain a $\delta_0=O(\eps\sqrt{\log(1/\eps)})$-approximation to the degree-$1$ Chow parameters of $f$.
\item Let $i=0$.
\item \label{iterateStep} Let $i\leftarrow i+1$, and use Lemma \ref{LTFiterationLem} to obtain a $\delta_i = C(\eps\sqrt{\log(\delta_{i-1}/\eps)})$-approximation to the degree-$1$ Chow parameters of $f$, for some sufficiently large $C$.
\item If $\delta_i < \delta_{i-1}/2$, return to Step \ref{iterateStep}.
\item \label{endLoopStep} Letting $v$ be the normalization of the approximation 
to the degree-$1$ Chow parameters of $f$, return the function $\sgn(v\cdot x + \theta)$.
\end{enumerate}
To prove correctness, note that $\delta_i >\eps$ for all $i$, 
and therefore $\delta_i\theta e^{\theta^2/2} = O(1)$ for all $i$, 
so the hypotheses of Lemma \ref{LTFiterationLem} 
are always satisfied in Step \ref{iterateStep}. 
Next note that $\delta_i/\eps = C\sqrt{\log(\delta_i/\eps)}$, 
so the $\delta_i$ are decreasing and always shrinking by a factor of at least $2$, 
unless $\delta_{i-1}=O(\eps)$. Therefore, we reach Step \ref{endLoopStep} 
in at most $\log(\delta_0/\eps)$ iterations, 
and when we do $\delta_i=O(\eps)$. This completes the proof.
\end{proof}

Unfortunately, this algorithm only works when $\theta e^{\theta^2/2} = O(\eps^{-1}/\sqrt{\log(1/\eps)})$, 
while we would need to deal with $\theta e^{\theta^2/2}$ as large as $\eps^{-1}$ to make our algorithm work in general. 
This is for somewhat technical reasons. Essentially, if we have very extreme thresholds, 
our rejection sampling procedure will fail. This happens because the Gaussian we need after restriction is too wide. 
This will mean that we need a reasonable chance of selecting points even further than $\theta$ 
from the origin in the $v$-direction, and this will in turn force our acceptance probability to be too small. 
To correct this issue, we will want to restrict to an even narrower Gaussian. 
Of course, this will make our acceptance probability even smaller, 
and thus the fraction of accepted points that are in error will become much larger. 
However, we will also allow ourselves to vary the exact threshold 
at which we perform our cutoff, this will mean that on average 
the fraction of accepted points that are in error will not be too big.

We can use these ideas to prove an improved version 
of Lemma \ref{LTFiterationLem} that gets around the $\theta e^{\theta^2/2} = O(\eps^{-1})$ 
condition, in exchange for producing a poly-logarithmic number of outputs.

\begin{proposition}\label{LTFiterationProp}
Suppose that we are given real numbers $\theta,\eps,\delta$ 
with $O(\eps\sqrt{\log(1/\eps)}) > \delta > \eps > 0$, 
and $1-|\erf(\theta)|$ is at least a sufficient multiple of $\eps$.

Let $f$ be an LTF with threshold $\theta$. There is an algorithm that given 
$\eps,\delta,\theta$, a $\delta$-approximation, $u$, to the degree-$1$ Chow parameters of $f$, 
and sample access to an $\eps$-corrupted version of $(G,f(G))$, 
takes polynomial time and returns a vector that, with probability 
at least $1/\polylog(\eps)$, is a $(\delta/2+O(\eps))$-approximation 
to the degree-$1$ Chow parameters of $f$.
\end{proposition}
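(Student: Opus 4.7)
We adapt Lemma \ref{LTFiterationLem} so it works even when the condition $\delta\theta e^{\theta^2/2}=O(1)$ may fail. The two new ingredients are (i) allowing the center $\theta'$ of the rejection procedure to differ from $\theta$ (so that the derived LTF $g$ still has threshold $O(1)$), and (ii) randomizing $\theta'$ over a $\polylog(1/\eps)$-sized candidate set to cancel worst-case adversarial behavior.

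Set $v = u/\|u\|_2$ and decompose the true defining vector as $v'=av+bw$ with $w\perp v$, $a^2+b^2=1$ and $b=O(\delta e^{\theta^2/2})$. Generalizing Lemma \ref{restrictionLTFLem} to an arbitrary rejection threshold $\theta'$, the $(v,\theta',\sigma)$-rejection procedure produces samples distributed according to an LTF
\[
g(y)=\sgn\bigl((av+(b/\sigma)w)\cdot y +(\theta-a\theta')/\sigma\bigr),
\]
which (after normalizing its defining vector) has threshold proportional to $(\theta-a\theta')/\sqrt{a^2\sigma^2+b^2}$. Choosing $\theta'$ within $O(\sqrt{a^2\sigma^2+b^2})$ of $\theta/a$ yields threshold $O(1)$, the regime in which Theorem \ref{thm:robust-chow-gaussian-uniform} can recover the Chow parameters of $g$ accurately; the remaining ingredients of Lemma \ref{restrictionLTFLem} (acceptance probability and the inversion from Chow parameters of $g$ to those of $f$) go through with minor modifications. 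Since $a$ is unknown, we choose $\theta'$ uniformly at random from a grid $\Theta'$ of $K=\polylog(1/\eps)$ candidates covering the interval in which $\theta/a$ must lie, with spacing small enough that some grid point is $O(\sigma)$-close to $\theta/a$.

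With probability $1/K$, our random $\theta'$ is the good grid point; separately, an averaging argument shows that the adversary cannot place its $\eps$-fraction of corruption so as to simultaneously overwhelm more than a constant number of the $K$ candidate bands. Hence with probability $\Omega(1/K)=1/\polylog(1/\eps)$, our $\theta'$ is simultaneously (a) within $O(\sigma)$ of $\theta/a$ and (b) such that the restricted distribution has effective noise rate $O(\eps/\delta)$. Conditional on this event, Theorem \ref{thm:robust-chow-gaussian-uniform} gives an $O((\eps/\delta)\sqrt{\log(\delta/\eps)})$ approximation to the Chow parameters of $g$, and the extended inversion yields an $O(\eps\sqrt{\log(\delta/\eps)}) \leq \delta/2 + O(\eps)$ approximation to the Chow parameters of $f$. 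The main obstacle will be rigorously combining the ``good grid point'' condition with the ``noise avoidance'' condition, both of which depend on the random choice $\theta'$, so as to obtain a joint $\polylog(1/\eps)^{-1}$ success probability; a secondary technical step is the generalization of Lemma \ref{restrictionLTFLem} to accommodate arbitrary $\theta'$ and the potentially large ratio $b/\sigma$.
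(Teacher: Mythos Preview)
Your high-level plan is on the right track, but the ``main obstacle'' you flag at the end is in fact fatal to the approach as stated. In your scheme there is essentially a \emph{unique} grid point $\theta'$ satisfying condition (a) (being $O(\sigma)$-close to $\theta/a$), and the omniscient adversary knows $a$, hence knows exactly which grid point this is. Nothing prevents the adversary from concentrating all of its $\eps$-mass so that these points are accepted with probability $1$ precisely under the $(v,\theta',\sigma)$-rejection procedure for that one value of $\theta'$. Your averaging argument only shows that \emph{most} grid points have low effective noise; it cannot force the single distinguished grid point to be among them. So conditions (a) and (b) need not overlap, and the claimed joint success probability $\Omega(1/K)$ does not follow.

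The paper resolves this by decoupling the two sources of randomness. Rather than guessing $\theta'\approx\theta/a$, it first \emph{guesses $b$} as a multiple of $1/\log(1/\eps)$ (this is where the $1/\polylog(1/\eps)$ success probability comes from). Once $b$ is fixed, $a$ is known, and the algorithm then picks $s$ \emph{uniformly at random from the continuous interval} $[a\theta,a\theta+b]$, using $\sigma=1/\theta$. The crucial point is that \emph{every} $s$ in this interval yields a usable restricted LTF $g$ (its threshold is bounded in terms of $(\theta-as)/b$, and the final error calculation shows $\sqrt{\log(1/\eta)}\le O(1)+\delta/(2\eps)$), so condition (a) holds automatically for all $s$. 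The randomness in $s$ is used \emph{solely} for the noise-avoidance step: one computes $\E_s[\Pr(x\text{ accepted})]=O(\sigma/b)$ for any fixed point $x$, hence for most $s$ the accepted corrupted fraction is $O(\eps e^{s^2/2}/b)$. Since the adversary cannot know the random $s$, it cannot target it.

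A secondary issue: you never specify $\sigma$, and the natural choice $\sigma=\delta e^{\theta^2/2}$ from Lemma~\ref{LTFiterationLem} fails here precisely because $\sigma\theta$ may be large. The paper instead takes $\sigma=1/\theta$, which keeps $\sigma\theta=1$ but makes the acceptance probability much smaller; this is what necessitates the averaging argument over $s$, and also what makes the final error calculation (bounding $\sqrt{\log(1/\eta)}$ via $b\theta=O(\delta/(C\eps))$) delicate.
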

\begin{proof}
Note that we may assume for a sufficiently large constant $C$ 
that $1/(C\eps)> \theta e^{\theta^2/2} > C\sqrt{\log(1/\eps)}/\eps$, 
as if the first inequality fails, we have $1-|\erf(f)| < O(\eps)$, 
and if the second fails, we may use Lemma \ref{LTFiterationLem}. 
This implies that $\eps \log(1/\eps)/C \gg G(\theta) \gg C\eps \sqrt{\log(1/\eps)}$. 
We assume this throughout the following.

Let $v=u/\|u\|_2.$ We note that $v$ is a $\delta / G(\theta)$-approximation 
to the defining vector of $f$. We can write this defining vector uniquely 
as $av+bw$ for non-negative real numbers $a,b$ with $a^2+b^2=1$, 
and a vector $w\perp v$. We note that $b=O(\delta / G(\theta)) = O(\delta/ (C\eps\sqrt{\log(1/\eps)}))=O(1/C)$. 
Thus, we may assume that $b$ is less than a sufficiently small constant. 
However, rounding $b$ to the nearest multiple of $1/\log(1/\eps)$ introduces 
a variation distance error of at most $\eps$, and an $O(\eps)$ error in the degree-$1$ Chow parameters. 
Therefore, up to introducing another $O(\eps)$ error in our sampling, 
we may assume that $b$ is a multiple of $1/\log(1/\eps)$. 
Guessing the value of $b$, we note that we are correct with probability $1/\log(1/\eps)$. 
The remainder of this algorithm is conditional on this correctness. 
Thus, henceforth, we will assume that the algorithm 
knows the value of $b$, and hence also knows the value of $a$.


Next, pick a random threshold $s\in [a\theta,a\theta+b]$. 
This will be the threshold that we will try to restrict to.

We will then apply the $(v,s,\sigma)$-rejection procedure with $\sigma=1/\theta$ 
to samples from our noisy version of $(G,f(G))$ rejecting based on the first coordinate. 
If there were no errors, our acceptance probability would be $\sigma e^{-s^2/(2(1-\sigma^2))} = \Omega(\sigma e^{-s^2/2}).$ However, it will be important to know that it is impossible to have our errors 
be too likely to be accepted by this procedure. Now it is possible that, for certain values of $s$, 
we will accept too many errors. However, we wish to show that on average it is not too many. 
In particular, for a point $x$ we consider $\E_s[\Pr(x\textrm{ is accepted})].$ In particular,
\begin{align*}
\E_s[\Pr(x\textrm{ is accepted})] & = \frac{1}{b}\int_{a\theta}^{a\theta+b} \exp(-(\sigma^{-2}-1)(v\cdot x+s/(1-\sigma^2))^2/2)ds\\
& \leq  \frac{1}{b}\int_{-\infty}^{\infty} \exp(-(\sigma^{-2}-1)(v\cdot x+s/(1-\sigma^2))^2/2)ds\\
& = O(\sigma/b) \;.
\end{align*}
This means that, for most $s$, the sum of the fraction of samples that are 
either bad and accepted or would have been accepted if they were not corrupted 
is $O(\eps\sigma/b)$. For such $s$, the fraction of accepted samples that come 
from corrupted samples is at most $O(\eps e^{s^2/2}/b)$. 
We assume in the following that the algorithm found such an $s$.

We will now need to mimic the latter half of Lemma \ref{rejectionLem}. 
In particular, were there no corruptions, the accepted samples 
would be from the distribution $(Z,f(Z))$ with $Z\sim N(-sv,A_{v,\sigma})$, 
though as it stands we have instead an $\eta:=O(\eps e^{s^2/2}/b)$-noisy version of this. 
Letting $Z=A_{v,\sigma}^{1/2}Y-sv$, we find that $Y$ is distributed 
as a standard Gaussian and our distribution 
is close to $(A_{v,\sigma}^{1/2}Y-sv,g(Y))$, where $g$ is the LTF
$$
g(y) = \sgn((av+bw)\cdot (A_{v,\sigma}^{1/2}y-sv) + \theta)  = \sgn((a\sigma v+bw)\cdot y + (\theta-as)) \;.
$$
Note that the threshold of $g$ is
$$
\frac{\theta-as}{\sqrt{(a\sigma)^2 + b^2}},
$$
which has absolute value at most $(\theta -as)/b$.

Now employing Theorem \ref{thm:robust-chow-gaussian-uniform}, 
we can learn the degree-$1$ Chow parameters of $g$ to error $O(\eps e^{s^2/2}/b \sqrt{\log(1/\eps)})$. 
By Lemma \ref{ChowLTFLem}, this allows us to learn the defining vector of $g$ to error
\begin{align*}
O(\eps e^{s^2/2}/b \sqrt{\log(1/\eta)}e^{(\theta -as)^2/2b^2}) & = O(\eps \sqrt{\log(1/\eta)} ) \exp( (s^2+(\theta-as)^2/b^2)/2)\\
& = O(\eps \sqrt{\log(1/\eta)}/b )\exp( (s^2+\theta^2/b^2-2as\theta/b^2+a^2s^2/b^2)/2)\\
& = O(\eps \sqrt{\log(1/\eta)}/b )\exp( (\theta^2+(s/b - \theta a /b)^2)/2)\\
& = O(\eps \sqrt{\log(1/\eta)}/b )e^{\theta^2/2} \;.
\end{align*}
On the other hand, this defining vector is a known constant multiple of $w+a\sigma v/b$. 
Therefore, we can learn $w$ to error $O(\eps \sqrt{\log(1/\eta)}e^{\theta^2/2}/b )$, 
and thus learn the degree-$1$ Chow parameters of $f$ to error $O(\eps \sqrt{\log(1/\eta)})$.

We have that 
\begin{align*}
1/\eta & = O((\delta/\eps) e^{-s^2/2}/G(\theta))\\
& = O((\delta/\eps)) e^{(\theta^2-s^2)/2}\\
& \leq O((\delta/\eps)) \exp(\theta^2 /2 -a^2\theta^2/2)\\
& = O((\delta/\eps)) \exp((b\theta)^2/2) \;.
\end{align*}
Note that $b\theta$ itself cannot be too big. In particular, we have that
\begin{align*}
b\theta & = O(\delta \theta/ G(\theta)) = O(\delta /C \eps).
\end{align*}
Thus,
$$
\sqrt{\log(1/\eta)} = O(1+\sqrt{O(\log(\delta/\eps)+(\delta/\eps)^2/C^2)}) = O(1+\sqrt{O((\delta/\eps)^2/C^2)}) \leq O(1)+\delta/(2\eps) \;.
$$
Therefore, we learn the defining vector of $f$ to error $O(\eps)+\delta/2$.

The final algorithm is as follows:
\begin{enumerate}
\item If $\theta e^{\theta^2/2} < 1/\eps$, use Lemma \ref{LTFiterationLem}.
\item Let $b$ be a random multiple of $1/\log(1/\eps)$ between $0$ and $1$ and let $a$ be the positive real number so that $a^2+b^2=1$.
\item Let $s$ be a uniform random element of $[a\theta,a\theta+b]$.
\item Apply the $(v,s,\sigma)$-rejection procedure with $\sigma=1/\theta$ to our sample set, 
treating the accepted samples as $(A_{v,\sigma}^{1/2}Y-sv,g(Y))$.
\item Assuming that this is an $\eta$-noisy copy of an LTF $g$ with $\eta=O(\eps e^{s^2/2}/b)$, 
use Theorem \ref{thm:robust-chow-gaussian-uniform} to learn the degree-$1$ Chow parameters of $g$ 
to error $O(\eta\sqrt{\log(1/\eta)})$, call these $x$.
\item Let $w$ be the solution to $x/\|x\|_2 = (a\sigma v+bw)/\sqrt{(a\sigma)^2+b^2}$.
\item Return $2G(\theta)(av+bw)$.
\end{enumerate}
\end{proof}

We can now iterate Proposition \ref{LTFiterationProp} to obtain the following:
\begin{corollary}
Suppose that we are given real numbers $\theta,\eps$ with $ \eps > 0$, 
and $1-|\erf(\theta)|$ is at least a sufficient multiple of $\eps$.

Let $f$ be an LTF with threshold $\theta$. 
There is an algorithm that given $\eps,\theta$, and sample access to an $\eps$-corrupted version of $(G,f(G))$, 
takes polynomial time and returns a vector that, with probability at least $\log(1/\eps)^{-O(\log\log(1/\eps))}$, 
is an $O(\eps)$-approximation to the degree-$1$ Chow parameters of $f$.
\end{corollary}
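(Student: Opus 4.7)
The plan is to bootstrap Corollary \ref{weakLTFCor} using Proposition \ref{LTFiterationProp} as an error-halving step. I would first run the algorithm from Corollary \ref{weakLTFCor} to obtain an initial approximation $u_0$ to the degree-$1$ Chow parameters of $f$ with error $\delta_0 \leq C_0 \eps \sqrt{\log(1/\eps)}$ for some absolute constant $C_0$. Note that $\delta_0$ lies in the admissible range $\eps < \delta_0 \leq O(\eps \sqrt{\log(1/\eps)})$ required as input to Proposition \ref{LTFiterationProp}, and the hypothesis $1 - |\erf(\theta)| = \Omega(\eps)$ is exactly what both statements demand. The threshold $\theta$ can be approximated from the $\eps$-corrupted estimate of $\E[f]$ at additive cost $O(\eps)$ as discussed before Lemma \ref{LTFDistLem}.

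Next, I would iterate Proposition \ref{LTFiterationProp}: given a $\delta_i$-approximation $u_i$, invoke the proposition with parameters $(\eps,\delta_i,\theta)$ to obtain $u_{i+1}$, which (when successful) is a $\delta_{i+1}$-approximation with $\delta_{i+1} \leq \delta_i / 2 + C_1 \eps$ for an absolute constant $C_1$. Unrolling the recursion gives $\delta_i \leq 2^{-i} \delta_0 + 2 C_1 \eps$, so after $k = \lceil \log_2(\delta_0 / \eps)\rceil = O(\log \log(1/\eps))$ rounds the first term falls below $\eps$ and we obtain a $O(\eps)$-approximation, as required. Throughout the iteration the hypothesis $\eps < \delta_i \leq O(\eps \sqrt{\log(1/\eps)})$ of Proposition \ref{LTFiterationProp} remains valid (we simply terminate the moment $\delta_i \leq C' \eps$). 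Sampling complexity is polynomial since each round calls a $\poly(n/\eps)$-time subroutine a constant number of times.

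The main obstacle is that Proposition \ref{LTFiterationProp} succeeds only with probability $1/\polylog(1/\eps)$, arising from the random guess of the component $b$ in its proof (a multiple of $1/\log(1/\eps)$) and of the random cutoff $s$; moreover, in the nasty setting we have no certified way to know at run time whether an iteration succeeded. Since we must chain $k = O(\log \log(1/\eps))$ iterations, the naive union bound on the joint success gives overall probability at least
\[
\bigl(1/\polylog(1/\eps)\bigr)^{O(\log\log(1/\eps))} \;=\; \log(1/\eps)^{-O(\log\log(1/\eps))},
\]
matching the stated bound. The algorithm I would output simply returns the final $u_k$; amplifying to high probability and selecting the correct candidate among repeated trials will be handled externally in the proof of Theorem \ref{LTFThrm} by a subsequent hypothesis-testing step based on the ability to robustly estimate agreement with $f$.
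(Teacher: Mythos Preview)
Your proposal is correct and follows essentially the same approach as the paper: initialize with an $O(\eps\sqrt{\log(1/\eps)})$-approximation to the Chow parameters via Theorem~\ref{thm:robust-chow-gaussian-uniform}, iterate Proposition~\ref{LTFiterationProp} for $O(\log\log(1/\eps))$ rounds with the recursion $\delta_i = \delta_{i-1}/2 + O(\eps)$, and multiply the per-round success probabilities. The only cosmetic difference is that the paper runs for a fixed number of rounds rather than terminating when $\delta_i \leq C'\eps$, and your remark about estimating $\theta$ is unnecessary here since $\theta$ is given as input in this corollary.
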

\begin{proof}
The algorithm is as follows:
\begin{enumerate}
\item Let $C$ be a sufficiently large constant.
\item Run Theorem \ref{thm:robust-chow-gaussian-uniform} to compute $u_0$, a $\delta_0:=C\eps\sqrt{\log(1/\eps)}$-approximation of the degree-$1$ Chow parameters.
\item For $i=1$ to $C\log\log(1/\eps)$
\begin{enumerate}
\item Let $u_i$ be the output of the algorithm from Proposition \ref{LTFiterationProp} run on our samples with inputs $\eps,\theta,\delta_{i-1},u_{i-1}$.
\item Let $\delta_i = \delta_{i-1}/2+C\eps.$
\end{enumerate}
\item Return $u_i$.
\end{enumerate}

Note that by induction on $i$, we have that, with probability at least $\polylog(1/\eps)^i$, $u_i$ 
is a $\delta_i$-approximation of the degree-$1$ Chow parameters of $f$. 
Note also that $\delta_i = \delta_0/2^i +O(C\eps)$. 
Therefore, with probability at least $\log(1/\eps)^{-O(\log\log(1/\eps))}$, 
the final returned value is an $O(\eps)$ approximation to the degree-$1$ Chow parameters of $f$.
\end{proof}

We may now prove Theorem \ref{LTFThrm}.
\begin{proof}
First, compute an $\eps$ approximation of $\E[f]$, and pick a $\theta$ so that $\erf(\theta)=\E[f]+O(\eps)$. 
Up to increasing the error by a constant factor, we may assume that $f$ has threshold $\theta$. 
If $1-|\erf(\theta)|$ is less than a constant multiple of $\eps$, 
we may return the constant function $f=\sgn(\theta)$. 
Otherwise, running the above corollary $\log(1/\eps)^{O(\log\log(1/\eps))}$ times, 
we come up with $\log(1/\eps)^{O(\log\log(1/\eps))}$ different hypotheses for $f$ with the promise that 
at least one of them is within $O(\eps)$ with probability at least $9/10$. 
Running a standard hypothesis testing procedure (see, e.g.,~\cite{DDS12stoc,DDS15}) over these possibilities, 
we obtain our final answer.
\end{proof}

\section{Robust Learning of Intersections of LTFs under the Gaussian Distribution} \label{sec:intersections}

In this section, we prove our algorithmic result for intersections of LTFs.
Specifically, we show the following theorem, a detailed version 
of Theorem~\ref{thm:intersections-informal}:

\begin{theorem} [Learning Intersections of LTFs with Nasty Noise] \label{thm:intersections}
There exists an algorithm that given $k \in \Z_+,\eps>0$, 
and sample access to an $\eps$-corrupted set of labeled samples from $f: \R^n \to \{\pm 1\}$, 
the indicator function of an intersection of $k$ LTFs, with respect to the Gaussian distribution $N(0, I)$, 
draws $\poly(n, k, 1/\eps)$ samples and takes $\poly(n, k, 1/\eps)+(k/\eps)^{O(k^2)}$
time to compute an intersection of $k$ LTFs hypothesis $h$ that, with probability at least $9/10$, satisfies
$\Pr_{X \sim N(0, I)} [h(X) \neq f(X)] \leq O(\eps^{1/11}k^{4/11}\log^{3/11}(k/\eps))$.
\end{theorem}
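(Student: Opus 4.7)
The plan is to follow the two-phase approach sketched in Section~\ref{ssec:techniques}: use the robust Chow-parameter estimator from Theorem~\ref{thm:robust-chow-informal} to isolate a low-dimensional subspace $V$ that captures (approximately) all the information about $f$, and then brute-force enumerate over intersections of $k$ halfspaces supported on $V$.

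First, I would invoke Theorem~\ref{thm:robust-chow-informal} with $d=2$ on the $\eps$-corrupted sample to obtain, in $\poly(n,1/\eps)$ time, approximations to the degree-$\le 2$ Chow parameters of $f$ with Chow error $\delta = \eps\cdot O(\log(1/\eps))$. Writing these in the Hermite basis for $N(0,I)$, I get an estimated degree-$1$ vector $\hat a\in\R^n$ and an estimated degree-$2$ symmetric matrix $\hat M\in\R^{n\times n}$. I then let $V\subseteq\R^n$ be the $(k+1)$-dimensional subspace spanned by $\hat a$ and the top $k$ eigenvectors of $\hat M$ (by absolute value). Since the true $f$ depends only on $k$ linear functions $w_1,\dots,w_k$, the true degree-$2$ Chow matrix has rank at most $\binom{k+1}{2}=O(k^2)$ outside the identity component; by Weyl's inequality and the $\delta$-closeness of $\hat M$ to the truth (in the operator norm implied by the Chow-distance guarantee), all eigenvalues of $\hat M$ past the top $O(k^2)$ are of magnitude $O(\delta)$. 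So, replacing $k$ by $O(k^2)$ in the definition of $V$, I can ensure that for every unit vector $v\perp V$, $f$ is $O(\delta)$-uncorrelated with every normalized, mean-zero, degree-$\le 2$ polynomial of $v\cdot G$.

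Next, I apply Theorem~\ref{thm:structural-ltfs-informal} to each such $v$: this yields $\E[|f(G)-f(G')|]\le \mathrm{poly}(\delta)\cdot\mathrm{poly}(k)$ whenever $G,G'$ are coupled to agree on $V$ and be independent in the $v$-direction. To upgrade this one-direction statement to the joint statement \textquotedblleft $f$ is close to a function of $\pi_V(x)$\textquotedblright, I would use a hybrid/coupling argument: walk from $G$ to an independent copy on $V^\perp$ along $n-\dim V$ coordinate directions spanning $V^\perp$, applying the one-direction bound at each step and a triangle inequality. This shows that $f$ is within $\mathrm{poly}(\delta)\cdot\mathrm{poly}(k)$ of $\tilde f(x):=\E[f(G)\mid \pi_V(G)=\pi_V(x)]$ in $L_1$, and hence within the same quantity of the Boolean function $\mathrm{sgn}(\tilde f-\tfrac12)$, which depends only on $\pi_V(x)$.

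Finally, since $f$ is an intersection of $k$ halfspaces to begin with, the above implies that $f$ is $L_1$-close to an intersection of $k$ halfspaces whose normal vectors and thresholds lie (effectively) in $V$; standard arguments on rounding an arbitrary Boolean function close to an intersection of $k$ halfspaces preserve this to within an extra factor of $\mathrm{poly}(k)$ in the error. I would then enumerate a $\mathrm{poly}(\eps)$-net of all intersections of $k$ halfspaces in $V\cong\R^{k+1}$: this has size $(k/\eps)^{O(k^2)}$. For each candidate $h$, I lift back to $\R^n$ and run a Scheff\'e-style hypothesis-selection tournament on a fresh $\eps$-corrupted sample; the $\eps$-fraction of corrupted examples only shifts each pairwise disagreement estimate by $O(\eps)$, so the tournament robustly returns a hypothesis within $O(\eps)$ of the best net point. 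Optimizing the exponents in the $\mathrm{poly}(\delta)\cdot\mathrm{poly}(k)$ bound from Theorem~\ref{thm:structural-ltfs-informal} and adding the $O(\eps)$ tournament loss produces the stated $O(\eps^{1/11}k^{4/11}\log^{3/11}(k/\eps))$ error. The main obstacle, and the step I expect to carry all the technical weight, is the hybrid argument in the previous paragraph together with the proof of Theorem~\ref{thm:structural-ltfs-informal} itself: getting the right polynomial dependence on $k$ (rather than a dimension-dependent factor) when moving from the single-direction decoupling bound to a simultaneous statement about the full orthogonal complement $V^\perp$ is what governs the final exponent.
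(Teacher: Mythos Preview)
Your approach matches the paper's, but two points need correcting.

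First, the true degree-$2$ Chow matrix $M_0 = \E[f(G)(GG^T - I)]$ has rank at most $k$, not $\binom{k+1}{2}$. If $f$ depends only on $\pi_W(G)$ for $W = \mathrm{span}(w_1,\dots,w_k)$, then for any unit $u \perp W$ and any $w \in \R^n$, independence of $u\cdot G$ from $(f(G),\, w\cdot G - (u\cdot w)(u\cdot G))$ gives $u^T M_0 w = 0$; hence $M_0$ vanishes on $W^\perp$. So the paper takes $V$ to be exactly $(k{+}1)$-dimensional (spanned by $\hat a$ and the top $k$ eigenvectors of $\hat M$), and this is what yields the stated $(k/\eps)^{O(k^2)}$ cover size; your $O(k^2)$-dimensional $V$ would give $(k/\eps)^{O(k^3)}$.

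Second, the obstacle you flag at the end is real, and the fix is one line you are missing: since already $f(x) = f(\pi_W(x))$, one may restrict the whole problem to $V \oplus W$, which has dimension at most $(k{+}1)+k$. The hybrid then walks along a basis of $V^\perp$ inside this restricted space, i.e., at most $k$ steps, picking up only a factor of $k$ rather than anything depending on $n$. The paper also extracts the target $g$ from this hybrid directly: once you know $\E[|f(G_V, G_{V^\perp}) - f(G_V, G'_{V^\perp})|]$ is small for independent $G_{V^\perp}, G'_{V^\perp}$, fix $G'_{V^\perp}$ to a typical value $x_0$ and set $g(\pi_V(x)) := f(\pi_V(x), x_0)$, which is automatically an intersection of $k$ halfspaces on $V$. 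This replaces your averaging-then-thresholding step, which as written does not obviously produce an intersection of halfspaces.
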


Our algorithm makes essential use of the following structural result, a detailed version of 
Theorem~\ref{thm:structural-ltfs-informal}, whose proof is deferred to the following subsection:

\begin{proposition} [Robust Inverse Independence for Intersections of LTFs] \label{halfspaceStrcutureProp}
Let $f:\R^n\rightarrow \{0, 1\}$ be the indicator function of an intersection of $k$ LTFs. 
Suppose that there is some unit vector $v$ so that for $p$ any degree at most $2$ 
polynomial with $\E[p(G)]=0$ and $\E[p^2(G)]=1$ 
we have that $|\E[f(G)p(v\cdot G)]|<\delta$. Then if $G$ and $G'$ are Gaussians 
that are correlated to be the same in the directions orthogonal to $v$ 
and independent in the $v$-direction, then $\E[|f(G)-f(G')|] \leq O(\delta^{1/11}k^{4/11}\log^{2/11}(k/\delta))$.
\end{proposition}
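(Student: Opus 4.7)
The plan is to reduce to a one-dimensional Hermite-analytic question about the slice function $g:\R\to[0,1]$ defined by $g(s):=\E[f(G)\mid v\cdot G=s]$, which equals the Gaussian measure in $v^\perp$ of the translating polytope $K(s):=\{y\in v^\perp:\tilde w_i\cdot y\le t_i-s(w_i\cdot v),\ \forall i\}$, where $f(x)=\prod_i\mathbf{1}[w_i\cdot x\le t_i]$ and $\tilde w_i$ is the projection of $w_i$ onto $v^\perp$. For $y\perp v$ let $[a(y),b(y)]$ be the (possibly infinite or empty) interval slice $\{s:f(y+sv)=1\}$ and $\mu(y):=\Phi(b(y))-\Phi(a(y))$. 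A direct coupling computation gives $\E[|f(G)-f(G')|]=2\,\E_y[\mu(y)(1-\mu(y))]$, so it suffices to bound this quantity.

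First I would rewrite the hypothesis as a low-degree Hermite bound on $g$: taking the mean-zero unit-variance polynomials $p=H_1(s)=s$ and $p=H_2(s)=(s^2-1)/\sqrt 2$ (normalized Hermite polynomials) yields $|\langle g,H_1\rangle|,\,|\langle g,H_2\rangle|\le\delta$ in $L^2(N(0,1))$, while the degree-$0$ part $c_0=\E[f]$ is unconstrained. Next I would exploit the structural fact that $g(s)$ is the Gaussian measure of an intersection of $k$ halfspaces whose faces shift linearly in $s$. By the Nazarov / Klivans-O'Donnell-Servedio bound $O(\sqrt{\log k})$ on the Gaussian surface area of intersections of $k$ halfspaces, applied slice-by-slice to $K(s)$, one obtains a derivative bound $|g'(s)|=O(\sqrt{\log k})$ outside a small set of $s$ where some $\sqrt{1-(w_i\cdot v)^2}$ becomes tiny. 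Applying the same surface-area estimate to lower-dimensional face intersections yields bounds on higher derivatives $g^{(j)}$, and Stein's identity $\langle g,H_j\rangle=\E[g^{(j)}(Z)]/\sqrt{j!}$ then converts these geometric derivative bounds into quantitative control on the higher Hermite coefficients.

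Combining the two inputs, I would truncate to $|s|\le R\sim\sqrt{\log(1/\delta)}$, bound $\sum_{j\ge 3}c_j^2$ using the derivative estimates inside this window, and deduce that $\mathrm{Var}_Z(g(Z))=\sum_{j\ge 1}c_j^2$ is small. To translate near-constancy of $g$ back to the target, I would use $\E_y[\mu^2]=\E_{s,s'\sim N(0,1)}[\gamma_{v^\perp}(K(s)\cap K(s'))]$: the set $K(s)\cap K(s')$ is itself an intersection of at most $2k$ halfspaces, and from $K(s)\cap K(s')\subseteq K(s),K(s')$ together with the linear-in-$s$ shift structure one can compare $\gamma(K(s)\cap K(s'))$ to $\min(g(s),g(s'))$ uniformly; when $g$ is near-constant this forces $\E_y[\mu^2]\approx\E[f]$, so $\E_y[\mu(1-\mu)]=\E[f]-\E_y[\mu^2]$ is small, as desired.

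The main obstacle is the tail bound $\sum_{j\ge 3}c_j^2$. The naive Gaussian Poincar\'e inequality $\sum_j j\,c_j^2=\E[g'(Z)^2]\le\|g'\|_\infty^2=O(\log k)$ is by itself far too weak, and the degree-$1$ and degree-$2$ hypotheses alone give nothing for higher $j$. Overcoming this requires carefully combining the two degree-$\le 2$ Hermite bounds with the higher-order surface-area control, while separately treating halfspaces with tiny $\sqrt{1-(w_i\cdot v)^2}$ (which make $g$ nonsmooth and must be handled by an explicit tail correction). The unusual final rate $\delta^{1/11}k^{4/11}\log^{2/11}(k/\delta)$ emerges from optimizing jointly over the truncation radius $R$, the degree $J$ beyond which one bounds the Hermite tail via smoothness, and the threshold separating ``aligned'' from ``orthogonal'' halfspaces.
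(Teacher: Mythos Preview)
Your approach diverges substantially from the paper's, and there are two genuine gaps that I do not see how to close along the lines you sketch.

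\textbf{Gap 1: controlling the Hermite tail.} You correctly reduce the hypothesis to $|c_1|,|c_2|\le\delta$ for the slice function $g$, and you propose to bound $\sum_{j\ge 3}c_j^2$ via derivative estimates coming from Gaussian surface area. But smoothness by itself cannot force $c_3,c_4,\dots$ to be small \emph{in terms of $\delta$}: take any smooth bounded function proportional to $H_3$; it has $c_1=c_2=0$ and $c_3$ of order one. At best, surface-area bounds give $|g^{(j)}|\le C_j(k)$ uniformly, hence $|c_j|\le C_j(k)/\sqrt{j!}$, which yields a bound on $\sum_{j\ge 3}c_j^2$ depending only on $k$ and not on $\delta$. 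There is no interpolation principle that turns ``$c_1,c_2$ small and $g$ smooth'' into ``$c_j$ small for $j\ge 3$''. The paper never attempts this; instead it uses the key structural fact you omit entirely: $g$ is \emph{log-concave} (it is a marginal of the log-concave density $f\cdot\varphi$), hence unimodal. The paper's second lemma shows directly that any log-concave $g:\R\to[0,1]$ which varies by $\beta$ between two points $x,y$ that are not in the $\alpha$-tails and satisfy $|x-y|>\alpha$ must correlate $\gg\beta\alpha^6$ with some mean-zero, unit-variance degree-$\le 2$ polynomial. This replaces your Hermite-tail estimate completely.

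\textbf{Gap 2: from near-constant $g$ to small $\E_y[\mu(1-\mu)]$.} Your identity $\E_y[\mu^2]=\E_{s,s'}[\gamma_{v^\perp}(K(s)\cap K(s'))]$ is correct, but ``the linear-in-$s$ shift structure lets one compare $\gamma(K(s)\cap K(s'))$ to $\min(g(s),g(s'))$'' is the whole difficulty, not an afterthought. A family of polytopes can have constant Gaussian measure while sliding, so that $K(s)\cap K(s')$ has much smaller measure than either; nothing in your outline rules this out. The paper handles exactly this step in its first lemma: if the slices $f(x,\cdot)$ and $f(y,\cdot)$ differ in $L_1$ by $\gamma$, then there is a point $z\in[x,y]$ with $g(z)$ differing from one of $g(x),g(y)$ by $\gg\gamma^5/(k^4\log^2(k/\gamma))$. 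The proof projects onto the direction $u$ of a single separating halfspace (one of the $k$), obtaining a two-variable log-concave function $h(a,b)$, and exploits the strict inequality in the Pr\'ekopa--Leindler/log-concavity bound to force $g(z)>\min(g(x),g(y))$ quantitatively.

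In short, the paper's proof is a contrapositive argument built entirely on log-concavity of one- and two-dimensional marginals of $f\cdot\varphi$; it uses no Hermite expansion, no Stein identity, and no surface-area bounds. The exponent $1/11$ arises as $1/11=1/(5+6)$ from concatenating the $\gamma^5$ loss in the first lemma with the $\alpha^6$ loss in the second. Your Hermite/surface-area route would need an independent substitute for log-concavity at both steps, and none is supplied.
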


Given the above proposition, the algorithm to establish Theorem~\ref{thm:intersections}
is quite simple.

\subsection{Proof of Theorem~\ref{thm:intersections}}
The idea of the algorithm is quite simple. 
Using the algorithm from Theorem \ref{thm:robust-chow-gaussian-uniform}, 
we compute approximations to the degree-$1$ and degree-$2$ Chow parameters of $f$. 
Note that these Chow parameters allow us to approximate $\E[f(G)p(G)]$ 
for any degree at most $2$ polynomial $p$. 
Using Proposition \ref{halfspaceStrcutureProp}, this allows us to identify a low-dimensional subspace $V$, 
so that $f(x)$ is close in variation distance to $g(\pi_V(x))$, 
for $g$ the indicator function of an intersection of $k$ LTFs. 
However, since $g$ is defined on a low-dimensional space, 
we can easily determine a sufficient $g$ using standard cover arguments. 
The algorithm is as follows:
\begin{enumerate}
\item Using the algorithm from Theorem \ref{thm:robust-chow-gaussian-uniform} to compute $v$ and $\Sigma$, 
which are $O(\eps\log(1/\eps))$-approximations to the degree-$1$ and degree-$2$ Chow parameters of $f$, respectively.
\item Let $V$ be the subspace spanned by $v$ and the eigenvectors of $\Sigma$ corresponding to the $k$ largest eigenvalues.
\item Let $\delta$ be a sufficiently large multiple of $\eps^{1/11}k^{4/11}\log^{3/11}(k/\eps)$.
\item Let $\mathcal{C}$ be a $\delta$-cover of the set of intersections of $k$ LTFs on $V$.
\item Using a standard hypothesis testing routine (tournament), 
find an element $g$ of $\mathcal{C}$ so that $(G,g(\pi_V(G)))$ is $\delta$-close to $(G,f(G))$.
\item Return $h(x)=g(\pi_V(x))$.
\end{enumerate}
To analyze this algorithm, we would first like to use Proposition \ref{halfspaceStrcutureProp} 
to show that $f$ is $\delta$-close to being a function of the form $g(\pi_V(x))$, 
for $g$ some intersection of LTFs. To do this we need to show that, for $u$ of unit norm orthogonal to $V$, 
for any normalized, mean $0$ polynomial $p$ it holds that $\E[f(G)p(u\cdot G)]$ is small. 
Note that $p(u\cdot G)$ is a linear combination of $u\cdot G$ and $(u\cdot G)^2-1$ with $O(1)$ coefficients. 
Letting $v_0$ and $\Sigma_0$ be the true degree-$1$ and degree-$2$ Chow parameters of $f$, 
we have that $\E[f(G)(u\cdot G)] = u\cdot v_0$ and $\E[f(G)((u\cdot G)^2-1)] = u^T \Sigma_0 u.$ 
We need to show that each of these are small.

Since $u\perp V$, we have $u\cdot v=0$ and thus that $u\cdot v_0 = u\cdot v+u\cdot (v_0-v) = O(\eps\log(1/\eps))$.

The other term is slightly more challenging. 
We similarly have that $u^T\Sigma_0 u = u^T \Sigma u +O(\eps\log(1/\eps)$. 
Since $u$ is orthogonal to the top $k$ eigenvectors of $\Sigma$, 
it must be the case that $u^T \Sigma u \leq \lambda_{k+1}$, the $(k+1)^{st}$ eigenvalue of $\Sigma$. 
We need to show that this is small. To do so, we will show that for any subspace $W$ of dimension $k+1$, 
there exists a unit vector $w\in W$ with $w^T\Sigma w$ small. 
For this, we note that since $\Sigma_0$ is rank $k$, there exists such a $w$ in the kernel of $\Sigma_0$. 
For this $w$, we thus have that $w^T\Sigma w = w^T(\Sigma-\Sigma_0)w=O(\eps\log(1/\eps))$. 
Therefore, $\lambda_{k+1} = O(\eps\log(1/\eps))$, and thus, $u^T\Sigma_0 u= O(\eps\log(1/\eps))$.

Now applying Proposition \ref{halfspaceStrcutureProp}, 
we know that $f$ is $\delta$-close to $g(\pi_V(x))$, for $g$ some intersection of LTFs.

The remaining analysis is straightforward. We can easily produce a $\delta$-cover of size $O(k/\delta)^{k(k+1)}$, 
since we only need an intersection of $k$ LTFs in $(k+1)$-dimensions. 
We know by the above that some $g$ should cause the distributions in question to be close, 
and the hypothesis testing procedure will find it with an appropriate number of samples. 
This completes the proof. \qed

\subsection{Proof of Proposition~\ref{halfspaceStrcutureProp}}

We first separate out the coordinates of $f$ into those in the $v$-direction, 
and those in orthogonal directions. We let $f(x,y)$, where $x\in \R$ and $y\in \R^n$, $y\cdot v=0$ denote $f(xv+y)$.

We proceed to prove the contrapositive. 
Assume that $\E[|f(G)-f(G')|] = \E[|f(x,G)-f(x',G)|] > \eta$ 
and show that there is some $p$ with $|\E[f(G)p(v\cdot G)]|$ large. 
Our basic idea will be to consider the projection of $f$ onto the line defined by $v$. 
Namely, let
$$
g(x) = \E[f(x,G)] \;.
$$
We note that $g$ is the projection of a log-concave function, 
and therefore, is log-concave. 
In particular, this means that $g$ is unimodal. 
If we can show that $g$ is not too close to being constant, we will obtain our result.

To do this, we note that if $\E[|f(x,G)-f(x',G)|]$ is large, 
there must be some pair $x$ and $y$ so that $f(x,z)$ and $f(y,z)$ 
are far apart as functions of $z$. We claim that this will imply that 
$g(x), g(y)$, and $g(z)$ cannot be close for all $z$ 
between $x$ and $y$. In particular, we show:
\begin{lemma}
Suppose that for some $x, y$ that $\|f(x,w)-f(y,w)\|_1 > \gamma$ 
(where the $L_1$-norm is taken over $w$ being assigned Gaussian values). 
Then, there exists a $z$ between $x$ and $y$ 
so that some pair of $g(x), g(y)$, and $g(z)$ differ by at least
$$\Omega\left(\frac{\gamma^5}{k^4\log^2(2k/\gamma )}\right) \;.$$
\end{lemma}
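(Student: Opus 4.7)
The strategy is a dichotomy on $|g(x)-g(y)|$ followed by a combinatorial argument that exploits (i) the interval structure of the slices $\{z:f(z,w)=1\}$, (ii) a pigeonhole over the $k$ halfspaces, and (iii) Gaussian anti-concentration. Assume WLOG $x<y$ and set $\eta:=c\gamma^{5}/(k^{4}\log^{2}(2k/\gamma))$ for a small absolute constant $c$. The target is a $z\in[x,y]$ for which some pair of $\{g(x),g(y),g(z)\}$ is separated by at least $\eta$; if $|g(x)-g(y)|\geq\eta$ we are done, so throughout we work under $|g(x)-g(y)|<\eta$.

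Introduce $A=\{w:f(x,w)=1,\,f(y,w)=0\}$ and $B=\{w:f(x,w)=0,\,f(y,w)=1\}$, so $\Pr[A]+\Pr[B]\geq\gamma$. The assumption $|g(x)-g(y)|<\eta$ is $|\Pr[A]-\Pr[B]|<\eta$, hence $\Pr[A],\Pr[B]\geq\gamma/3$. Since $f$ is the indicator of a convex set in $\R^{n}$, the slice $\{z:f(z,w)=1\}$ is an interval $I(w)$ for every $w$; for $w\in A$ its right endpoint $s(w)\in(x,y]$ is the minimum of the affine functions $z_{i}(w)=(b_{i}-\beta_{i}\cdot w)/\alpha_{i}$ over halfspaces $i$ that ``shrink'' as $z$ grows, while for $w\in B$ the left endpoint $t(w)\in[x,y)$ is the maximum over the ``growing'' halfspaces. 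Pigeonholing on which halfspace attains the extremum yields indices $j_{0},j_{1}$ with $\Pr[A^{j_{0}}],\Pr[B^{j_{1}}]\geq\gamma/(3k)$, on which $s$ and $t$ reduce to single affine Gaussian statistics of $w$.

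A direct bookkeeping gives, for $z\in[x,y]$,
\[
g(z)-g(x) \;=\; \Pr[w\in B,\;t(w)<z] \;-\; \Pr[w\in A,\;s(w)\leq z] \;+\; \Pr[w\in C,\;z\in I(w)],
\]
where $C=\{w:I(w)\subset(x,y)\}$ and the last term is non-negative. Under the contrapositive assumption $|g(z)-g(x)|\leq\eta$ for all $z\in[x,y]$ (noting log-concavity of $g$ rules out a large downward excursion from $g(x)\approx g(y)$), the entry-CDF of $B$ and the exit-CDF of $A$ must agree up to $O(\eta)$ pointwise. Since $s$ is affine on $A^{j_{0}}$ and $t$ is affine on $B^{j_{1}}$, Gaussian anti-concentration bounds the conditional densities of $s(w)\mid w\in A^{j_{0}}$ and of $t(w)\mid w\in B^{j_{1}}$ by a one-dimensional Gaussian density, and truncating $w$ to a Euclidean ball of radius $R=O(\sqrt{\log(k/\gamma)})$ (losing only $O(\gamma)$ Gaussian mass) confines both densities to a sub-interval of length $O(R\cdot(y-x))$. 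A sliding-window argument on this sub-interval then produces a $z$ where either (a) the entry-density of $B$ exceeds the exit-density of $A$ enough to force $g(z)-g(x)\gtrsim \gamma^{5}/(k^{4}\log^{2}(2k/\gamma))$, contradicting the assumption, or (b) both densities are concentrated in a slab so narrow that the pigeonhole lower bound $\Pr[A^{j_{0}}],\Pr[B^{j_{1}}]\geq\gamma/(3k)$ cannot be sustained, again yielding a contradiction.

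\textbf{Main obstacle.} The delicate step is the CDF / anti-concentration comparison: the ``auxiliary'' mass in $A\setminus A^{j_{0}}$ and $B\setminus B^{j_{1}}$, where the binding halfspace differs from $j_{0},j_{1}$, can contaminate both the entry- and exit-CDFs. I expect to handle this by a further stratification over which subset of constraints is binding, using a $\log(k/\gamma)$-fine net on the affine coefficients $(\alpha_{i},\beta_{i},b_{i})$ together with a union bound over the $k$ halfspaces. The two pigeonhole factors of $1/k$ (on $A$ and $B$) together with the $k^{2}\log^{2}(2k/\gamma)$ cost of this net / stratification yield the $k^{4}\log^{2}(2k/\gamma)$ denominator, while the product $\Pr[A^{j_{0}}]\cdot\Pr[B^{j_{1}}]$ multiplied by the anti-concentration window width -- itself of order $\eta/\gamma$ once the CDFs are $\eta$-close -- accounts for the $\gamma^{5}$ in the numerator.
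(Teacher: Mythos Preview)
Your proposal has a genuine gap that you yourself flag as the ``main obstacle,'' and the suggested fix does not work. After pigeonholing to $A^{j_0}$ and $B^{j_1}$, the identity
\[
g(z)-g(x)\;=\;\Pr[B,\;t(w)<z]\;-\;\Pr[A,\;s(w)\le z]\;+\;\Pr[C,\;z\in I(w)]
\]
still involves \emph{all} of $A$ and $B$, so closeness of $g(z)$ to $g(x)$ forces the \emph{full} entry- and exit-CDFs to track one another, not their restrictions to $A^{j_0}$ and $B^{j_1}$. The contaminating mass in $A\setminus A^{j_0}$ (resp.\ $B\setminus B^{j_1}$) can be as large as $(1-1/k)\Pr[A]$, so you cannot isolate the single-affine-statistic behavior you need for anti-concentration. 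Your proposed remedy of stratifying by which subset of constraints is binding and netting the parameters $(\alpha_i,\beta_i,b_i)$ does not help: the number of binding patterns is $2^{\Theta(k)}$, not $\mathrm{poly}(k)$, and discretizing the coefficients does not control which constraint is active at a given $w$. The final parameter accounting (two factors of $1/k$ from pigeonhole, a $k^2\log^2$ from the net, $\gamma^5$ from a product of probabilities times a window width) reads as reverse-engineered to match the stated bound rather than derived.

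The paper's argument is entirely different and avoids all of this. It uses only \emph{one} pigeonhole, on the side where $f_x=0$: pick a single halfspace $H=\{u\cdot w\ge s\}$ with $\Pr[f_y(G)=1,\,G\in H]\ge\gamma/(2k)$ while $f_x\equiv 0$ on $H$. Then project onto the $u$-line: $h(a,b)=\E[f_a(G)\mid u\cdot G=b]$ is jointly log-concave, and $g(a)=\int G(t)h(a,t)\,dt$ with $H(a,t)=G(t)h(a,t)$. Log-concavity already gives $g(z)\ge g(x)^{\alpha}g(y)^{1-\alpha}$ via the sup-convolution $H'(t)=\sup_{\alpha a+(1-\alpha)b=t}H(x,a)^{\alpha}H(y,b)^{1-\alpha}$; the whole point is that this is \emph{strictly} slack because of the Gaussian factor. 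If $H'(t)=H(x,a)^{\alpha}H(y,b)^{1-\alpha}$ then
\[
H(z,t)\;\ge\;e^{-t^{2}/2}\,h(x,a)^{\alpha}h(y,b)^{1-\alpha}\;=\;e^{\alpha(1-\alpha)(a-b)^{2}/2}\,H'(t),
\]
and for $t\ge s$ the constraint $H(x,a)>0$ forces $a\le s$, hence $(t-a)\ge(t-s)$. Integrating this multiplicative gain over a window $[s+\beta,\infty)$ with $\beta=\gamma/(8k)$, and choosing $\alpha\asymp\gamma/(k\log(2k/(g(x)\gamma)))$ so that the substitution $(t-\alpha a_{0})/(1-\alpha)$ maps this window past $s+\gamma/(4k)$, yields
\[
g(z)-\min(g(x),g(y))\;\gg\;g(x)\,\gamma^{4}\big/\bigl(k^{4}\log^{2}(2k/(\gamma g(x)))\bigr).
\]
If $g(x)\ge\gamma/3$ this is the claimed bound; otherwise $|g(x)-g(y)|\ge\gamma/3$ already. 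No CDF matching, no stratification, and the set $B$ never enters.
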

\begin{proof}
We let $z=\alpha x + (1-\alpha)y$ for some $\alpha$ to be chosen later. 
Because projections of log-concave functions are log-concave, 
$g(z)$ must be at least $g(x)^{\alpha}g(y)^{1-\alpha}\geq \min(g(x),g(y))$. 
Our basic plan will be to show that this cannot be tight.

Let $f_a(w)=f(a,w)$. We may assume without loss of generality 
that $\E[f_x(G)] \leq \E[f_y(z)]$. This means that $\Pr(f_y(G)=1, f_x(G)=0) \geq \gamma/2.$ 
Note that since $f_x$ is the indicator function of an intersection of $k$ LTFs, 
the set on which $f_x(w)=0$ is a union of $k$ LTFs. 
Therefore, there must be a halfspace $H$ on which $f_x$ is 0, 
and so that $\Pr(f_y(G)=1,G\in H) \geq \gamma/(2k)$. 
Let $H$ be the halfspace $u\cdot z \geq s$ for some unit vector $u$. 
Let $h(a,b)$ be the projection of $f_a$ onto the $u$-direction. 
Namely, $h(a,b)=\E[f_a(G)|u\cdot G = b]$. 
Note that $h$ is a $2$-variable log-concave function and that
$$
g(a) = \int_{-\infty}^\infty \frac{1}{\sqrt{2\pi}} e^{-t^2/2} h(a,t) dt \;.
$$
Also note that being a projection of $f$, 
we have that $h$ takes values in $[0,1]$. 
We also set $H(a,t) = \frac{1}{\sqrt{2\pi}} e^{-t^2/2}h(a,t)$.

Note also that $H(x,b)=0$ for $b\geq s$ and $\int_s^\infty H(y,t) \geq \gamma/(2k).$

Let $H'(t) = \sup_{\alpha a+(1-\alpha)b=t} H(x,a)^{\alpha}H(y,b)^{1-\alpha}$. 
Note by the log-concavity of $H$ that $H(z,t) \geq H'(t)$. 
Furthermore, by standard results we have that
$$
\int_{-\infty}^\infty H'(t) dt \geq \left(\int_{-\infty}^\infty H(x,t) dt \right)^\alpha\left(\int_{-\infty}^\infty H(y,t) dt \right)^{1-\alpha}= g(x)^{\alpha}g(y)^{1-\alpha} \;.
$$
Our goal will be to show that
$$
\int_{-\infty}^\infty H(z,t) dt \textrm{ is substantially larger than } \int_{-\infty}^\infty H'(t) dt \;.
$$
The basic idea of the proof is that if $H'(t) = H(x,a)^\alpha H(y,b)^{1-\alpha}$, for some $\alpha a+(1-\alpha) b=t$, we have that
\begin{align*}
H(z,t) & = \frac{1}{\sqrt{2\pi}} e^{-t^2/2} h(z,t)\\
& \geq \frac{1}{\sqrt{2\pi}} e^{-t^2/2} h(x,a)^{\alpha}h(y,b)^{1-\alpha} \\
& \geq e^{-t^2/2+(\alpha a^2+(1-\alpha) b^2)/2} H(x,a)^\alpha H(y,b)^{1-\alpha}\\
& = e^{\alpha(1-\alpha)(a-b)^2/2} H'(t)\\
& = e^{\alpha(t-a)^2/(2(1-\alpha)}H'(t) \;.
\end{align*}
This is particularly relevant when $t \geq s$, as $t-a \geq t-s$. 
In particular, for some parameter $\beta$ (to be chosen later), we have that
$$
\int_{-\infty}^\infty H(z,t) dt - \int_{-\infty}^\infty H'(t) dt \geq (\alpha\beta^2/2) \int_{s+\beta}^\infty H'(t)dt \;.
$$
Note that since $H(x,t)$ integrates to $g(x)$ 
and since it is bounded by the Gaussian pdf, 
we have that the integral of $H(x,t)$ for $|t| < 2\log(2/g(x))$ is at least $g(x)/2$. 
Therefore, there is some $a_0$ with $|a_0|\leq 2\log(2/g(x))$ 
so that $H(x,a_0) \geq g(x)/8\log(2/g(x)).$ Therefore, we have that
$$
H'(t) \gg H(x,a_0) H(y,(t-\alpha a_0)/(1-\alpha)) \geq g(x)/8\log(2/g(x))H(y,(t-\alpha a_0)/(1-\alpha)) \;.
$$
Note that $\int_s^{s+\gamma/(4k)} H(y,t)dt \leq \gamma/(4k).$ 
Therefore, $\int_{s+\gamma/(4k)}^\infty H(y,t)dt \geq \gamma/(4k).$ 
Choose $\alpha$ so that $\alpha a_0 + (1-\alpha)(s+\gamma/(4k)) = s+\gamma/(8k)$. 
In other words, $\alpha(s+\gamma/(4k)-a_0) = \gamma/(8k)$, 
so $\alpha \gg \gamma/(k\log(2k/(g(x)\gamma))).$ 
Furthermore, since $a_0 \leq s$, $\alpha \leq 1/2$. Let $\beta = \gamma/(8k)$. 
We have that
\begin{align*}
\int_{s+\beta}^\infty H'(t)dt & \geq \int_{s+\beta}^\infty g(x)/8\log(2/g(x))H(y,(t-\alpha a_0)/(1-\alpha)) \\ 
& \gg g(x)/\log(2/g(x)) \int_{s+\gamma/(4k)}^\infty H(y,t)dt \gg \gamma g(x)/(k\log(2/g(x))) \;.
\end{align*}
Therefore, we have that
$$
g(z)-\min(g(x),g(y)) \geq \int_{-\infty}^\infty H(z,t) dt - \int_{-\infty}^\infty H'(t) dt \gg g(x)\gamma^4/(k^4\log^2(2k/(\gamma g(x)))) \;.
$$
Now if $g(x) \geq \gamma/3$, we are done. 
Otherwise, we must have $g(y) \geq 2\gamma/3$, 
and we can already attain a difference of $\gamma/3$ 
between $g(x)$ and $g(y)$. This completes the proof.
\end{proof}

If we have that $\E[|f(x,G)-f(x',G)|] > \eta$, 
then there must be some $x$ and $y$ 
not in the $\eta/4$-tails of the Gaussian distribution 
so that $\|f(x,w)-f(y,w)\|_1 \geq \eta/4$ and $|x-y|\gg \eta$. 
The above lemma implies that there is some (potentially different) 
pair $x$ and $y$ not in the $\eta/3$-tails of the distribution 
so that $|g(x)-g(y)| \gg \eta^5 k^{-4} \log^{-2}(2k/\eta).$ 
We claim that this is enough to find a polynomial $p$.

\begin{lemma}
Suppose that $g:\R\rightarrow [0,1]$ is a log-concave function and $\alpha>0$. 
Suppose that there exist $x, y$ not in the $\alpha$-tails of a Gaussian distribution 
with $|x-y|>\alpha$ so that $|g(x)-g(y)|>\beta$. 
Then there exists a degree-$2$ polynomial $p$ 
with $\E[p(G)]=0$ and $\E[p^2(G)]=1$ so that $|\E[g(G)p(G)]| \gg \beta\alpha^6$.
\end{lemma}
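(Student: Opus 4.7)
The plan is to reduce the lemma to a lower bound on the raw first and second Chow parameters of $g$. Parameterizing mean-zero, unit-variance degree-$\leq 2$ polynomials by the unit circle in the orthonormal basis $\{H_1(t)=t,\,H_2(t)=(t^2-1)/\sqrt 2\}$, the maximum of $|\E[g(G)p(G)]|$ over such $p$ equals $\sqrt{a^2+b^2/2}$, where $a=\E[g(G)G]$ and $b=\E[g(G)(G^2-1)]$. An appropriate normalization of $aH_1+(b/\sqrt 2)H_2$ then realizes the required polynomial, so it suffices to show $\sqrt{a^2+b^2/2}\gtrsim\beta\alpha^6$. Applying Stein's identity (integration by parts against the Gaussian density $\phi$, using the boundedness of $g$ to justify the boundary terms) rewrites these as $a=\int g'(t)\phi(t)\,dt$ and $b=\int tg'(t)\phi(t)\,dt$, where $g'$ is the a.e.\ derivative of the absolutely continuous log-concave function $g$.

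Without loss of generality, $g(x_0)-g(y_0)\geq\beta$ with $x_0<y_0$ both in $[-K,K]$ for $K=\Theta(\sqrt{\log(1/\alpha)})$. By log-concavity, $g$ is unimodal with some mode $m$; split $g'=g'_+-g'_-$ into non-negative parts supported on $(-\infty,m]$ and $[m,\infty)$ respectively. A short preliminary case analysis on where $m$ sits relative to $[x_0,y_0]$ (keeping whichever half has the larger drop when $m\in(x_0,y_0)$, and extending the monotone part of $g$ past the endpoint via log-concave exponential-decay estimates when the mode lies too close to an endpoint) produces a closed interval $[u,v]$ contained in $[-K-O(1),K+O(1)]$ with $u\geq m$, $v-u\geq\alpha/8$, $g$ non-increasing on $[u,v]$, and $g(u)-g(v)\geq\beta/3$. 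Setting $P_\pm:=\int g'_\pm\phi$, the contribution from $[u,v]$ gives $P_-\geq(\beta/3)\,\phi(K+O(1))\gtrsim\beta\alpha$.

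If $P_+\leq P_-/2$ then $|a|=|P_+-P_-|\gtrsim\beta\alpha$, already much larger than $\beta\alpha^6$, and we are done. Otherwise, I would pivot to $b$ via the identity
\[
b-ma \;=\; -\int_{-\infty}^m(m-t)g'_+(t)\phi(t)\,dt \;-\; \int_m^\infty(t-m)g'_-(t)\phi(t)\,dt,
\]
which is a sum of two non-positive terms. A case split on whether the drop of $g$ on $[u,v]$ is concentrated on the half of $[u,v]$ far from $m$ (lever arm $(t-m)\gtrsim\alpha$) or on the half near $m$ (in which case a further log-concave extension past $v$, using that the chord-slope of $\log g$ on the initial steep portion dominates all later slopes, produces a new interval at distance $\gtrsim\alpha/16$ from $m$ on which $g$ still drops by a constant fraction of $\beta$) yields $|b-ma|\gtrsim\beta\alpha^2$. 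Combined with $|m|\leq K+O(1)=O(\sqrt{\log(1/\alpha)})$, this delivers $\max(|a|,|b|)\gtrsim\beta\alpha^2/\sqrt{\log(1/\alpha)}$, which is comfortably $\gg\beta\alpha^6$. The hard part is precisely this balanced case where the positive and negative parts of $g'$ cancel in both $a$ and $b$ simultaneously; the generous $\alpha^6$ slack in the target absorbs the compounded polynomial-in-$\alpha$ losses from $\phi(K)\sim\alpha$ (restricting to the bulk), the lever-arm factor $\alpha$ in the $b$-identity, and the $\alpha$-factors lost when translating a local drop into a drop far from the mode via log-concave decay.
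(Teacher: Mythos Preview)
Your approach is genuinely different from the paper's. The paper does not attempt to control the Hermite coefficients $a=\E[g(G)G]$ and $b=\E[g(G)(G^2-1)]$ separately. Instead it chooses the polynomial $p$ adapted to a level set of $g$: it finds a threshold $t$ and an interval $I=[-1/a,a]$ (product of endpoints equal to $-1$, so the monic quadratic vanishing on $\partial I$ is mean-zero) with $g\geq t$ on $I$ and $g\leq t$ off $I$, and takes $p$ to be the normalized mean-zero quadratic with those roots. Then $(g-t)p$ has a fixed sign everywhere, so $|\E[g(G)p(G)]|=|\E[(g(G)-t)p(G)]|$ with \emph{no cancellation} to manage. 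The quantitative bound then reduces to exhibiting a sub-interval $J$ of Gaussian mass $\Omega(\alpha^2)$ on which $|g-t|\gtrsim\beta$, together with Carbery--Wright anti-concentration for $p$ on $J$. Your route, by contrast, must rule out simultaneous cancellation in $a$ and in $b$, which is exactly where the difficulty lies.

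Your Case~B argument has a real gap. When the drop on $[u,v]$ is concentrated on the half $[u,w]$ adjacent to the mode, you propose to ``extend past $v$'' using that the chord-slope of $\log g$ on $[u,w]$ dominates later slopes. But if the drop on $[u,w]$ already sends $g$ to $0$ (or exponentially close to $0$), there is nothing to extend: beyond the first zero of $g$ we have $\log g=-\infty$ and $g\equiv 0$, so no new interval with a $\Theta(\beta)$ drop at distance $\gtrsim\alpha$ from $m$ can be produced. Concretely, take $g=\mathbb{1}_{[-\epsilon,\epsilon]}$ with $\epsilon\ll\alpha$. Then $m=0$, $P_+=P_-$ (so you are forced into the balanced case), and both integrals in your identity for $b-ma$ are $O(\epsilon)$; directly, $a=0$ by symmetry and $|b|\asymp\epsilon$, so $\sup_p|\E[g(G)p(G)]|\asymp\epsilon$, which is arbitrarily small relative to $\beta\alpha^6$ (e.g.\ $\alpha=1/10$, $\beta=1/2$, $x=0$, $y=1$). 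This example in fact shows the lemma is false exactly as stated, and the paper's own effective step has a matching gap (the claim ``$g\geq g(y)+\Omega(\beta)$ on $I_x$'' fails when $g(y)=0$). The paper's sign-alignment construction is still the right starting point; the quantitative conclusion goes through under a mild extra hypothesis present in the application, such as a lower bound on $\E[g(G)]$, but not from the bare hypotheses you (and the paper) are working with.
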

\begin{proof}
First, note that polynomials $p$ with expectation $0$ are linear combinations of $x^2-1$ and $x$. 
Therefore, their quadratic term and their unit term are negatives of each other, 
and therefore the product of their roots is $-1$. Let $t$ be the smallest number so that $g^{-1}((t,1])$ 
is contained in an interval where the product of the endpoints is at least $-1$. 
There exists an interval $I=[-1/a,a]$ so that $g$ is at least $t$ on the interior of $I$ 
and at most $t$ outside of $I$. We let $p$ be the unique degree-$2$ polynomial 
with $\E[p(G)]=0$ and $\E[p^2(G)]=1$, 
so that $p$ has roots $a$ and $-1/a$ and positive leading term.

It is clear that $\E[g(G)p(G)] >0$ since it is $\E[(g(G)-t)p(G)]$ and $(g(x)-t)(p(x))$ 
is everywhere non-negative. It only remains make this claim effective.

First, we proceed by improving the separation between $x$ and $y$. 
Without loss of generality, assume that $g(x) > g(y)$ and $x>y$. 
Let $I_x = [(x+y)/2,x].$ By log-concavity, we have that $g$ is at least 
$g(y)+\Omega(\beta)$ on $I_x$. Let $I_y = [y-\alpha,y]$. 
We have that $g$ is at most $g(y)$ on $I_y$. 
Furthermore, note that the Gaussian mass of each of $I_x$ and $I_y$ is at least $\Omega(\alpha^2)$.

We note that for one of the two intervals $I_x$ or $I_y$ 
the values taken by $g$ on this interval are always at least $\Omega(\beta)$ far away from $t$. 
Therefore, there exists an interval $J$ with Gaussian mass at least $\Omega(\alpha^2)$ 
so that $|g(x)-t|=\Omega(\beta)$ on $J$. It will now suffice to bound from below 
the expectation of $(g(G)-t)p(G)\mathbb{1}_J(G)$. 
This is at least $\Omega(\beta)\E[|p(G)|\mathbb{1}_J(G)].$ 
However, since $p$ is a normalized degree-$2$ polynomial, 
by standard anti-concentration bounds~\cite{CW:01} we have
that the probability that $|p(G)| < c \alpha^4$ 
is less than half the Gaussian mass of $J$, 
when $c$ is sufficiently small. 
Therefore, this expectation is at least $\Omega(\beta\alpha^6)$. 
This completes our proof.
\end{proof}

Applying this lemma, immediately gives a polynomial $p$ 
with $\E[p(G)]=0$ and $\E[p^2(G)]=1$, 
so that $|\E[g(G)p(G)]| \gg \eta^11 k^{-4} \log^{-2}(2k/\eta)$. 
Letting $q$ be the multivariate polynomial defined by 
$q(x)=\pm p(v\cdot x)$, we find that $q$ is a mean $0$, 
variance $1$ polynomial with $\E[f(G)q(G)] \gg\eta^11 k^{-4} \log^{-2}(2k/\eta)$. 
Thus, if $\E[|f(G)-f(G')|] > \eta$, there is a $p$ with $\E[f(G)p(G)] \gg\eta^11 k^{-4} \log^{-2}(2k/\eta)$. 
Equivalently, if there is no such polynomial $p$ with $\E[f(G)p(G)] \geq \delta$, 
it must be the case that $\E[|f(G)-f(G')|] = O(\delta^{1/11}k^{4/11}\log^{2/11}(k/\delta))$, as desired. 
\qed

Improving on this, we obtain the following corollary:

\begin{corollary}
Let $f:\R^n\rightarrow \{0, 1\}$ be the indicator function of an intersection of $k$ LTFs. 
Suppose that there exists a vector space $V$ so that for all vectors $v\perp V$ 
and for $p$ any degree at most $2$ polynomial with $\E[p(G)]=0$ and $\E[p^2(G)]=1$ 
we have that $|\E[f(G)p(v\cdot G)]|<\delta$. Then, there exists a function $g:\R^n\rightarrow \{0, 1\}$, 
also the indicator function of the intersection of at most $k$ LTFs, 
so that, for all $x$, $g(x)=g(\pi_V(x))$, 
and so that $\|f-g\|_1 = O(\delta^{1/11}k^{15/11}\log^{2/11}(k/\delta)).$
\end{corollary}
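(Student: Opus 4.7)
The plan is to reduce the claim to iterated applications of Proposition~\ref{halfspaceStrcutureProp} along an orthonormal basis of $V^\perp$, exploiting that although $V^\perp$ may be high-dimensional, $f$ can genuinely depend on at most $k$ of its directions. Write $f(x) = \bigwedge_{i=1}^{k}\mathbf{1}[w_i \cdot x \geq \theta_i]$, let $W := \mathrm{span}(w_1,\ldots,w_k)$, and set $U := \pi_{V^\perp}(W) \subseteq V^\perp$, so $\ell := \dim U \leq k$. Any $u \in V^\perp \cap U^\perp$ satisfies $u\cdot w_i = u\cdot \pi_V(w_i) + u\cdot \pi_{V^\perp}(w_i) = 0$ for every $i$, hence $f$ is exactly invariant under translations in such a direction $u$. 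Choose an orthonormal basis $u_1,\ldots,u_\ell$ of $U$ and extend it to an orthonormal basis of $V^\perp$ via vectors in $V^\perp \cap U^\perp$, which leave $f$ invariant.

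Next, I apply Proposition~\ref{halfspaceStrcutureProp} separately to each $u_j \in U$. Since $u_j \perp V$, the hypothesis of the corollary supplies $|\E[f(G)p(u_j\cdot G)]| < \delta$ for every normalized mean-zero quadratic $p$, so the proposition yields $\E[|f(G')-f(G'')|] \leq \delta' := O(\delta^{1/11}k^{4/11}\log^{2/11}(k/\delta))$ for any pair of standard Gaussians $(G',G'')$ that agrees orthogonal to $u_j$ and is independent in the $u_j$-direction. Form a hybrid sequence $G = G_0, G_1, \ldots, G_\ell$ in which $G_j$ is obtained from $G_{j-1}$ by independently resampling its $u_j$-component. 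Each $G_j$ is itself a standard Gaussian, so the proposition applies at every step, and telescoping via the triangle inequality yields $\E[|f(G)-f(G_\ell)|] \leq \ell\delta' \leq k\delta'$. Because $f$ is exactly invariant in the remaining directions $V^\perp \cap U^\perp$, we may freely resample those components too, so with $G'$ denoting a standard Gaussian having $\pi_V(G')=\pi_V(G)$ and $\pi_{V^\perp}(G')$ an independent copy, we obtain $\E[|f(G)-f(G')|] \leq k\delta' = O(\delta^{1/11}k^{15/11}\log^{2/11}(k/\delta))$.

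Finally, to produce $g$, I define the conditional majority $\tilde g: V \to \{0,1\}$ by $\tilde g(z) := \mathbf{1}[\Pr_Y[f(z+Y)=1] \geq 1/2]$ with $Y \sim N(0,I_{V^\perp})$, and set $g(x) := \tilde g(\pi_V(x))$. Conditioning on $\pi_V(G)$ and using the elementary inequality $\min(p,1-p)\leq 2p(1-p)$ converts the previous bound into $\|f-g\|_1 \leq \E[|f(G)-f(G')|] \leq O(\delta^{1/11}k^{15/11}\log^{2/11}(k/\delta))$, matching the quantitative claim. The main obstacle is then verifying that this $g$ is the indicator of an intersection of at most $k$ LTFs on $V$: observe that $\tilde g$ depends on $z$ only through the $k$ linear functionals $\pi_V(w_i)\cdot z$, and its $1$-set is the preimage under the affine map $z \mapsto (\theta_i - \pi_V(w_i)\cdot z)_{i=1}^{k}$ of the $1/2$-superlevel set of the Gaussian orthant probability $a \mapsto \Pr[\xi \geq a]$ for the correlated Gaussian $\xi = (\pi_{V^\perp}(w_i)\cdot Y)_{i=1}^{k}$, which is convex by Pr\'ekopa--Leindler. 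The remaining work is to show that this convex set equals, or can be replaced at additional cost $O(k\delta')$ by, an intersection of at most $k$ halfspaces on $V$; my planned route is to instead take $g(x) := \bigwedge_{i=1}^k \mathbf{1}[\pi_V(w_i)\cdot x \geq s_i]$ for thresholds $s_i$ chosen (e.g.\ as appropriate quantiles of $w_i\cdot G$ conditioned on $f(G)=1$), and bound $\Pr[f(G)\neq g(G)]$ by a union bound over the $k$ halfspaces combined with the one-dimensional consequences of the hypothesis along each $u_j$.
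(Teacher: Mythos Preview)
Your telescoping argument (reducing to at most $k$ directions in $V^\perp$ and applying Proposition~\ref{halfspaceStrcutureProp} along each) is correct and matches the paper's approach essentially line for line. The gap is in the final step, where you construct $g$.

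Your conditional-majority function $\tilde g(z)=\mathbf{1}[\Pr_Y[f(z+Y)=1]\geq 1/2]$ does have the right $L_1$ bound, and you are right that its $1$-set is convex by log-concavity. But a convex set is in general an intersection of \emph{infinitely} many halfspaces, not $k$ of them, so this does not give the structural conclusion. Your proposed workaround---choosing thresholds $s_i$ by quantiles and bounding the discrepancy by a union bound over the $k$ halfspaces---is not fleshed out, and it is unclear how the ``one-dimensional consequences along each $u_j$'' would control the error for each individual halfspace; the hypothesis bounds correlations of $f$ as a whole, not of its constituent halfspaces.

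The paper avoids this entirely with a one-line averaging argument. Having established
\[
\E_{G,G'}\bigl[|f(\pi_V(G),\pi_{V^\perp}(G)) - f(\pi_V(G),\pi_{V^\perp}(G'))|\bigr]\leq k\delta',
\]
one simply observes that there must exist a \emph{fixed} value $X\in V^\perp$ (a good realization of $\pi_{V^\perp}(G')$) for which
\[
\E_G\bigl[|f(G)-f(\pi_V(G)+X)|\bigr]\leq k\delta'.
\]
Setting $g(x):=f(\pi_V(x)+X)$ then gives a function that depends only on $\pi_V(x)$, has the required $L_1$ distance, and is manifestly an intersection of at most $k$ LTFs: indeed $g(x)=\bigwedge_i \mathbf{1}[\pi_V(w_i)\cdot x \geq \theta_i - w_i\cdot X]$. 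This replaces your entire final paragraph and requires no further structural analysis.
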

\begin{proof}
Let $W$ be the span of the vectors defining the LTFs defining $f$. 
Note that we already have that $f(x)=f(\pi_{V\oplus W}(x))$, 
therefore, we lose nothing by restricting our problem to $V\oplus W$. 
Thus, we may assume that $n\leq \dim(V)+k$. Without loss of generality, 
we may assume that $V$ is the span of the first $m$ coordinates. 
Letting $g_1,\ldots,g_n,g_1',\ldots,g_n'$ be independent, one-variable Gaussians, 
we have by our proposition that
\begin{align*}
& \E[|f(g_1,\ldots,g_m,g_{m+1},\ldots,g_n)- f(g_1,\ldots,g_m,g_{m+1}',\ldots,g_n')|]\\
= & \sum_{i=m+1}^n \E[|f(g_1,\ldots g_i,g_{i+1}',\ldots, g_n')- f(g_1,\ldots g_{i-1},g_{i}',\ldots, g_n')|]\\
= & O(\delta^{1/11}k^{15/11}\log^{2/11}(k/\delta)) \;.
\end{align*}
Therefore, writing $f(x)=f(x_V,x_W)$, 
where $x_V$ is the first $m$ coordinates and $x_W$ the remaining coordinates, we have that
$$
\E[|f(G,G_1)-f(G,G_2)|] = O(\delta^{1/11}k^{15/11}\log^{2/11}(k/\delta)) \;.
$$
This implies that there should be a fixed value of $G_2 = X$ 
so that the expectation over the remaining variables is
$$
\E[|f(G)-f(\pi_V(G),X)|]=O(\delta^{1/11}k^{15/11}\log^{2/11}(k/\delta)) \;.
$$
Taking $g(x)=f(\pi_V(x),X)$, yields our result. 
\end{proof}

\bibliographystyle{alpha}

\bibliography{allrefs}

\newcommand{\etalchar}[1]{$^{#1}$}
\begin{thebibliography}{DKK{\etalchar{+}}17b}

\bibitem[ABL17]{ABL17}
P.~Awasthi, M.~F. Balcan, and P.~M. Long.
\newblock The power of localization for efficiently learning linear separators
  with noise.
\newblock {\em J. {ACM}}, 63(6):50:1--50:27, 2017.

\bibitem[Bau91]{Baum:91}
E.~Baum.
\newblock A polynomial time algorithm that learns two hidden unit nets.
\newblock {\em Neural Computation}, 2:510--522, 1991.

\bibitem[BEK02]{BEK:02}
N.~Bshouty, N.~Eiron, and E.~Kushilevitz.
\newblock {PAC Learning with Nasty Noise}.
\newblock {\em Theoretical Computer Science}, 288(2):255--275, 2002.

\bibitem[Bru90]{Bruck:90}
J.~Bruck.
\newblock Harmonic analysis of polynomial threshold functions.
\newblock {\em SIAM Journal on Discrete Mathematics}, 3(2):168--177, 1990.

\bibitem[Cho61]{Chow:61short}
C.K. Chow.
\newblock {On the characterization of threshold functions}.
\newblock In {\em Proc. 2nd FOCS}, pages 34--38, 1961.

\bibitem[CW01]{CW:01}
A.~Carbery and J.~Wright.
\newblock {Distributional and $L^q$ norm inequalities for polynomials over
  convex bodies in $R^n$}.
\newblock {\em Mathematical Research Letters}, 8(3):233--248, 2001.

\bibitem[Dan15]{Daniely15}
A.~Daniely.
\newblock A {PTAS} for agnostically learning halfspaces.
\newblock In {\em Proceedings of The 28th Conference on Learning Theory, {COLT}
  2015}, pages 484--502, 2015.

\bibitem[Dan16]{Daniely16}
A.~Daniely.
\newblock Complexity theoretic limitations on learning halfspaces.
\newblock In {\em Proceedings of the 48th Annual Symposium on Theory of
  Computing, {STOC} 2016}, pages 105--117, 2016.

\bibitem[DDFS14]{DeDFS14}
A.~De, I.~Diakonikolas, V.~Feldman, and R.~A. Servedio.
\newblock Nearly optimal solutions for the chow parameters problem and
  low-weight approximation of halfspaces.
\newblock {\em J. {ACM}}, 61(2):11:1--11:36, 2014.

\bibitem[DDS12a]{DDS12stoc}
C.~Daskalakis, I.~Diakonikolas, and R.A. Servedio.
\newblock {Learning Poisson Binomial Distributions}.
\newblock In {\em STOC}, pages 709--728, 2012.

\bibitem[DDS12b]{DDS12icalp}
A.~De, I.~Diakonikolas, and R.~A. Servedio.
\newblock The inverse shapley value problem.
\newblock In {\em ICALP (1)}, pages 266--277, 2012.

\bibitem[DDS15]{DDS15}
A.~De, I.~Diakonikolas, and R.~Servedio.
\newblock Learning from satisfying assignments.
\newblock In {\em Proceedings of the 26th Annual ACM-SIAM Symposium on Discrete
  Algorithms, {SODA} 2015}, pages 478--497, 2015.

\bibitem[Der65]{Dertouzos:65}
M.~Dertouzos.
\newblock {\em {Threshold Logic: A Synthesis Approach}}.
\newblock MIT Press, Cambridge, MA, 1965.

\bibitem[DHK{\etalchar{+}}10]{DHK+:10}
I.~Diakonikolas, P.~Harsha, A.~Klivans, R.~Meka, P.~Raghavendra, R.~A.
  Servedio, and L.~Y. Tan.
\newblock Bounding the average sensitivity and noise sensitivity of polynomial
  threshold functions.
\newblock In {\em STOC}, pages 533--542, 2010.

\bibitem[DKK{\etalchar{+}}16]{DKKLMS16}
I.~Diakonikolas, G.~Kamath, D.~M. Kane, J.~Li, A.~Moitra, and A.~Stewart.
\newblock Robust estimators in high dimensions without the computational
  intractability.
\newblock In {\em Proceedings of FOCS'16}, pages 655--664, 2016.

\bibitem[DKK{\etalchar{+}}17a]{DiakonikolasKKL16-icml}
I.~Diakonikolas, G.~Kamath, D.~M. Kane, J.~Li, A.~Moitra, and A.~Stewart.
\newblock Being robust (in high dimensions) can be practical.
\newblock {\em CoRR}, abs/1703.00893, 2017.

\bibitem[DKK{\etalchar{+}}17b]{DKKLMS17}
I.~Diakonikolas, G.~Kamath, D.~M. Kane, J.~Li, A.~Moitra, and A.~Stewart.
\newblock Robustly learning a gaussian: Getting optimal error, efficiently.
\newblock {\em CoRR}, abs/1704.03866, 2017.

\bibitem[DKS16]{DiakonikolasKS16c}
I.~Diakonikolas, D.~M. Kane, and A.~Stewart.
\newblock Statistical query lower bounds for robust estimation of
  high-dimensional gaussians and gaussian mixtures.
\newblock {\em CoRR}, abs/1611.03473, 2016.

\bibitem[DL01]{DL:01}
L.~Devroye and G.~Lugosi.
\newblock {\em Combinatorial methods in density estimation}.
\newblock Springer Series in Statistics, Springer, 2001.

\bibitem[DLS14]{DanielyLS14}
A.~Daniely, N.~Linial, and S.~S.{-}Shwartz.
\newblock From average case complexity to improper learning complexity.
\newblock In {\em Symposium on Theory of Computing, {STOC} 2014}, pages
  441--448, 2014.

\bibitem[DRST14]{DRST14}
I.~Diakonikolas, P.~Raghavendra, R.~A. Servedio, and L.~Y. Tan.
\newblock Average sensitivity and noise sensitivity of polynomial threshold
  functions.
\newblock {\em {SIAM} J. Comput.}, 43(1):231--253, 2014.

\bibitem[Hau92]{Haussler:92}
D.~Haussler.
\newblock {Decision theoretic generalizations of the PAC model for neural net
  and other learning applications}.
\newblock {\em Information and Computation}, 100:78--150, 1992.

\bibitem[HKM14]{HKM14}
P.~Harsha, A.~R. Klivans, and R.~Meka.
\newblock Bounding the sensitivity of polynomial threshold functions.
\newblock {\em Theory of Computing}, 10:1--26, 2014.

\bibitem[Kan11]{Kane11}
D.~M. Kane.
\newblock The gaussian surface area and noise sensitivity of degree-\emph{d}
  polynomial threshold functions.
\newblock {\em Computational Complexity}, 20(2):389--412, 2011.

\bibitem[Kan14a]{Kane14}
D.~M. Kane.
\newblock The average sensitivity of an intersection of half spaces.
\newblock In {\em Symposium on Theory of Computing, {STOC} 2014}, pages
  437--440, 2014.

\bibitem[Kan14b]{Kane14b}
D.~M. Kane.
\newblock The correct exponent for the gotsman-linial conjecture.
\newblock {\em Computational Complexity}, 23(2):151--175, 2014.

\bibitem[KKMS08]{KKMS:08}
A.~Kalai, A.~Klivans, Y.~Mansour, and R.~Servedio.
\newblock Agnostically learning halfspaces.
\newblock {\em SIAM Journal on Computing}, 37(6):1777--1805, 2008.

\bibitem[KL93]{KearnsLi:93}
M.~Kearns and M.~Li.
\newblock Learning in the presence of malicious errors.
\newblock {\em SIAM Journal on Computing}, 22(4):807--837, 1993.

\bibitem[KLS09]{KLS:09jmlr}
A.~Klivans, P.~Long, and R.~Servedio.
\newblock {Learning Halfspaces with Malicious Noise}.
\newblock {\em Journal of Machine Learning Research}, 10:2715--2740, 2009.

\bibitem[KLT09]{KlivansLT09}
A.~R. Klivans, P.~M. Long, and A.~K. Tang.
\newblock Baum's algorithm learns intersections of halfspaces with respect to
  log-concave distributions.
\newblock In {\em 13th International Workshop, {RANDOM} 2009}, pages 588--600,
  2009.

\bibitem[KOS08]{KOS:08}
A.~Klivans, R.~O'Donnell, and R.~Servedio.
\newblock Learning geometric concepts via {G}aussian surface area.
\newblock In {\em Proc.\ 49th IEEE Symposium on Foundations of Computer Science
  (FOCS)}, pages 541--550, 2008.

\bibitem[KSS94]{KSS:94}
M.~Kearns, R.~Schapire, and L.~Sellie.
\newblock {Toward Efficient Agnostic Learning}.
\newblock {\em Machine Learning}, 17(2/3):115--141, 1994.

\bibitem[MP68]{MinskyPapert:68}
M.~Minsky and S.~Papert.
\newblock {\em {P}erceptrons: an introduction to computational geometry}.
\newblock MIT Press, Cambridge, MA, 1968.

\bibitem[MT94]{MT:94}
W.~Maass and G.~Turan.
\newblock How fast can a threshold gate learn?
\newblock In S.~Hanson, G.~Drastal, and R.~Rivest, editors, {\em Computational
  Learning Theory and Natural Learning Systems}, pages 381--414. MIT Press,
  1994.

\bibitem[Mur71]{Muroga:71}
S.~Muroga.
\newblock {\em Threshold logic and its applications}.
\newblock Wiley-Interscience, New York, 1971.

\bibitem[TTV08]{TTV:09}
L.~Trevisan, M.~Tulsiani, and S.~Vadhan.
\newblock {Regularity, Boosting and Efficiently Simulating every High Entropy
  Distribution }.
\newblock Technical Report 103, Electronic Colloquium in Computational
  Complexity, 2008.
\newblock {Conference version in Proceedings of CCC '09}.

\bibitem[Val84]{val84}
L.~G. Valiant.
\newblock A theory of the learnable.
\newblock In {\em Proc.\ 16th Annual ACM Symposium on Theory of Computing
  (STOC)}, pages 436--445. ACM Press, 1984.

\bibitem[Val85]{Valiant:85}
L.~Valiant.
\newblock Learning disjunctions of conjunctions.
\newblock In {\em Proceedings of the Ninth International Joint Conference on
  Artificial Intelligence}, pages 560--566, 1985.

\bibitem[Vem10a]{Vempala10a}
S.~Vempala.
\newblock Learning convex concepts from gaussian distributions with {PCA}.
\newblock In {\em 51th Annual {IEEE} Symposium on Foundations of Computer
  Science, {FOCS}}, pages 124--130, 2010.

\bibitem[Vem10b]{Vempala10}
S.~Vempala.
\newblock A random-sampling-based algorithm for learning intersections of
  halfspaces.
\newblock {\em J. {ACM}}, 57(6):32:1--32:14, 2010.

\end{thebibliography}

\end{document}